\documentclass[journal]{IEEEtran}
\usepackage{cite}
\usepackage{amsmath,amssymb,amsfonts,amsthm}
\usepackage{graphicx}
\graphicspath{ {./figures/} }
\usepackage{algorithmic}
\usepackage{algorithm}
\usepackage{hyperref}
\hypersetup{hidelinks}
\usepackage{xr}
\usepackage{textcomp}
\usepackage{subcaption}
\usepackage{ragged2e}
\usepackage{float}
\usepackage{caption}

\newcommand{\QED}{\ensuremath{\square}}
\renewenvironment{proof}[1][Proof.]
{\par\noindent{\itshape #1 }}
{\hspace*{\fill}~\QED\par}

\def\BibTeX{{\rm B\kern-.05em{\sc i\kern-.025em b}\kern-.08em
     T\kern-.1667em\lower.7ex\hbox{E}\kern-.125emX}}

\begin{document}
\newtheorem{theorem}{Theorem}
\newtheorem{remark}{Remark}
\newtheorem{lemma}{Lemma}
\newtheorem{conjecture}{Conjecture}

\title{Sparse Identification for Automatic Large-Scale Screening: A Constraint-Aware Framework with Ultra Fast Decoding Algorithm}
\author{Jianing Li, Li Chai,~\IEEEmembership{Senior Member,~IEEE}, Yingcheng Lai
\thanks{This work was supported by the National Natural Science Foundation of China under grant 62550085.}
\thanks{Jianing Li, Li Chai, and Yingcheng Lai are with the College of Control Science and Engineering, Zhejiang University, Hangzhou 310027, China (e-mail: lijn202409@zju.edu.cn; chaili@zju.edu.cn; laiyingcheng@zju.edu.cn). Corresponding author: Li Chai.}
}

\maketitle
\begin{abstract}
In the early stages of a pandemic, identification of a small number of infected individuals through large-scale screening is critical for pandemic control, yet remains challenging under limited reagents and testing capacity. Existing group testing methods suffer from either high computational complexity or low identification accuracy. Even worse, no available methods provide theoretically rigorous analysis for sparse identification with hard constraints caused by the sample usage constraint and the dilution effect existing ubiquitously in practical applications. In this article, we propose the Logic Screening method (LoSc), an ultra fast, accurate, and theoretically grounded framework for large-scale screening. LoSc introduces a novel decoding algorithm with a very simple selection strategy, achieving identification of all positives with only $\mathcal{O}\mathbf{\left(k\log n\right)}$ pooled tests. The decoding relies only on logical operations, enabling direct hardware implementation and yielding ultra fast computational implementation. Moreover, LoSc explicitly incorporates dilution and sample usage constraints into pooling designs, and establishes theoretical guarantees to guide optimal pooling configurations. Extensive simulations confirm the superior effectiveness, efficiency, and scalability. We believe LoSc offers a fast and reliable solution for automatic large-scale screening.
\end{abstract}

\begin{IEEEkeywords}
Large-scale screening, group testing, dilution effect, sample usage constraint, sparse identification
\end{IEEEkeywords}

\section{Introduction}
\label{sec:introduction}
Over the past two decades, outbreaks of respiratory viral infections have posed serious global health challenges \cite{zhou2020pneumonia}, leading to two large-scale pandemics, SARS and Middle East respiratory syndrome (MERS), and ultimately culminated in the COVID-19 pandemic, which has caused more than 700 million cases and over 7 million deaths worldwide \cite{Xinhui2020,WHO2023,Xiaoying2023}. To control the COVID-19 pandemic, governments and health agencies around the world used a variety of mitigation measures to curb viral transmission, including diagnostic testing, contact tracing and lockdowns. Among these, diagnostic testing on a massive scale played an important role in the identification of infected individuals. To date, over $1.5\times 10^{10}$ PCR tests have been performed worldwide \cite{hasell2020cross}. However, achieving sufficient testing coverage across populations remains a significant challenge \cite{cheng2020diagnostic}, largely due to disruptions in global supply chains for testing reagents and supplies, as well as limitations in testing capacity and financial resources \cite{Zichao2020,cleary2021using}. Although WHO has declared the end to COVID-19 as a public health emergency, the risk of new variants or entirely novel pathogens remains. WHO Director-General Dr.Tedros Adhanom Ghebreyesus emphasized the importance of comprehensive preparedness and response to address health emergencies at the 76th World Health Assembly \cite{world2023director}.

Past outbreaks and emerging threats highlight the urgent need for more efficient, scalable, and resilient diagnostic strategies to ensure both individual-level diagnosis and population-wide surveillance. Large-scale screening is very important in the early stages of a pandemic, when timely and accurate identification of infected individuals can prevent widespread transmission. However, when disease prevalence is extremely low, with only a few infections among thousands of individuals (i.e., $k\ll n$), identifying all positives with the fewest pooled tests becomes particularly challenging, especially under reagent shortages and limited testing capacity.

A wide range of pooling methods \cite{dorfman1943detection,westreich2008optimizing,smith2009use,bilder2020tests,da2022simulation,xia2022expectation} have been proposed to enhance diagnostic testing capabilities. Early studies, inspired by the classical Dorfman scheme \cite{dorfman1943detection}, developed hierarchical and array-based pooling designs. Westreich et al. \cite{westreich2008optimizing} evaluated two-stage minipools, three-stage hierarchical pools, and square arrays for acute HIV detection, reducing test usage by up to ten-fold at low prevalence. Similar designs were later applied to HIV treatment monitoring \cite{smith2009use}, using $10\times 10$ matrix and $5$-sample minipool methods on 155 samples, achieving a $50\%$ reduction in assays, but larger pools increased detection thresholds and turnaround time. During the COVID-19 pandemic, Dorfman testing implemented in a public health laboratory \cite{bilder2020tests} reduced test usage by $57\%$ with a pool size of 5 at $5\%$ prevalence, yet its efficiency degraded rapidly as prevalence increased. Multi-stage hierarchical designs could further improve detection power \cite{da2022simulation}, at the expense of increased logistical complexity and longer processing time. More recently, Xia et al. \cite{xia2022expectation} proposed XMAGT, an expectation maximization based adaptive pooling strategy that estimates prevalence during screening and dynamically selects pool sizes to maximize testing efficiency.

Subsequent research \cite{yi2020low, petersen2020practical, shental2020efficient,zismanov2024high,taufer2020rapid,ghosh2021compressed} focused on single-stage pooling methods based on compressed sensing (CS) and combinatorial group testing. CS-based methods exploit signal sparsity to estimate viral loads using only $\mathcal{O}(k\log n)$ measurements \cite{4472240}. The pooling process is modeled by a binary matrix $\Phi\in\{0,1\}^{m\times n}$, where $m$ denotes the number of pools. Early works \cite{yi2020low, petersen2020practical}
used measured viral loads of pools and convex optimization for recovery. Yi et al. \cite{yi2020low} adopted bipartite expander graphs as pooling matrices, and applied Basis Pursuit Denoising (BPDN) technique along with the brute-force search method for reconstruction, while Petersen et al. \cite{petersen2020practical} used Euler square matrices and applied the non-negative least absolute deviation regression (NNLAD) algorithm. Shental et al. \cite{shental2020efficient} proposed P-BEST, a non-adaptive pooling design based on the Reed-Solomon error correcting code,
which used only binary test outcomes and the CS-based decoding algorithm. P-BEST achieved an eight-fold reduction in tests at $1.3\%$ prevalence and was later validated on over 830,000 samples, requiring only 270,000 tests \cite{zismanov2024high}. Combinatorial group testing methods such as Combinatorial Orthogonal Matching Pursuit (COMP) and Definite Defectives (DD) \cite{chan2011non, chan2014non, aldridge2014group}, identify positives directly without estimating viral loads. COMP excludes definite negatives and declares the remaining as possible positives, while DD refines results by declaring a sample positive if it is the sole member of a positive pool. DD reduces false positives but often yields false negatives, posing a risk for pandemic control. Building on these methods, T$\ddot{\text{a}}$ufer \cite{taufer2020rapid} employed Shifted Transversal Designs with COMP decoding, and Ghosh et al. \cite{ghosh2021compressed} introduced Tapestry, using deterministic pooling matrices derived from Kirkman Triple Systems and a two-step decoder (COMP followed by NNOMP) to estimate viral loads, achieving over ten-fold efficiency gains at $1\%$ prevalence. 

Despite these advances, existing methods remain limited in reliability, scalability and practical applicability. Structured pooling matrices are available only for small dimensions (e.g., Kirkman Triple Systems for $m\leq 99$), making large-scale pooling designs dependent on heuristic or greedy search approaches. CS-based methods, despite their testing efficiency, incur substantial computational overhead, whereas combinatorial methods often compromise detection accuracy. On the other hand, dilution effects and limited sample volumes have significant impact on testing outcomes in practice, yet such hard constraints are difficult to handle in a stochastic setup. Together, these limitations highlight the necessity and importance for a fast, accurate, and scalable framework for large-scale screening.

In this work, we propose the Logic Screening method (LoSc), a fast and reliable framework for large-scale screening in the early stages of an outbreak. LoSc identifies all positive cases with the number of pooled tests comparable to CS-based methods, while maintaining low computational complexity and supporting hardware-efficient implementation. LoSc introduces a novel decoding algorithm that integrates logical elimination with candidate over-selection (Fig.~\ref{fig:constructure}), achieving a computational complexity of $\mathcal{O}\left(km\right)$. The algorithm first eliminates samples in negative pools, then iteratively applies a greedy selection strategy to select candidates that best explain the observed positive pools, until a predefined selection threshold is reached. Specifically, LoSc adopts Top-k and Top-2k sample selection strategies, which intentionally select more candidates than the expected number of infections,  thereby reducing both the number of pooled tests and false negatives.

Beyond the decoding algorithm, LoSc explicitly incorporates two critical constraints into the pooling designs: the dilution constraint and the sample usage constraint. These designs improve the scalability and practical applicability of LoSc under real-world biological and operational limitations. Consequently, three pooling designs are proposed: unconstrained, dilution-constrained, and sample-constrained designs, each tailored to specific application scenarios. The unconstrained pooling design is simple and flexible, allowing samples to be distributed randomly across pools, and is widely used in large-scale screening applications. The dilution-constrained pooling design strictly limits pool size to mitigate dilution effects, preserving assay sensitivity and ensuring reliable detection. The sample-constrained pooling design restricts the number of pools to which each sample can be assigned, thereby preventing sample exhaustion.

For each design, we derive a rigorous theoretical bound on the number of pooled tests $m$ required to identify all positives, accounting for the population size $n$, the number of infections $k$ and the corresponding pooling parameter ($p$, $P_r$ or $P_c$). We further propose conjectures on tighter bounds for the unconstrained and dilution-constrained pooling designs, and validate them through numerical simulations, indicating that they can capture the relationship between the number of pooled tests $m$ and the pooling parameters. Both theoretical analysis and numerical simulations show that LoSc identifies all positive samples with high probability using only $\mathcal{O}\left(k\log n\right)$ pooled tests, demonstrating its remarkable efficiency and reliability. Finally, we provide practical guidance for selecting pooling parameters and conservative estimates of the testing requirements to achieve near-optimal detection performance. 

Combining high testing efficiency, low computational complexity, and practical feasibility, LoSc delivers a rigorous and scalable framework for large-scale screening. Although LoSc is designed for epidemic surveillance, it can be generalized to other applications, such as screening for rare genetic mutations \cite{shental2010identification, nida2016highly} or identifying defective items in manufacturing.
\begin{figure*}[!ht] 
	\centering   
    \includegraphics[width=0.88\textwidth]{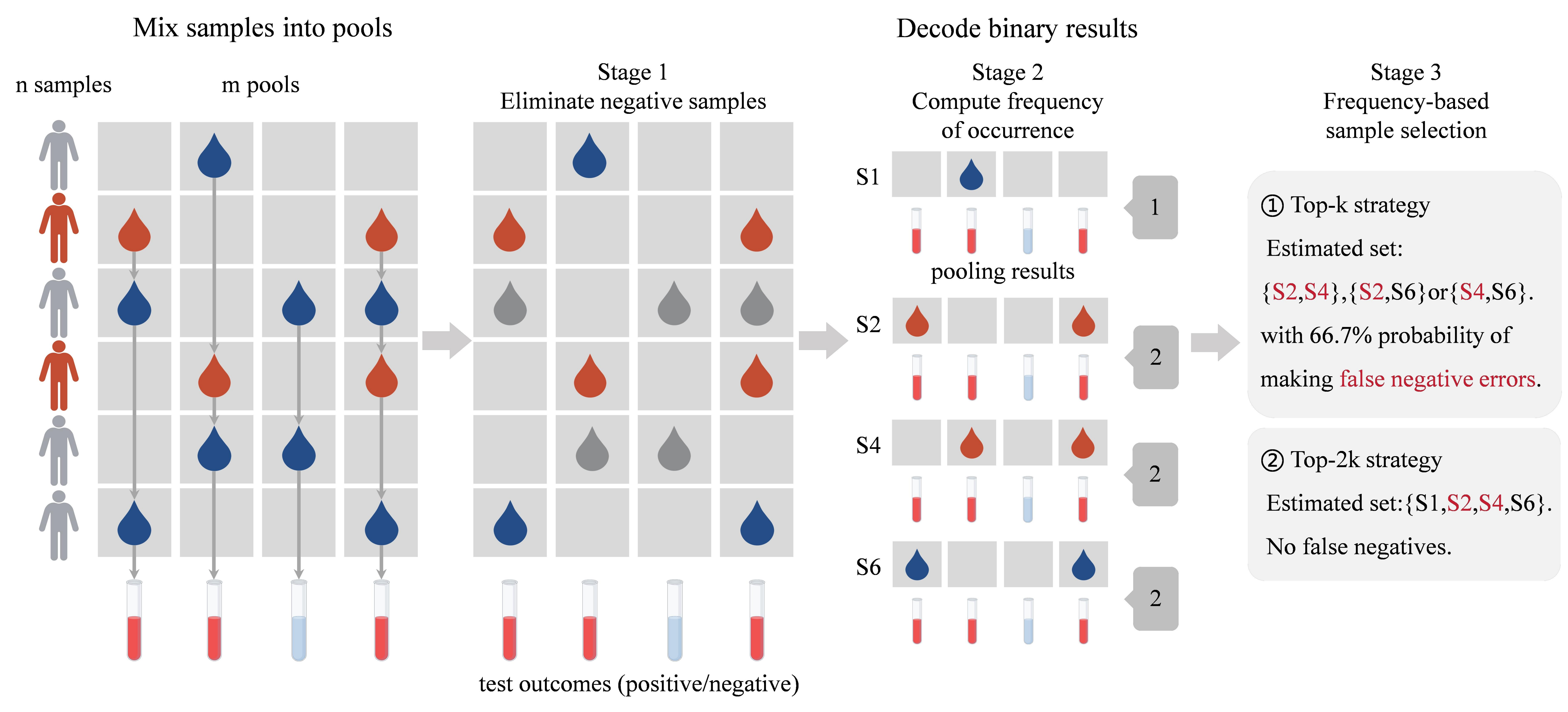}
    \captionsetup{position=below,justification=justified,singlelinecheck=false}
	\caption{\textbf{Overview of Logic Screening method.} Samples are assigned to pools according to a specific pooling design. In the first stage, samples appearing in negative pools are declared as negative. Then, compute the frequency of occurrence in positive pools for the remaining samples. Finally, select top $k$ or $2k$ samples as the estimated set.}
    \vspace{-0.6cm}
    \label{fig:constructure}
\end{figure*}

The remainder of this paper is organized as follows. We introduce the system model and problem formulation in Section~\ref{sec:problem statement}. Section~\ref{sec:method} presents the Logic Screening method and provides theoretical analysis. The simulation results are reported in Section~\ref{sec:results} and we conclude this paper in Section~\ref{sec:conclusion}.

\section{Problem Statement}\label{sec:problem statement}
Consider a large population of $n$ individuals. Denote the viral load vector as $x = {\left[ {{x_1}, \ldots ,{x_n}} \right]^T} \in {R^n}$, where $x_{i} \geq 0$ is the viral load of $i$-th sample ($x_{i}=0$ if $i$-th sample is uninfected). Let $S = \left\{ {i\left| {{x_i} > 0} \right.} \right\}$ be the set of positives. In the very early stages of a pandemic, the vector $x$ is extremely sparse, meaning that the number of infections $k=\left|S\right|$ is significantly smaller than the population size, i.e, $k \ll n$.

Let $m$ denote the number of pooled tests. The pooling process is represented by a binary matrix $\Phi \in \{0,1\}^{m \times n}$. Each row of $\Phi$ corresponds to a specific pool, with $\Phi_{i,j}=1$ indicating that the $j$-th sample is in the $i$-th pool, and 0 otherwise. To model the dilution caused by sample pooling, we introduce a diagonal scaling matrix $D\in R^{m\times m}$, where each diagonal entry is given by
\begin{align}
    {D_{i,i}} = \left\{ {\begin{array}{*{20}{l}}
  {{1}/{{\sum\limits_{j = 1}^n {{\Phi _{i,j}}} }},}&{{\text{if}}\;\sum\limits_{j = 1}^n {{\Phi _{i,j}}}  > 0,} \\ 
  {0,}&{{\text{otherwise.}}} 
\end{array}} \right.
\end{align}
Accordingly, the viral load measurement of the $i$-th pool ($i=1,\dots,m$) is
\begin{align}
    y_i = (D\Phi)_{i,:} x,
\end{align}
where ${\left( {D\Phi } \right)}_{i,:}$ denotes the $i$-th row of the matrix $D\Phi$. 

Since most diagnostic assays only detect viral material above the limit of detection (LOD) and report binary (positive/negative) results,  we define the binary measurement vector ${\tilde y \in \{0,1\}^{m}}$ as
\begin{align}\label{eq:binary_measurement}
    {\tilde y_i} = \left\{ {\begin{array}{*{20}{l}}
  {1,}&{{\text{if}}\,{y_i} \geq \text{LOD},} \\ 
  {0,}&{{\text{otherwise.}}} 
\end{array}} \right.
\end{align}
This implies that a pool is declared negative when the diluted viral load is below the LOD. Throughout, we assume that all measurements are noise-free.

Given the pooling matrix $\Phi$ and the test outcomes $\tilde{y}$, our goal is to develop decoding algorithms to identify positives (i.e., recover the set of positives $S$) with both low computational complexity and high accuracy. We aim to establish lower bounds on the number of tests $m$ required to identify all positives, and evaluate the performance of decoding algorithms under different pooling designs.

\section{Logic Screening Method: Identifying Positive Samples}\label{sec:method}
In this section, we introduce the Logic Screening method (Fig.~\ref{fig:constructure}), which uses the pooling matrix and the binary test results to identify positive samples. We prove rigorously that LoSc can achieve high detection accuracy and computational efficiency without the need for numerical optimization, making it suitable for large-scale screening. 

\subsection{Pooling Design}
In this work, we consider three pooling designs with distinct structural properties and practical implications. Each design can be automatically deployed on liquid-dispensing robots, making it an efficient and scalable screening tool.
\paragraph{Unconstrained Pooling Design} In this scenario, the pooling is constructed by independently assigning each sample to each pool with probability $p$. This corresponds to the pooling matrix $\Phi$ whose element $\Phi_{i,j}$ equals to 1 with probability $p$ and 0 with probability $1-p$. This pooling matrix ensures that samples are distributed randomly across pools. However, such randomness introduces practical challenges: some pools may become excessively large, increasing the risk of dilution and compromising assay reliability, while some samples may be assigned to many pools, risking sample depletion. Therefore, we propose the following two constrained pooling designs.
\paragraph{Dilution-Constrained Pooling Design} As already mentioned, one key challenge in pooling is the dilution effect: if pools are too large, positive samples may be diluted below the LOD, yielding false negatives. To avoid the dilution effect, we impose a hard constraint on the pool size, requiring each pool to contain exactly $P_{r}$ samples, that is, each row of $\Phi$ has exactly $P_{r}$ nonzero entries (Fig.~\ref{fig:constrained-designs}(a)). To this end, for each row $i$ of $\Phi$, we select uniformly $P_r$ indices from the set $\{1,\dots,n\}$ and set $\Phi_{i,j}=1$ for each selected index $j$. Compared with the unconstrained design, this design mitigates the risk of dilution and improves detection reliability.
\paragraph{Sample-Constrained Pooling Design}
This design considers the limitation of finite availability of samples, which is another key challenge in pooling. In this design, we put a hard constraint on the column of the pooling matrix. In particular, each column of $\Phi$ represents a sample and contains at most $P_{c}$ nonzero entries, ensuring that each sample is assigned to at most $P_{c}$ pools (Fig.~\ref{fig:constrained-designs}(b)). For each column $j$ of $\Phi$, we perform $P_c$ independent selections from the set $\{1,\dots,m\}$ and set $\Phi_{i,j}=1$ if index $i$ is selected at least once. This design is particularly advantageous in practical scenarios where sample volumes are limited or repeated pipetting must be minimized, ensuring efficient use of scarce biological resources. 
\begin{figure}[!ht]
    \centering   
    \captionsetup{position=below,justification=centering,singlelinecheck=false}
    \subcaptionbox{Dilution-constrained design}[1\linewidth]{
        \includegraphics[width=1\linewidth]{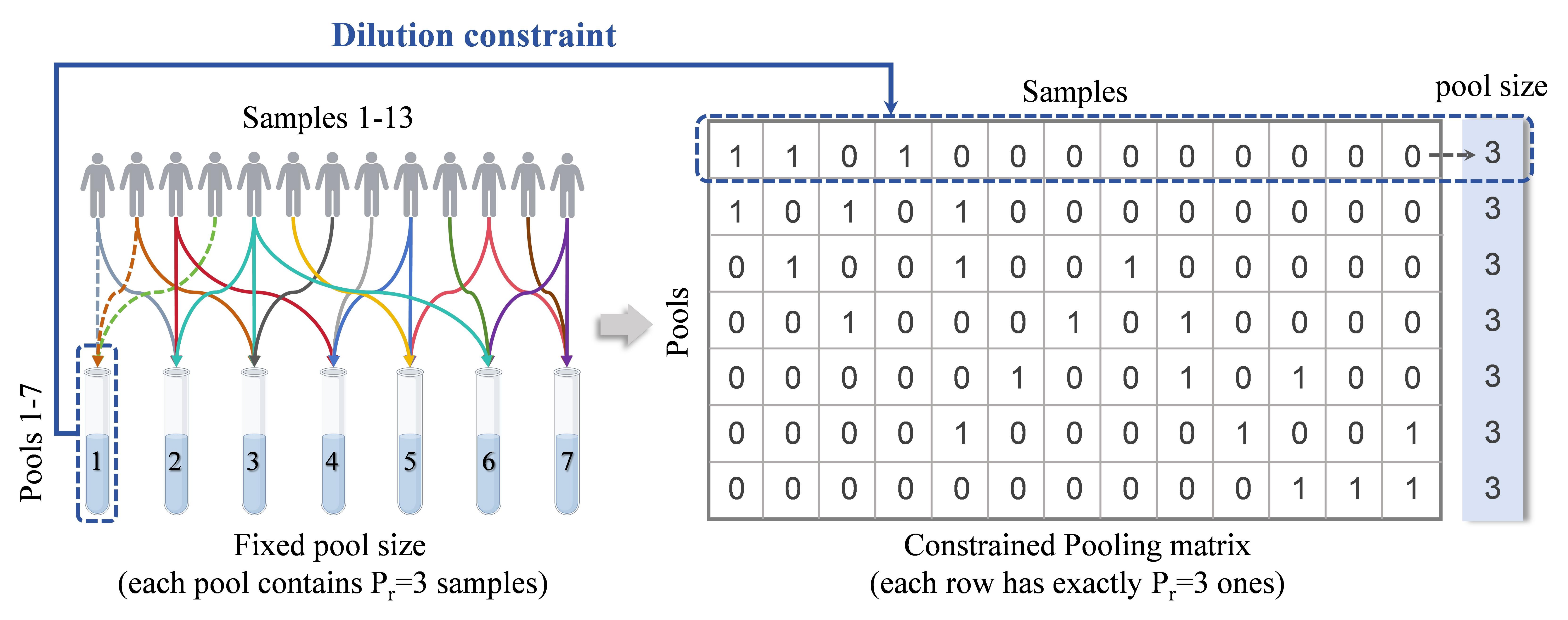}
        }
    \subcaptionbox{Sample-constrained design}[1\linewidth]{
        \includegraphics[width=1\linewidth]{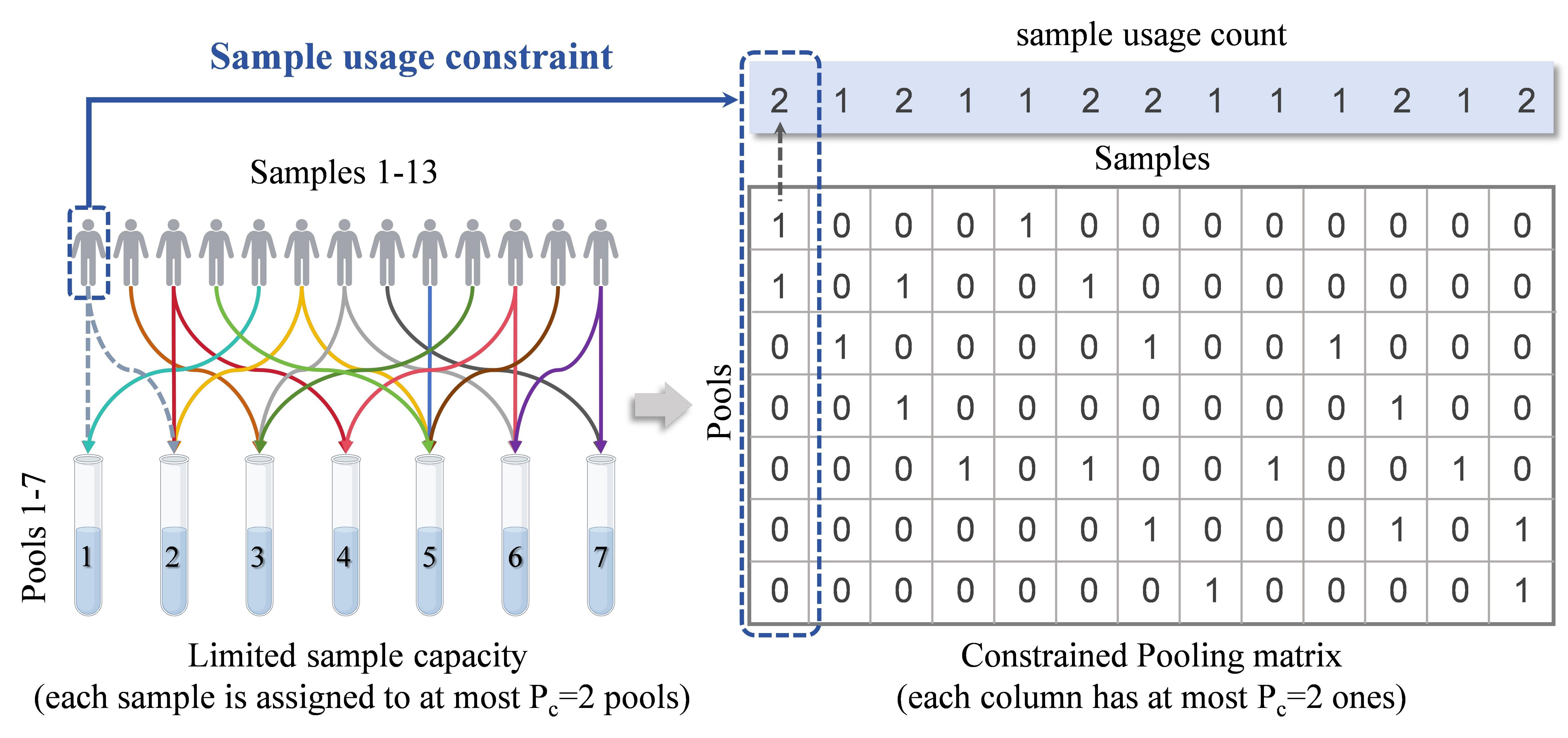}
        }
    \captionsetup{position=below,justification=justified,singlelinecheck=false}
    \caption{\textbf{Overview of pooling designs with different constraints.} (a) Dilution-constrained design. The pool size is strictly limited to mitigate dilution effects. The corresponding pooling matrix $\Phi$ has exactly $P_r$ nonzero entries per row. (b) Sample-constrained design. Sample usage is restricted to prevent exhaustion. The corresponding pooling matrix $\Phi$ has at most $P_c$ nonzero entries per column.}
    \vspace{-0.6cm}
    \label{fig:constrained-designs}
\end{figure}

\subsection{Decoding Algorithm} 
The proposed decoding algorithm can quickly recover the set of positives $S$ from the binary test results ${\tilde y}$, with a computational complexity of $\mathcal{O}\left(km\right)$. The algorithm consists of three stages. First, it eliminates samples that appear in any negative pool and obtains a candidate set $C \subseteq \left\{ {1, \ldots ,n} \right\}$. Next, for each sample in the candidate set $C$, we count its frequency of occurrence in positive pools and select the top $k$ samples as the estimated set $\hat{S}_{k}$. For higher detection accuracy, we select the top $2k$ samples as $\hat{S}_{2k}$. The detailed procedure is given in Algorithm~\ref{decode_combinatorial}.
\begin{algorithm}[!ht]
	\caption{Decoding algorithm in LoSc}\label{decode_combinatorial} 
	\begin{algorithmic}[1]
            \REQUIRE Binary measurement vector ${\tilde y}\in\{0,1\}^{m}$, pooling matrix $\Phi\in\{0,1\}^{m\times n}$.
            \ENSURE Estimated sets ${{\hat S}_k}$ and ${{\hat S}_{2k}}$.

            \STATE Candidate set $C \leftarrow \left\{ {1, \ldots ,n} \right\}$, occurrence frequency list $B=[B_{1},\dots,B_{n}]$;
            \FOR{$i=1,\dots,m$}
                \IF{${\tilde y}_{i}=0$}
                    \STATE $C \leftarrow C\backslash \left\{ {l\left| {{\Phi _{i,l}} = 1} \right.} \right\}$;
                \ELSE
                    \FOR{$j\in \left\{ {l\left| {{\Phi _{i,l}} = 1} \right.} \right\}$}
                        \STATE ${B_j}\leftarrow {B_j} + 1$;
                    \ENDFOR
                \ENDIF
            \ENDFOR
            \STATE Sort the candidate set $C$ by $B_{j}$ (frequency of occurrence in positive pools) in descending order to obtain an ordered sequence $\left( {{j_1}, \ldots ,{j_{\left| C \right|}}} \right)$ such that ${B_{{j_1}}} \geq \cdots  \geq {B_{{j_{\left| C \right|}}}}$;
            \STATE ${{\hat S}_k} \leftarrow \left\{ {{j_1}, \ldots ,{j_k}} \right\}$;
            \IF{$\left| C \right| > 2k$}
                \STATE ${{\hat S}_{2k}} \leftarrow \left\{ {{j_1}, \ldots ,{j_{2k}}} \right\}$;
            \ELSE
                \STATE ${{\hat S}_{2k}} \leftarrow C$;
            \ENDIF
            \\\RETURN ${\hat S}_{k}$ and ${\hat S}_{2k}$.
	\end{algorithmic}
\end{algorithm}

The process of decoding is equivalent to computing the inner product between the binary measurement vector $\tilde y$ and each column of the pooling matrix $\Phi$ corresponding to a candidate sample. The number of positive pools is $\left| \mathrm{supp}(\tilde y)\right|=\mathcal{O}\bigl(m\bigr)$ and the size of the candidate set is $\lvert C\rvert=\mathcal{O}\bigl(k\bigr)$. For each sample $j\in C$, $B_{j}$ is the number of positive pools containing sample $j$. Clearly, $B_{j}\le \left| \mathrm{supp}(\tilde y)\right|$. Hence, the computational complexity of the decoding algorithm is 
\begin{align}
    \sum_{j\in C} B_j 
      \le\lvert C\rvert\lvert \mathrm{supp}(\tilde y)\rvert=\mathcal{O}\left(km\right).
\end{align}

\begin{figure*}[!t]
    \centering
    \includegraphics[width=0.88\textwidth]{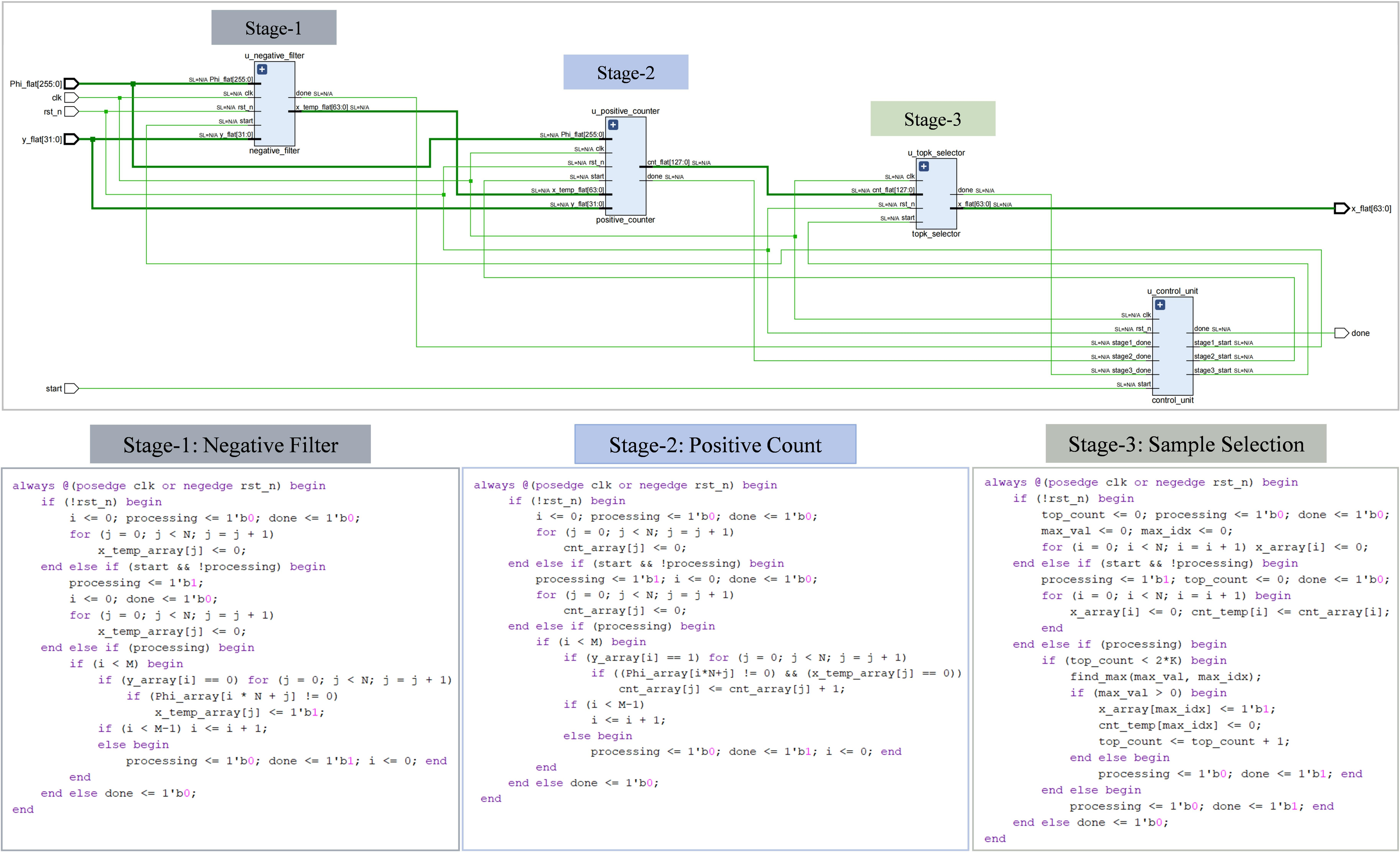}
    \caption{\textbf{Schematic diagram and Verilog code of the decoding algorithm.}}
    \vspace{-0.6cm}
    \label{fig:decoding_algorithm}
\end{figure*}
More importantly, the decoding algorithm can be implemented using logical operations, making it highly suitable for quick deployment and facilitating integration into existing workflows and automation systems. We present the schematic diagram and Verilog code of the decoding algorithm in Fig.~\ref{fig:decoding_algorithm}.

The sample selection strategy is motivated by the observation that true positive samples are more likely to appear in positive pools than negative ones across all pooling designs. For illustration, in the unconstrained design, each positive sample is independently assigned to a pool with probability $p$, whereas a negative sample appears in a positive pool with probability $p(1-(1-p)^k)$, which is much smaller than $p$ in the sparse pooling regime. By ranking samples according to their frequency of occurrence in positive pools, the algorithm prioritizes those with a higher probability of being truly positive, thus improving the detection accuracy.

Selecting the top $k$ samples generally suffices, but risks missing true positives. To address this issue, the algorithm adopts an over-selection strategy that chooses the top $2k$ samples as estimated set, which significantly reduces false negative errors and the number of pooled tests compared to the Top-k strategy, at the expense of additional follow-up tests.

\subsection{Performance Guarantees}
In this subsection, we establish mathematical guarantees for our decoding algorithm under three pooling designs, in which the number of samples and pooled tests are denoted as $n$ and $m$ respectively. For each design, we shall derive the sufficient condition on the number of pooled tests $m$ required to ensure the success rate of identification, which is measured by the probability of events $S \subseteq {{\hat S}_k}$ and $S \subseteq {{\hat S}_{2k}}$. The following theorems, proved in the appendices, show that the success rates can be very close to 1 under mild conditions. 

{\begin{theorem}[Unconstrained Pooling Design]\label{thm:unconstrained}
    In the unconstrained pooling design, let $p$ denote the probability of $\Phi_{i,j}=1$. Let $\hat{S}_k$ and $\hat{S}_{2k}$ be the estimated sets of Algorithm~\ref{decode_combinatorial}. Then, we have $\mathbb{P}\{S \subseteq {{\hat S}_k}\}\geq 1-\epsilon$ for arbitrarily small $\epsilon>0$, provided that the number of pooled tests satisfies
    \begin{align}
        m \geq \frac{{2\left(1-p(1-p)^k\right)^2\log \left( {2n/\epsilon}\right)}}{{{p^2}{{\left( {1 - p} \right)}^{2k + 2}}}}.
    \end{align}
    Moreover, we have $\mathbb{P}\{S \subseteq {{\hat S}_{2k}}\}\geq 1-\delta$, where
    \begin{align}
        \delta  = {\left( {{{e\epsilon }}/{{{(k + 1)}}}} \right)^{k + 1}} + {k\epsilon }/{{n}}.
    \end{align}
\end{theorem}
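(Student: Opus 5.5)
The plan is to compare the scores $B_j$ of Algorithm~\ref{decode_combinatorial} against a fixed threshold. Under the noise-free model, and assuming a pool containing a positive sample is declared positive, no positive sample is ever eliminated, so $S\subseteq C$. For a positive $i\in S$, $B_i=\sum_{r=1}^m \Phi_{r,i}\tilde y_r=\sum_r \Phi_{r,i}$, since every pool containing $i$ is positive; hence $B_i\sim\mathrm{Bin}(m,p)$. For a negative $j\notin S$, $B_j=\sum_r \Phi_{r,j}\,\mathbf 1\{\exists\, l\in S:\Phi_{r,l}=1\}$. Rows are i.i.d., so $B_j\sim\mathrm{Bin}\bigl(m,\,p(1-(1-p)^k)\bigr)$. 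The mean gap is $mp(1-p)^k$. I would set the threshold at the midpoint $\tau=m\bigl(p-\tfrac12 p(1-p)^k\bigr)$, so each deviation budget is $t=\tfrac12 mp(1-p)^k$.

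\textbf{Top-$k$ bound.} The event $S\subseteq\hat S_k$ holds whenever every positive has $B_i>\tau$ and every surviving negative has $B_j<\tau$: then the $k$ positives strictly dominate all other candidates. I would apply Hoeffding's inequality to each $B_j$ separately. To get the stated constants I would not use the raw $[0,1]$-range bound. Instead I would bound each summand after centering, with effective range $1-p(1-p)^k$, and allow a slack factor $(1-p)^2$ in the deviation; the slack accounts for the dependence between a negative column and the positive columns, discussed in the last paragraph. This should give a per-sample failure probability
\[
\exp\Bigl(-\tfrac{m p^2(1-p)^{2k+2}}{2(1-p(1-p)^k)^2}\Bigr)\le \tfrac{\epsilon}{2n}
\]
under the stated condition on $m$. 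A union bound over all $n$ samples, with the factor $2$ covering the two tails, then yields $\mathbb P\{S\subseteq\hat S_k\}\ge 1-\epsilon$. The exact bookkeeping of which tail receives $\epsilon/(2n)$ is routine; what matters is that each sample fails with probability at most $\epsilon/n$.

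\textbf{Top-$2k$ bound.} The event $S\not\subseteq\hat S_{2k}$ requires one of two things. Either some positive falls below $\tau$, which by the above and a union over $k$ positives has probability at most $k\epsilon/n$. Or at least $k+1$ negatives exceed $\tau$, since otherwise at most $k$ negatives can outrank a positive among $2k$ slots. For the second event I would condition on the columns $\{\Phi_{:,l}\}_{l\in S}$. Given these, the negative columns are independent, so the events $\{B_j\ge\tau\}$ are conditionally independent, each with conditional probability at most $q=\epsilon/n$. Then
\[
\mathbb P\{\ge k+1\ \text{exceed}\}\le \binom{n}{k+1}q^{k+1}\le\Bigl(\tfrac{enq}{k+1}\Bigr)^{k+1}=\Bigl(\tfrac{e\epsilon}{k+1}\Bigr)^{k+1},
\]
and summing the two contributions gives $\delta$.

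\textbf{Main obstacle.} The delicate step is this conditioning. Conditional on the positive columns, $B_j$ is $\mathrm{Bin}(m_+,p)$, where $m_+$ is the random number of positive pools. So the per-negative bound $q\le\epsilon/n$ must hold uniformly over typical realizations, or be integrated against the law of $m_+$, for instance by bounding the moment generating function of $m_+$. I expect the $(1-p)^{2k+2}$ and $(1-p(1-p)^k)^2$ factors in the hypothesis are exactly what absorbs this. I would first try to bound the conditional tail via Hoeffding with $m_+$ replaced by its mean plus a concentration margin, and check that the stated $m$ suffices.
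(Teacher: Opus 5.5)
\paragraph{Top-$k$.} This part is sound in substance, and it gives a slightly stronger sufficient condition than the paper's, but not for the reason you give. Centering a $\{0,1\}$-valued summand does not shrink its Hoeffding range below $1$. A union bound over samples also uses only marginals, so there is no column dependence to absorb. The ``effective range'' and ``slack factor'' steps are therefore unjustified.

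They are also unnecessary. With your unconditional gap $mp(1-p)^k$, plain Hoeffding gives a per-sample failure probability of at most $\exp\bigl(-mp^2(1-p)^{2k}/2\bigr)$. Since $1-p(1-p)^k\ge 1-p$, this is at most $\exp\bigl(-\frac{mp^2(1-p)^{2k+2}}{2(1-p(1-p)^k)^2}\bigr)\le\epsilon/(2n)$ under the hypothesis.

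The stated constants come from a different choice in the paper. It conditions on survival, $E_j=\{j\in C\}$, under which $B_j^-\sim\mathrm{Bin}(m,q)$ with $q=p(1-(1-p)^k)/(1-p(1-p)^k)$. The resulting smaller gap $mp(1-p)^{k+1}/(1-p(1-p)^k)$ is exactly what produces the factors $(1-p)^{2k+2}$ and $(1-p(1-p)^k)^2$. They are not a built-in allowance for correlation.

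\paragraph{Top-$2k$: the genuine gap.} Your claim that, given the positive columns, each event $\{B_j\ge\tau\}$ has conditional probability at most $\epsilon/n$ is false. Conditionally, $B_j\sim\mathrm{Bin}(m_+,p)$. On the positive-probability event $m_+=m$, the conditional mean $mp$ exceeds $\tau$ by $\tfrac12 mp(1-p)^k$, which is at least $1$ under the hypothesis. So the conditional probability there is at least $1/2$.

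What your Hoeffding bound actually controls is the average $\mathbb{E}[q(m_+)]$, where $q(m_+)=\mathbb{P}(B_j\ge\tau\mid m_+)$. The $\binom{n}{k+1}$ union bound needs a bound on $\mathbb{E}[q(m_+)^{k+1}]$, and Jensen only gives $\mathbb{E}[q(m_+)^{k+1}]\ge(\mathbb{E}[q(m_+)])^{k+1}$, which is the wrong direction. The $k+1$ exceedances are positively correlated through $m_+$, and controlling this is the missing step.

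You name two repairs but carry out neither, and neither is automatic. A single cut $\{m_+\le\mathbb{E}m_++s\}$ needs the bad event to have probability $O(\epsilon^{k+1})$. With Hoeffding for $m_+$, this forces $s$ of order $\sqrt{(k+1)m\log(1/\epsilon)}$. You would then have to show that the hypothesis still keeps the per-negative bound near $\epsilon/n$ on the good event.

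A cleaner route uses independence across rows rather than across samples:
\[
\mathbb{P}\bigl(B_{j_1},\dots,B_{j_{k+1}}\ge\tau\bigr)\le e^{-(k+1)\lambda\tau}\bigl(1-\rho+\rho(1-p+pe^{\lambda})^{k+1}\bigr)^{m},\qquad \rho=1-(1-p)^k .
\]
After this you must still verify that the stated $m$ yields the stated $\delta$.

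The paper does not supply this argument either. It bounds the event by $\binom{n-k}{k+1}\,\mathbb{P}(E_j\text{ and }B_j^-\ge T)^{k+1}$, silently treating the negatives as independent, which is exactly the step you flagged.
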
}

The proof of Theorem~\ref{thm:unconstrained} is given in Appendix~\ref{pf:unconstrained}.

\textit{Optimal $p$ for Unconstrained Pooling Design:} For fixed $n$, $k$ and $\epsilon$, the number of pooled tests $m$ scales inversely with the term $\frac{p^2(1-p)^{2k+2}}{\left(1-p(1-p^k)\right)^2}$. To reach the minimum of $m$, we can obtain the approximate value of optimal $p$ as follows
\begin{align}
    p^*\approx 1/(k+1).
\end{align}
This choice balances the sparsity and coverage of the pooling matrix, providing a highly cost-efficient design.

{\begin{theorem}[Dilution-Constrained Pooling Design]\label{thm:dilution_constrained}
    In the dilution-constrained pooling design, let $P_r$ denote the pool size. Let $\hat{S}_k$ and $\hat{S}_{2k}$ be the estimated sets of Algorithm~\ref{decode_combinatorial}. Then, we have $\mathbb{P}\{S \subseteq {{\hat S}_k}\}\geq 1-\epsilon$ for arbitrarily small $\epsilon>0$, provided that the number of pooled tests satisfies
    \begin{align}
        m \geq \frac{{2\log \left( {2n/\epsilon} \right)}}{{{{\left( {{P_r}/n} \right)}^2}{{\left( {1 - {P_r}/n} \right)}^2}{{\left( {\binom{n-k-1}{P_r-1}/\binom{n-1}{P_r-1}} \right)}^2}}}.
    \end{align}
    Moreover, we have $\mathbb{P}\{S \subseteq {{\hat S}_{2k}}\}\geq 1-\delta$, where
    \begin{align}
        \delta  = {\left( {{{e\epsilon }}/{{{(k + 1)}}}} \right)^{k + 1}} + {k\epsilon }/{{n}}.
    \end{align}
\end{theorem}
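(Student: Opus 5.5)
We follow the route that the form of the bound in Theorem~\ref{thm:unconstrained} suggests: compare, for every positive $s\in S$ and every candidate negative $j\notin S$, the positive-pool counts $B_s$ and $B_j$, and control each difference with Hoeffding's inequality over the $m$ independently generated rows. In the dilution-constrained design the rows of $\Phi$ are i.i.d. (each is a uniform $P_r$-subset), so for fixed $s\in S$ and $j\notin S$ the quantities
\begin{align*}
Z_i=\Phi_{i,s}\tilde y_i-\Phi_{i,j}\tilde y_i\,\mathbb{1}\{j\in C\text{-compatible}\}
\end{align*}
(more simply $Z_i=\Phi_{i,s}-\Phi_{i,j}\tilde y_i$, since every pool containing $s$ is positive) are i.i.d. and bounded in $[-1,1]$. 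Here we assume the noise-free model, in which a pool containing a positive is positive.

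First we compute the per-row probabilities. We have $\mathbb{P}\{\Phi_{i,s}=1\}=P_r/n$. For the negative $j$, $\mathbb{P}\{\Phi_{i,j}=1,\tilde y_i=1\}=P_r/n\cdot\bigl(1-\binom{n-k-1}{P_r-1}/\binom{n-1}{P_r-1}\bigr)$, because conditioned on $j$ being in the pool, the other $P_r-1$ members avoid all $k$ positives with that ratio. The drift is therefore
\begin{align*}
\mu=\mathbb{E}Z_i=\frac{P_r}{n}\cdot\frac{\binom{n-k-1}{P_r-1}}{\binom{n-1}{P_r-1}}.
\end{align*}
We would then lower-bound this by the quantity $\mu'=(P_r/n)(1-P_r/n)\binom{n-k-1}{P_r-1}/\binom{n-1}{P_r-1}$ appearing in the theorem. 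The extra factor $(1-P_r/n)$ mirrors the $(1-p)$ factors of Theorem~\ref{thm:unconstrained}; it likely arises because the authors require $s$ to be in the pool and $j$ to be out, or condition on both. We will define $Z_i$ accordingly, e.g.\ by pairing the event ``$s$ in, $j$ out'' against ``$j$ in a positive pool'', so that the mean is at least $\mu'$.

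Hoeffding then gives $\mathbb{P}\{\sum_i Z_i\le 0\}\le\exp(-2m\mu'^2/4)=\exp(-m\mu'^2/2)$, where the range of $Z_i$ has length $2$. Under the stated bound on $m$ this is at most $\epsilon/(2n)$. A union bound over $j\notin S$ (at most $n$ of them) shows that each fixed $s$ beats every negative with probability at least $1-\epsilon/2$. To obtain $S\subseteq\hat S_k$ we need each positive to have a strictly larger count than all negatives, since negatives eliminated from $C$ are harmless. We take a union over $s$, or, as the $2n$ in the logarithm suggests, split the failure budget as $\epsilon/2$ between the two tail events. Ties are the delicate point, and we would handle them by requiring a strict inequality $\sum Z_i>0$.

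For $\hat S_{2k}$, the event $S\not\subseteq\hat S_{2k}$ requires some positive to be outranked by at least $k+1$ negatives. If the events ``negative $j$ beats positive $s$'' were independent across $j$, each with probability $q\le\epsilon/n$, then the number of such $j$ would be dominated by a $\mathrm{Bin}(n,q)$ variable. Hence
\begin{align*}
\mathbb{P}\{\ge k+1\}\le\binom{n}{k+1}q^{k+1}\le\bigl(e n q/(k+1)\bigr)^{k+1}\le\bigl(e\epsilon/(k+1)\bigr)^{k+1},
\end{align*}
and we add $k\epsilon/n$ to cover the residual event of a positive losing to the specific boundary competitor. This gives $\delta$ as stated.

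The main obstacle is that these events are not independent: they share the $\tilde y$ vector and the same rows. We would try to condition on the columns of the positives, which makes the negatives' columns exchangeable. In the dilution design, however, row constraints couple the columns weakly, so one may need negative association of uniform-subset indicators to justify the binomial domination. This is the step we expect to be the least rigorous, and we would mirror whatever argument Appendix~\ref{pf:unconstrained} uses for the unconstrained case.
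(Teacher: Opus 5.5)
\textbf{Gap in the Top-$k$ union bound.} The pairwise Hoeffding estimate itself is fine. With $Z_l=\Phi_{l,s}-\Phi_{l,j}\tilde y_l\in[-1,1]$ and drift $\mu=(P_r/n)q_2\ge\mu'$, where $q_2=\binom{n-k-1}{P_r-1}/\binom{n-1}{P_r-1}$, you correctly get $\mathbb{P}(B_s\le B_j)\le e^{-m\mu'^2/2}\le\epsilon/(2n)$ under the stated $m$. The problem is that $S\subseteq\hat S_k$ needs all $k(n-k)$ pairs. The union bound then gives $k(n-k)\epsilon/(2n)\approx k\epsilon/2$, not $\epsilon$.

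The slack from $\mu>\mu'$ does not rescue this in general. It is only a factor $(1-P_r/n)^{-2}$ in the exponent, which is negligible for small $P_r/n$. ``Splitting the budget between the two tail events'' is the right instinct, but your pairwise set-up has no two tail events.

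\textbf{The missing idea: a midpoint threshold.} The paper sets $T=(\mu_S+\mu_{S^c})/2$ and uses $\{S\not\subseteq\hat S_k\}\subseteq\{\min_{s\in S}B_s\le T\}\cup\{\max_{j\in C\setminus S}B_j\ge T\}$. With your own unconditional rates this already works:
\begin{itemize}
\item each positive count is $\mathrm{Bin}(m,P_r/n)$;
\item each negative count is $\mathrm{Bin}\bigl(m,(P_r/n)(1-q_2)\bigr)$, since rows are i.i.d.;
\item each single tail is then at most $e^{-2t^2/m}=e^{-m\mu^2/2}$ with $t=m\mu/2$, the same exponent as your pairwise bound.
\end{itemize}
The union now runs over only $k+(n-k)=n$ events, giving $n\cdot\epsilon/(2n)$. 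This is where the $2n$ in the logarithm comes from.

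The $(1-P_r/n)$ factor does not come from pairing ``$s$ in, $j$ out''. It arises because the paper conditions on $E_j=\{j\in C\}$. Under that conditioning the per-row rate of $B_j$ is $(P_r/n)(1-q_2)/(1-(P_r/n)q_2)$, so the gap shrinks to $(P_r/n)(1-P_r/n)q_2/(1-(P_r/n)q_2)\ge\mu'$. No redefinition of $Z_l$ is needed.

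\textbf{Top-$2k$ part.} The same threshold is what produces $\delta$, which your sketch leaves vague. The paper bounds the failure probability by
\[
\mathbb{P}\bigl(\min_{s\in S}B_s\le T\bigr)+\mathbb{P}\bigl(\#\{j\in C\setminus S: B_j\ge T\}\ge k+1\bigr).
\]
The first term is at most $k\epsilon/(2n)$. That is the $k\epsilon/n$ in $\delta$, not a ``boundary competitor'' correction.

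The second term no longer involves any $B_s$. Your events $\{B_j\ge B_s\}$ for a fixed $s$, by contrast, all share $B_s$, which adds positive correlation on top of the shared $\tilde y$.

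For that second term the paper simply writes $\binom{n-k}{k+1}\bigl(\mathbb{P}(E_j\text{ and }B_j\ge T)\bigr)^{k+1}$, applies $\binom{n-k}{k+1}\le(e(n-k)/(k+1))^{k+1}$, and gives no argument for the product form. So the dependence issue you flag is real, and the proof of Theorem~\ref{thm:unconstrained} does not resolve it. Conditioning on the positives' memberships bounds the joint probability only by $\mathbb{E}[g^{k+1}]$, where $g$ is the common conditional probability, and Jensen gives $\mathbb{E}[g^{k+1}]\ge(\mathbb{E}g)^{k+1}$.

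Mirroring the appendix will therefore reproduce the stated $\delta$, but without the rigor you were hoping to borrow. A genuine proof would need an extra ingredient, for example a concentration bound on the number of positive pools before applying the conditional product bound.
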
}

We provide the proof in Appendix~\ref{pf:dilution_constrained}.

\textit{Optimal $P_r$ for Dilution-Constrained Pooling Design:} We analyze the denominator and optimize $P_r$ to minimize $m$ for fixed $n$, $k$ and $\epsilon$. Note that $P_r$ must be an integer and $P_r\leq n$, the approximate value of optimal $P_r$ is given by
\begin{align}
    {P_{r}^*} \approx \left\lceil {n/k} \right\rceil.
\end{align}

{\begin{theorem}[Sample-Constrained Pooling Design]\label{thm:sample_constrained}
    In the sample-constrained pooling design, let $P_c$ denote the maximum number of pools that each sample can participate in. Let $\hat{S}_k$ and $\hat{S}_{2k}$ be the estimated sets of Algorithm~\ref{decode_combinatorial}. Then, we have $\mathbb{P}\{S \subseteq {{\hat S}_k}\}\geq 1-\epsilon$ for arbitrarily small $\epsilon>0$, provided that the number of pooled tests satisfies
    \begin{align}
        m \geq kP_c\left({(n-k)}/{\epsilon}\right)^{\frac{1}{P_c}}.
    \end{align}
    Moreover, we have $\mathbb{P}\{S \subseteq {{\hat S}_{2k}}\}\geq 1-\delta$, where
    \begin{align}
        \delta  = {\left( {{{e\epsilon }}/{{(k + 1)}}} \right)^{k + 1}}.
    \end{align}
\end{theorem}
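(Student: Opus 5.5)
The plan is to show that, with high probability, every negative sample is eliminated in the first stage of Algorithm~\ref{decode_combinatorial}, so that the candidate set satisfies $C=S$. Two facts drive this. First, in the noise-free model with the pool-size and LOD conventions used throughout, every pool containing a positive sample tests positive. Hence no positive is ever removed, and $S\subseteq C$ always holds. Second, if $C=S$ then $|C|=k$, and the sorting step returns $\hat S_k=C=S$ no matter how the frequencies $B_j$ compare or how ties are broken. So it suffices to bound the probability that some negative sample survives elimination.

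\textbf{Step 1 (survival probability of one negative).} Condition on the columns of $\Phi$ indexed by $S$. Let $U\subseteq\{1,\dots,m\}$ be the set of pools containing at least one positive. Each positive column has at most $P_c$ nonzero entries, so $|U|\le kP_c$. Pools outside $U$ are negative, since the measurements are noise-free and these pools contain no positive.

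Fix a negative sample $j\notin S$. It survives, i.e.\ stays in $C$, only if every pool it joins lies in $U$. Its column is generated by $P_c$ independent uniform draws from $\{1,\dots,m\}$, and these draws are independent of the positive columns. Therefore
\begin{align}
q:=\mathbb{P}\{j\in C\mid \Phi_{:,S}\}\le \left(\frac{|U|}{m}\right)^{P_c}\le\left(\frac{kP_c}{m}\right)^{P_c}.
\end{align}
Under the hypothesis $m\ge kP_c\left((n-k)/\epsilon\right)^{1/P_c}$, this gives $q\le \epsilon/(n-k)$.

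\textbf{Step 2 (the $\hat S_k$ claim).} Apply a union bound over the $n-k$ negatives:
\begin{align}
\mathbb{P}\{C\neq S\}\le (n-k)\,q\le\epsilon.
\end{align}
This bound holds conditionally on $\Phi_{:,S}$, hence also unconditionally. Combined with $S\subseteq C$, it yields $\mathbb{P}\{S\subseteq\hat S_k\}\ge 1-\epsilon$.

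\textbf{Step 3 (the $\hat S_{2k}$ claim).} The key observation is that $S\not\subseteq\hat S_{2k}$ requires $|C|>2k$. Indeed, if $|C|\le 2k$, the algorithm sets $\hat S_{2k}=C\supseteq S$. Since $S\subseteq C$, the condition $|C|>2k$ means at least $k+1$ negatives survive.

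Conditional on $\Phi_{:,S}$, the survival events of distinct negatives are independent, because their columns are independent draws and $U$ is fixed. Each has probability at most $q$. Taking a union bound over $(k+1)$-subsets and using $\binom{N}{r}\le (eN/r)^r$ gives
\begin{align}
\mathbb{P}\{|C\setminus S|\ge k+1\}\le\binom{n-k}{k+1}q^{k+1}\le\left(\frac{e(n-k)q}{k+1}\right)^{k+1}\le\left(\frac{e\epsilon}{k+1}\right)^{k+1},
\end{align}
which is the stated $\delta$.

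\textbf{Main obstacle.} The delicate point is the modeling assumption that every pool containing a positive is declared positive despite dilution, together with the conditional-independence argument. I would handle the first by stating explicitly that $P_c$ and the pool sizes are such that diluted positives remain above the LOD, in line with the noise-free assumption. I would handle the second by conditioning on the positive columns, so that $U$ becomes deterministic while the negative columns stay i.i.d.

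Under the sampling scheme of this design, a column with repeated draws has fewer than $P_c$ distinct entries. This only makes it easier for a negative to hit a negative pool, and the estimate in Step 1 already accounts for it, since it bounds the probability that all $P_c$ draws (with repetition) land in $U$.
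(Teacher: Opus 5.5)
Your proof is correct. Its skeleton matches the paper's: a per-negative bound $(kP_c/m)^{P_c}$ coming from $|U|\le kP_c$, a union bound over the $n-k$ negatives for $\hat S_k$, and a union bound over $(k+1)$-subsets with $\binom{N}{r}\le (eN/r)^r$ for $\hat S_{2k}$.

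Where you differ is the failure event. The paper reduces a Top-k failure to ``some $j\in C\setminus S$ has $B_j^-=P_c$''. It evaluates this as the probability that the $P_c$ draws are distinct, times the hypergeometric ratio $\binom{kP_c}{P_c}/\binom{m}{P_c}$. You instead reduce failure to $C\neq S$, whose per-negative probability given $\Phi_{:,S}$ is exactly $(|U|/m)^{P_c}$.

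Your event is the safer choice. If a positive column contains a repeated draw, then $B_i^+<P_c$, and a surviving negative with fewer than $P_c$ distinct pools can tie with or outrank it. So the paper's implication can fail. By contrast, $C\neq S$ contains every failure, and your with-repetition computation absorbs collisions at no cost in the final bound.

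In the same way, the branch $|C|\le 2k\Rightarrow\hat S_{2k}=C$ makes precise why a Top-2k failure needs $k+1$ surviving negatives. Conditioning on $\Phi_{:,S}$ justifies the product $q^{k+1}$, which the paper writes without comment. Your route also makes clear that this guarantee concerns the elimination stage alone: the ranking by $B_j$ plays no role in the bound. Both arguments need every pool containing a positive to test positive. The paper uses this tacitly, so stating it explicitly is the right call.

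One correction to your closing remark: repeated draws make a negative slightly \emph{more} likely to survive, not less, since $(|U|/m)^{P_c}\ge\binom{|U|}{P_c}/\binom{m}{P_c}$. This is harmless precisely because Step 1 computes the with-repetition probability.
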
}

The proof of Theorem~\ref{thm:sample_constrained} is provided in Appendix~\ref{pf:sample_constrained}.

\textit{Optimal $P_c$ for Sample-Constrained Pooling Design:} To reach the minimum of $m$, we can obtain the approximate value of optimal $P_c$ as follows
\begin{align}
    {P_{c}^*} \approx \left\lceil \log\left((n-k)/\epsilon\right) \right\rceil.
\end{align}

\begin{remark}\label{theorem_remark}
To derive the bounds on the number of pooled tests $m$ in Theorems \ref{thm:unconstrained}-\ref{thm:sample_constrained}, we apply union bounds and several relaxations, which tend to overestimate the actual number of pooled tests required for a given success rate. In fact, numerical experiments show that exact recovery can be achieved with fewer pooled tests than those predicted by the theorems.

Additionally, under the same number of pooled tests, the Top-2k strategy achieves a higher success rate than the Top-k strategy. Specifically, for the unconstrained and dilution-constrained pooling designs, the ratio ${\delta}/{\epsilon}$ satisfies
\begin{align}
    {\delta}/{\epsilon}  = \epsilon^k {\left( {{{e}}/{{{(k + 1)}}}} \right)^{k + 1}} + {k}/{n} \approx 0,
\end{align}
since $k\ll n$. For the sample-constrained pooling design, the ratio is given by
\begin{align}
    {\delta}/{\epsilon}=\epsilon^k \left({e}/{(k+1)}\right)^{k+1} \approx 0.
\end{align}
In both cases, the ratio ${\delta}/{\epsilon}$ decays rapidly as $k$ increases, indicating that $\delta\approx 0$ and the success rate of Top-2k strategy is almost 1. Thus, expanding the estimated set substantially improves success rate at the cost of $k$ additional tests.
\end{remark}

\subsection{Conjectures on Tighter Bounds}
As noted in Remark~\ref{theorem_remark}, the rigorously theoretical bounds on $m$ in Theorems \ref{thm:unconstrained}-\ref{thm:sample_constrained} are generally conservative. There might be tighter bounds for the unconstrained and dilution-constrained pooling designs. Let $\epsilon\in\left(0,1\right)$ denote the target failure probability of the Top-k strategy. For sufficiently large $n$, the number of pooled tests $m$ scales as follows.
\begin{conjecture}\label{unconstrained_conjecture} 
    In the unconstrained pooling design, let $p$ denote the probability of $\Phi_{i,j}=1$. Let $\hat{S}_k$ be the estimated set of Algorithm~\ref{decode_combinatorial}. We have $\mathbb{P}\{S \subseteq {{\hat S}_k}\}\geq 1-\epsilon$ for arbitrarily small $\epsilon>0$, provided that
    \begin{align}
        m = \mathcal{O}\left(\frac{{\log \left( {{\left( {n - k} \right)}/\epsilon } \right)}}{{-\log\left(1 - {p}{{\left({1 - p} \right)}^{k}}\right)}}\right).
    \end{align}
\end{conjecture}
\begin{conjecture}\label{dilution_constrained_conjecture}
    In the dilution-constrained pooling design, let $P_r$ denote the pool size. Let $\hat{S}_k$ be the estimated set of Algorithm~\ref{decode_combinatorial}. We have $\mathbb{P}\{S \subseteq {{\hat S}_k}\}\geq 1-\epsilon$ for arbitrarily small $\epsilon>0$, provided that
    \begin{align}
        m = \mathcal{O}\left( {\frac{{\log \left( {{\left( {n - k} \right)}/\epsilon } \right)}}{{ - \log \left( {1 - \left(P_{r}/n\right) \cdot \left(\binom{n-k-1}{P_{r}-1}/\binom{n-1}{P_r-1}\right)} \right)}}} \right).
    \end{align}
\end{conjecture}

The above conjectures are supported by our experimental results, whereas rigorous proofs are left for future research.

\section{Results}\label{sec:results}
In this section, we comprehensively evaluate the proposed method. Experiments are conducted under various configurations of population size $n\in\{5000, 10000\}$, number of infections $k$, where ${k}/{n}$ ranges from 0.02\% to 10\%. The parameters of pooling designs are given as follows. In the unconstrained design, the pooling parameter $p$ takes values in $\{2^{-10}, 2^{-9}, \ldots ,2^{-1}\}$. In the dilution-constrained design, $P_r$ is chosen on a logarithmic scale from $\frac{n}{2^{10}}$ to $\frac{n}{2^1}$. In the sample-constrained design, the maximum number of pools per sample $P_c$ ranges from 2 to 55. For each parameter combination $\left( {n,k} \right)$ and the corresponding pooling parameter ($p$, $P_r$ or $P_c$), a series of $m$ (ranging from 1 to $n$) are simulated over 200 trials. 

In each trial, the pooling matrix $\Phi$ is generated randomly according to the specified pooling design. For the unconstrained design, each element $\Phi_{i,j}$ equals to 1 with probability $p$. For the dilution-constrained design, each row of $\Phi$ has exactly $P_{r}$ ones. For the sample-constrained design, each column of $\Phi$ has at most $P_{c}$ ones. Considering that viral load varies by many orders of magnitudes across individuals \cite{cleary2021using}, the vector $x$ is generated with nonzero entries drawn uniformly from $[10^2, 10^6]$ following \cite{ghosh2021compressed, bharadwaja2022recovery}, where ${\left\| x \right\|_0} = k$. All samples are assumed to be pooled in equal volumes and the binary measurement vector $\tilde y$ is generated according to Eq.(\ref{eq:binary_measurement}), where LOD is set to 1 virus copy/ml. A trial is successful if all positives are identified. The recovery success rate is the fraction of successful trials. 

\subsection{Near-Perfect Recovery with Minimal Pooled Tests}
To demonstrate the effectiveness of our method, we evaluate the performance under three pooling designs. We consider two large-scale scenarios: $n=5000$, $k=10$ and $n=10000$, $k=20$. Fig.~\ref{fig:combinatorial_success_rate_optimal_p} shows the recovery success rate as a function of the number of pooled tests $m$, and Table~\ref{tab:combinatorial_recovery_performance} summarizes the recovery performance under different pooling designs and sample selection strategies. The results demonstrate that LoSc achieves exact recovery with remarkably few tests across all pooling designs. For $n=5000$ and $k=10$, the Top-2k strategy attains over $99.5\%$ recovery with only $m=216$ pooled tests (less than $4.4\%$ of the population). Similarly, for $n=10000$ and $k=20$, near-perfect recovery is achieved with $m=420$ pooled tests. Among the three pooling designs, the sample-constrained design slightly outperforms the others, highlighting its structural advantage, which ensures each sample is assigned to at least one pool. These results confirm the ability of LoSc to identify all infected individuals with far fewer tests than the population size.
\begin{figure}[!t]
    \centering
    \includegraphics[width=1\linewidth]{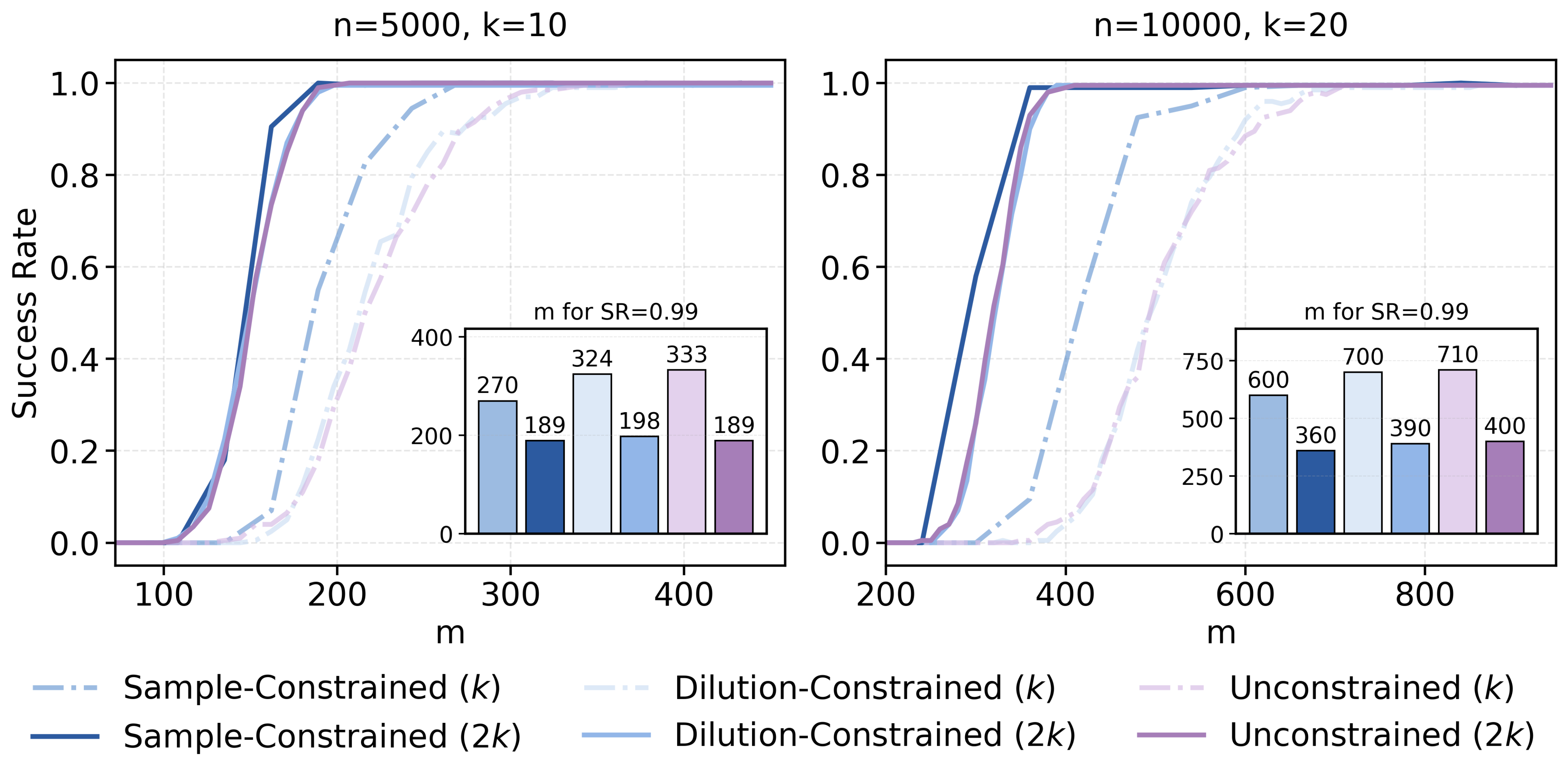}
    \caption{\textbf{ Recovery performance of LoSc under optimal pooling designs.} Recovery success rate as a function of $m$ under three pooling designs, each evaluated at its optimal setup. For $n=5000$ and $k=10$, the optimal pooling parameters are $p=0.0625$, $P_r=312$ and $P_c=19$. For $n=10000$, $k=20$, the optimal pooling parameters are $p=0.0312$, $P_r=312$ and $P_c=19$. Insets present minimum number of pooled tests required for a $99\%$ success rate.}
    \vspace{-0.6cm}
    \label{fig:combinatorial_success_rate_optimal_p}
\end{figure}

\begin{table*}[!t]
\caption{Recovery success rate of LoSc under optimal pooling designs}\label{tab:combinatorial_recovery_performance}
\resizebox{\textwidth}{!}{
\begin{tabular}{cccccccccccccc}
\hline
& \multicolumn{6}{c}{\boldmath{$n=5000$, $k=10$}}    &  & \multicolumn{6}{c}{\boldmath{$n=10000$, $k=20$}}                                             \\ \cline{2-7} \cline{9-14}
& \multicolumn{2}{c}{\textbf{Unconstrained}} & \multicolumn{2}{c}{\textbf{Dilution-Constrained}} & \multicolumn{2}{c}{\textbf{Sample-Constrained}} & & \multicolumn{2}{c}{\textbf{Unconstrained}}                & \multicolumn{2}{c}{\textbf{Dilution-Constrained}}    & \multicolumn{2}{c}{\textbf{Sample-Constrained}}    \\
\cline{2-3} \cline{4-5} \cline{6-7} \cline{9-10} \cline{11-12} \cline{13-14}
\boldmath{$m$}& \multicolumn{1}{c}{\textbf{Top-k}} & \multicolumn{1}{c}{\textbf{Top-2k}} & \multicolumn{1}{c}{\textbf{Top-k}} & \multicolumn{1}{c}{\textbf{Top-2k}} & \multicolumn{1}{c}{\textbf{Top-k}} & \multicolumn{1}{c}{\textbf{Top-2k}}  & \boldmath{$m$} & \multicolumn{1}{c}{\textbf{Top-k}} & \multicolumn{1}{c}{\textbf{Top-2k}} & \multicolumn{1}{c}{\textbf{Top-k}} & \multicolumn{1}{c}{\textbf{Top-2k}} & \multicolumn{1}{c}{\textbf{Top-k}} & \multicolumn{1}{c}{\textbf{Top-2k}} \\ \hline
162                                 &$4.0\%$                                                &$73.5\%$                                             &$2.5\%$                                           &$74.0\%$                                                 &$7.0\%$                                             &$90.5\%$                                           
&300 
&0&$26.0\%$
&0&$26.5\%$
&0&$58.0\%$\\
189                                 &$18.0\%$                                                 &\boldmath{$99.0\%$}                                             &$22.5\%$                                           &$98.0\%$                                                 &$55.0\%$                                             &\boldmath{$100.0\%$}                                       
&360
&$0.5\%$&$93.0\%$
&0&$90.0\%$
&$9.5\%$&\boldmath{$99.0\%$}\\
216                                 &$50.0\%$                                                 &$100.0\%$                                             &$54.5\%$                                           &\boldmath{$99.5\%$}                                                 &$82.5\%$                                             &$99.5\%$                                           
&420
&$9.5\%$&\boldmath{$99.5\%$}
&$8.0\%$&\boldmath{$99.5\%$}
&$54.0\%$&$99.0\%$\\
243                                 &$71.5\%$                                                 &$100.0\%$                                             &$79.5\%$                                           &$99.5\%$                                                 &$94.5\%$                                             &$100.0\%$                                           
&480
&$36.0\%$&$99.5\%$
&$42.0\%$&$99.5\%$
&$92.5\%$&$99.0\%$\\
270                                 &$89.5\%$                                                 &$100.0\%$                                             &$89.0\%$                                           &$99.5\%$                                                 &\boldmath{$100.0\%$}                                             &$100.0\%$                                           
&540
&$72.0\%$&$99.5\%$
&$74.0\%$&$99.5\%$
&$95.0\%$&$99.0\%$\\
297                                   &$96.5\%$                                                 &$100.0\%$                                            &$95.5\%$                                           &$99.5\%$                                                 &$100.0\%$                                             &$100.0\%$                                           
&600
&$88.5\%$&$99.5\%$
&$92.0\%$&$99.5\%$
&\boldmath{$99.0\%$}&$99.5\%$\\
324                                   &$98.5\%$                                               &$100.0\%$                                             &\boldmath{$99.0\%$}                                           &$99.5\%$                                                 &$100.0\%$                                             &$100.0\%$                                           
&660
&$96.0\%$&$99.5\%$
&$97.5\%$&$99.5\%$
&$99.5\%$&$99.5\%$\\
351                                   &\boldmath{$99.5\%$}                                                 &$100.0\%$                                             &$99.0\%$                                           &
$99.5\%$                                                 &$99.5\%$                                             &$99.5\%$                                           
&720
&\boldmath{$99.5\%$}&$99.5\%$
&\boldmath{$99.0\%$}&$99.5\%$
&$99.5\%$&$99.5\%$ \\\hline
\end{tabular}}

\vspace{6pt} 
\small
\noindent For $n=5000$ and $k=10$, the optimal pooling parameters for three pooling designs are $p=0.0625$, $P_r=312$ and $P_c=19$, respectively. For $n=10000$ and $k=20$, the optimal parameters are $p=0.0312$, $P_r=312$ and $P_c=19$, respectively. For each design and sampling strategy, the minimum number of pooled tests at which the success rate reaches 99\% is marked in bold.
\vspace{-0.2cm}
\end{table*}

\begin{figure*}[!ht]
    \centering
    \captionsetup{position=below,justification=centering,singlelinecheck=false}
    \subcaptionbox{Dilution-constrained setting}[0.88\textwidth]{    
        \includegraphics[width=0.88\textwidth]{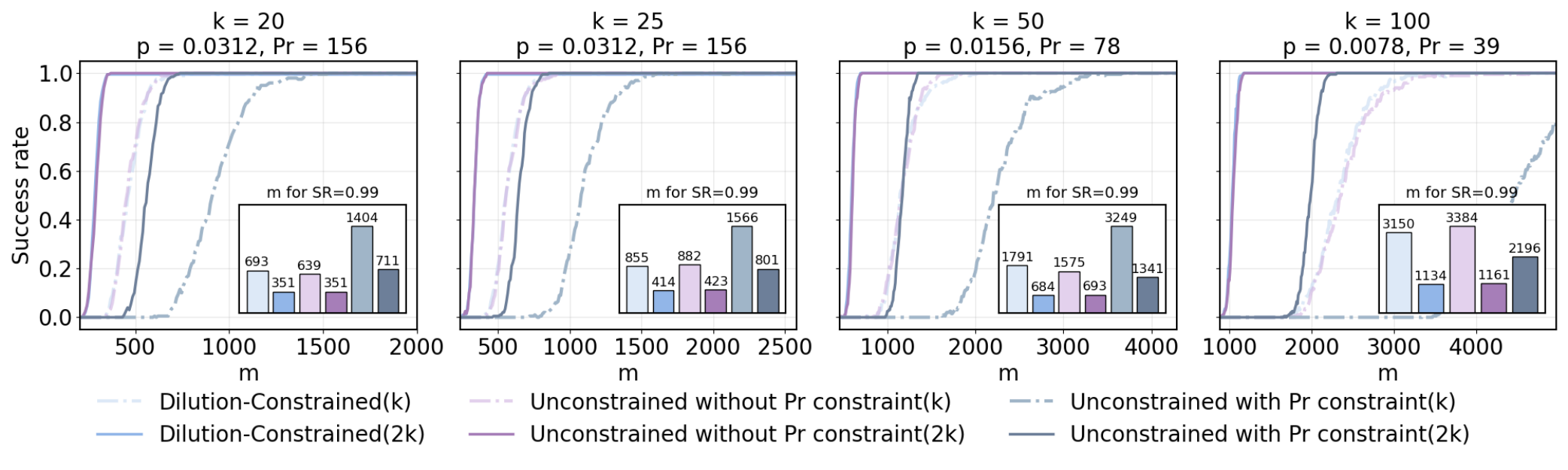}
        }
    \subcaptionbox{Sample-constrained setting}[0.88\textwidth]{
        \includegraphics[width=0.88\textwidth]{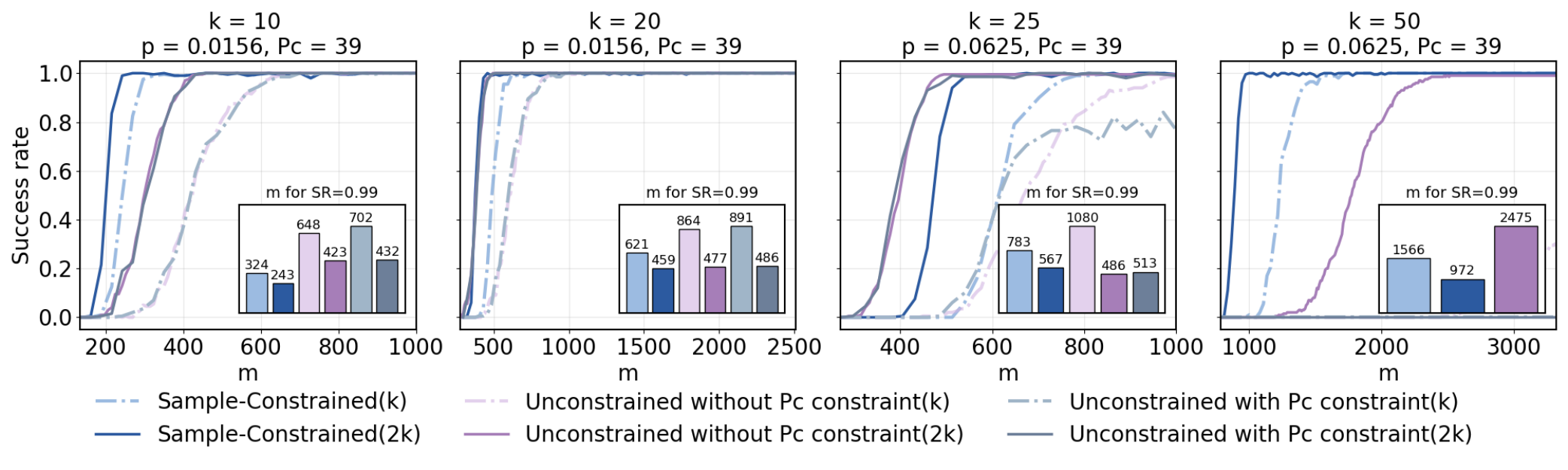}
        }
    \captionsetup{position=below,justification=justified,singlelinecheck=false}
    \caption{\textbf{Performance comparison of constrained and unconstrained designs.} Recovery success rate versus the number of pooled tests $m$ for $n=5000$ under (a) dilution-constrained and (b) sample-constrained settings. Three pooling designs are compared: the unconstrained design, the constrained design, and the unconstrained design with imposed constraints. Insets present minimum number of pooled tests required for a $99\%$ success rate.}
    \vspace{-0.6cm}
    \label{fig:combinatorial_success_rate_optimal_p_constraint_compare}
\end{figure*}

A key advantage of LoSc is the substantial performance improvement achieved by the Top-2k strategy with only a slight increase in individual tests. This over-selection strategy expands the estimated set by $k$ additional candidates and requires at most $k$ extra individual tests, yet delivers a substantial improvement in the recovery success rate. For example, when $n=5000$, $k=10$ and $m=189$, the Top-2k strategy achieves over 98\% recovery for all pooling designs, whereas the Top-k strategy reaches only $18\%$ under the unconstrained design, $22.5\%$ under the dilution-constrained design, and $55\%$ under the sample-constrained design. In the scenario with $n=10000$, $k=20$, the success rate under the Top-2k strategy increases from 0.5\% to 93\% at $m=360$ under the unconstrained design, requiring only $k=20$ additional individual tests. Similar performance improvements are observed for both dilution-constrained and sample-constrained designs.

\begin{figure*}[!ht]
\centering
    \captionsetup{position=below,justification=centering,singlelinecheck=false}
    \subcaptionbox{Comparison of testing efficiency between LoSc and combinatorial methods}[0.88\textwidth]{
        \includegraphics[width=0.88\textwidth]{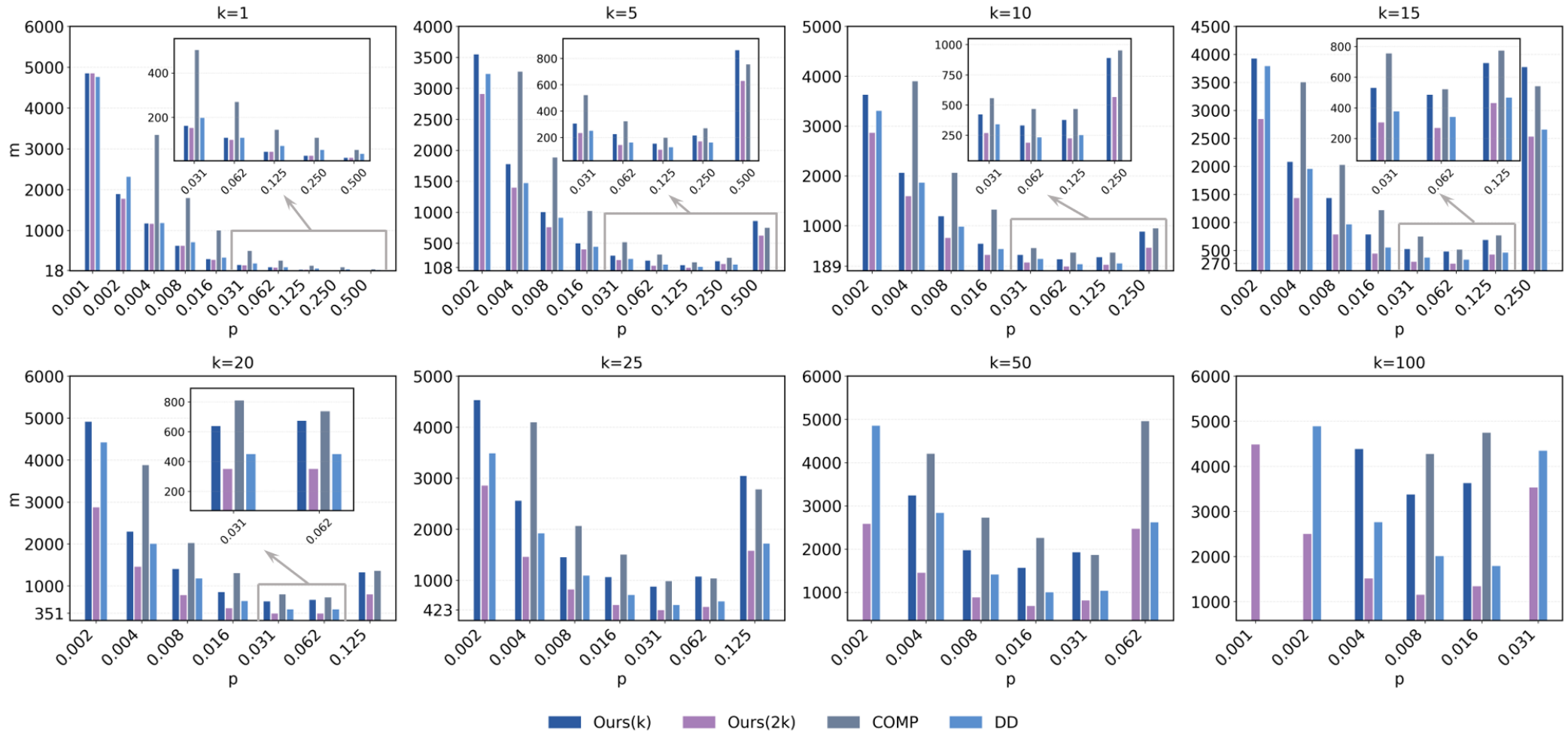}
        }
    \subcaptionbox{Comparison of computational cost between LoSc and CS-based method}[0.85\textwidth]{
        \includegraphics[width=0.85\textwidth]{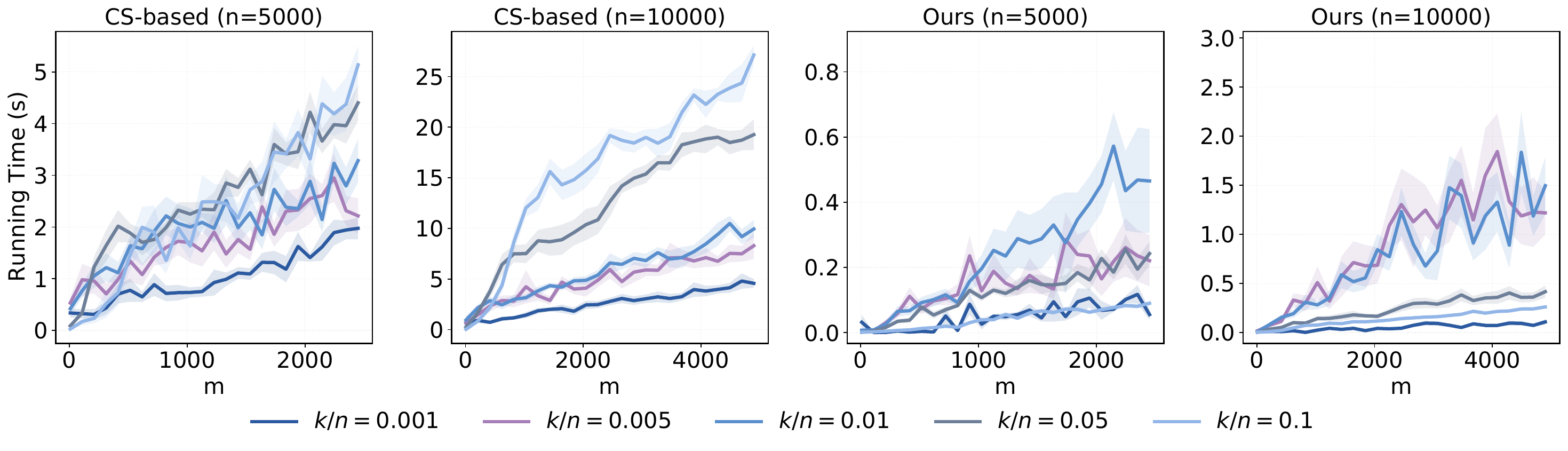}
        }
    \captionsetup{position=below,justification=justified,singlelinecheck=false}
\caption{\textbf{Comparison of decoding algorithms.} (a) Minimum number of pooled tests required for a $99\%$ success rate versus pooling parameter $p$ for LoSc under Top-k and Top-2k strategies, COMP, DD. Each panel corresponds to a different value of $k$ and $n=5000$. (b) The running times are evaluated across different values of $n$ and $k$ under the unconstrained design with $p=0.25$. The shadows near each curve represent the $90\%$ confidence interval.}
\vspace{-0.6cm}
\label{fig:algorithm_compare}
\end{figure*}

\subsection{Performance Advantage of Constrained Pooling Designs under Practical Testing Limitations}
To demonstrate the effectiveness of the proposed constrained pooling designs under practical laboratory conditions, we compare the unconstrained design with dilution-constrained and sample-constrained designs in realistic laboratory conditions. Specifically, accounting for dilution effects, pools with size exceeding $P_r$ are deemed invalid and excluded from the decoding process. Similarly, under the sample usage constraint, each specimen is assigned to at most $P_c$ pools, with any excess allocations removed. As shown in Fig.~\ref{fig:combinatorial_success_rate_optimal_p_constraint_compare}, directly imposing practical constraints on the unconstrained design degrades recovery performance, whereas pooling designs that explicitly incorporate dilution or sample usage constraints achieve superior and more robust performance. For example, when dilution effects are taken into account, the unconstrained design with imposed constraint typically requires nearly twice the number of pooled tests to achieve comparable recovery performance. These results highlight the necessity of constraint-aware pooling designs to ensure both practical feasibility and reliable detection in large-scale screening applications.

\subsection{Test Efficiency and Computational Advantages of LoSc over CS-based and Combinatorial Methods} 
LoSc achieves high testing efficiency with a near-minimal number of pooled tests, while substantially reducing computational cost. We first evaluate the testing efficiency of LoSc in comparison with combinatorial methods. Fig.~\ref{fig:algorithm_compare}(a) presents the minimum number of pooled tests required for a $99\%$ success rate under the unconstrained design. The results show that LoSc under the Top-2k strategy consistently requires the fewest pooled tests to reach the target success rate. Under the Top-k strategy, LoSc exhibits performance comparable to DD for small values of $k$ (e.g., $k\leq 20$). COMP performs the worst and requires substantially more tests. The performance advantage becomes more pronounced as $k$ increases: in some cases, COMP or DD needs up to twice as many tests as LoSc (under the Top-2k strategy), and COMP becomes ineffective when $k>100$ ($n=5000$). COMP only uses the information of negative pools, leading to an overly conservative estimated set, while DD confirms a sample as positive only when it is the sole member of a positive pool, demanding more tests to avoid missing. In contrast, our algorithm prioritizes samples according to their frequencies of occurrence in positive pools and incorporates an over-selection strategy, enabling identification of all positives with significantly fewer tests.
\begin{table*}[!ht]
\caption{Experimentally optimal parameters for pooling designs in LoSc ($n=5000$)}\label{tab:combinatorial_optimal pooling design_n5000}
\centering
\begin{tabular}{ccccccccccccc}
\hline
& \multicolumn{4}{c}{\textbf{Unconstrained}}                                                             & \multicolumn{4}{c}{\textbf{Dilution-Constrained}}                                                            & \multicolumn{4}{c}{\textbf{Sample-Constrained}}                                                         \\ \cline{2-5} \cline{6-9} \cline{10-13} 
                        & \multicolumn{2}{c}{\textbf{Top-k}}                   & \multicolumn{2}{c}{\textbf{Top-2k}}                  & \multicolumn{2}{c}{\textbf{Top-k}}                   & \multicolumn{2}{c}{\textbf{Top-2k}}                  & \multicolumn{2}{c}{\textbf{Top-k}}                   & \multicolumn{2}{c}{\textbf{Top-2k}}                  \\ \cline{2-3} \cline{4-5} \cline{6-7} \cline{8-9} \cline{10-11} \cline{12-13} 
                        \boldmath{$k$} & \boldmath{$p^*$} & \boldmath{$m^*$} & \boldmath{$p^*$} & \boldmath{$m^*$} & \boldmath{$P^*_r$} & \boldmath{$m^*$} & \boldmath{$P^*_r$} & \boldmath{$m^*$} & \boldmath{$P^*_c$} & \boldmath{$m^*$} & \boldmath{$P^*_c$} & \boldmath{$m^*$} \\ \hline
1                & 0.5000                     & \textbf{18}                      & 0.5000                     & \underline{18}                     & 2500                     & 18                     & 2500                    & 18                     &  6                    & 27                     &  6                    & 27                     \\
5                 &  0.1250                    &  \textbf{153}                    & 0.1250                     &  \underline{108}                    & 625                     & 162                     &  625                    &  117                    &  13
                    & 162                     &  6                    & 108                     \\
10                 &  0.0625
                    & 333                     & 0.0625
                     &  189                    &  312
                    & 324                     & 312
                     &  198                    &  19
                    & \textbf{270}                     & 6
                     &  \underline{162}                    \\
15                 & 0.0625
                     & 486                     & 0.0625
                     & 270                     & 312
                     & 459                     & 312
                     & 261                     & 13                     & \textbf{405}                     & 9
                     & \underline{216}                     \\
20                 & 0.0313
                     & 639                     & 0.0313
                     & 351                     &  156
                    &  693                    & 156
                     & 351                     & 27
                     & \textbf{540}                     &   6
                   & \underline{270}                     \\
25                 & 0.0313
                     & 882                     & 0.0313
                     & 423                    & 156
                     & 855                     & 156
                     & 414                     & 13
                     & \textbf{675}                    & 9
                     & \underline{324}                    \\
50                   &  0.0156
                    &  1575                    & 0.0156
                     &  693                    & 78
                     &  1791                    & 78
                     &  684                    & 13
                     & \textbf{1350}                     & 6                     & \underline{540}                     \\
100                   & 0.0078
                     & 3384                     & 0.0078
                     & 1161                     & 39
                     & 3150                     & 39
                     & 1134                     & 13
                     & \textbf{2619}                     & 6
                     & \underline{891}                     \\
150                   & -                     & -                     & 0.0078                     & 1602                     & -                     & -                     & 39                     & 1593                     & 19                     & \textbf{3942}                     & 4                     & \underline{1188}                     \\
200                   & -                     & -                     & 0.0039                     & 1908                     & -                     & -                     & 19                    & 1863                     & -                     & -                     & 4                     & \underline{1458}                     \\
250                   & -                     & -                     & 0.0039                     & 2142                     & -                     & -                     & 19                      & 2052                     & -                     & -                     & 4                     & \underline{1620}                     \\
500                  & -                     & -                     & 0.0020                     & 3150                     & -                     & -                     & 9                     & 3024                     & -                     & -                     & 3                     & \underline{2403}                     \\ \hline
\end{tabular}

\vspace{6pt}
\begin{minipage}{\textwidth}
\small\justifying
\noindent Each design is evaluated at its experimentally optimal pooling parameter ($p^*$, $P^*_r$ or $P^*_c$). Here, $m^*$ denotes the minimum number of pooled tests required for a 99\% recovery success rate. Optimal values of $m^*$ under the Top-k strategy are marked in bold, and optimal values of $m^*$ under the Top-2k strategy are underlined. Some results are not reported as the number of tests required exceeds the population size.
\end{minipage}
\vspace{-0.2cm}
\end{table*}

Beyond its superior testing efficiency compared with combinatorial methods, LoSc also provides a substantial computational advantage over the CS-based method. Our decoding algorithm has a computational complexity of $\mathcal{O}(km)$, while the algorithm (OMP) used in the CS-based method requires $\mathcal{O}(k^3 + 2mk^2 + nm + mk)$ operations \cite{jhang2016high}, leading to an approximately $\frac{n}{k}$-fold increase in computational cost. Thus, the computational advantage of LoSc becomes more pronounced as $n$ increases, particularly in large-scale screening applications. We further compare the running time of two methods across different values of $n$ and $k$ under the unconstrained design (Fig.~\ref{fig:algorithm_compare}(b)). Experiments are conducted on a Linux server equipped with an Intel Xeon Gold 6226R CPU (32 cores). For a fixed $n$, LoSc exhibits a significant speed advantage, achieving nearly $10\times$ faster inference than the CS-based method. For example, when $n=5000$, $k=50$ and $m=2450$, the CS-based method takes 4.07 seconds versus 0.46 seconds for LoSc. When $n=10000$, $k=100$ and $m=2454$, the CS-based method requires 10.30 seconds compared to only 1.26 seconds for LoSc. This speedup is critical for large-scale screening with limited computational resources. 

Additionally, we compare the recovery performance of LoSc and the CS-based method in terms of the number of pooled tests required for a $99\%$ success rate under their respective optimal pooling parameters. Results for LoSc under three pooling designs are summarized in Table~\ref{tab:combinatorial_optimal pooling design_n5000} and Table~\ref{tab:combinatorial_optimal pooling design_n10000} in the Supplementary Material, whereas the CS-based method is evaluated under the unconstrained design and results are summarized in Table~\ref{tab:baseline_optimal pooling design}. These results show that, although the CS-based method requires slightly fewer tests than LoSc, the number of assays required by LoSc is already very small. For example, when $n=10000$ and $k=2$, the CS-based method identifies all positives using $m=32$ pooled tests without follow-up testing, whereas LoSc under the Top-2k strategy requires $m=50$ pooled tests and $2k=4$ individual tests, resulting in only 22 additional assays. LoSc achieves comparable performance with significantly lower computational cost, making it suitable for practical high-throughput deployment.
\begin{table}[!ht]
\caption{Experimentally optimal parameters for unconstrained design in CS-based method}\label{tab:baseline_optimal pooling design}
\centering
\begin{tabular}{cccccc}
\hline
& \multicolumn{2}{c}{\boldmath{$n=5000$}}     &                                        & \multicolumn{2}{c}{\boldmath{$n=10000$}}                                             \\ \cline{2-3} \cline{5-6}
\boldmath{$k$} & \boldmath{$p^*$} & \boldmath{$m^*$} & \boldmath{$k$}& \boldmath{$p^*$}  & \boldmath{$m^*$} \\ \hline
1                                &0.5000                                                 &12                                             &2                                           &0.2240                                                 &32                                             \\
5                                 &0.1072                                                 &67                                             &10                                           &0.0498                                                 &155                                            \\
10                                 &0.0698                                                 &108                                             &20                                           &0.0338                                                 &254                                           \\
15                                 &0.0476                                                 &180                                             &30                                           &0.0284                                                 &318                                           \\
20                                 &0.0365                                                 &231                                             &40                                           &0.0222                                                 &415                                          \\
25                                 &0.0298                                                 &288                                             &50                                           &0.0186                                                 &501                                            \\
50                                   &0.0200                                                 &485                                             &100                                           &0.0110                                                 &946                                         \\
100                                   &0.0111                                                 &924                                             &200                                           &0.0064                                                 &1778                                           \\
150                                   &0.0085                                                 &1326                                             &300                                           &-                                                 &-                                         \\
200                                   &0.0065                                                 &1722                                             &400                                           &-                                                 &-                                          \\
250                                   &0.0053                                                 &2209                                             &500                                           &-                                                 &-                                         \\
500                                  &0.0043                                                 &4334                                             &1000                                           &-                                                 &-                                          \\ \hline
\end{tabular}

\vspace{6pt}
\begin{minipage}{\linewidth}
\small\justifying
\noindent Here, $m^*$ denotes the minimum number of pooled tests required for a $99\%$ success rate. Some results for $n=10000$ are not reported due to high computational complexity.
\end{minipage}
\vspace{-0.6cm}
\end{table}

\begin{figure}[!t]
    \centering
    \captionsetup{position=below,justification=centering,singlelinecheck=false}
    \subcaptionbox{Unconstrained design}[1\linewidth]{    
        \includegraphics[width=1\linewidth]{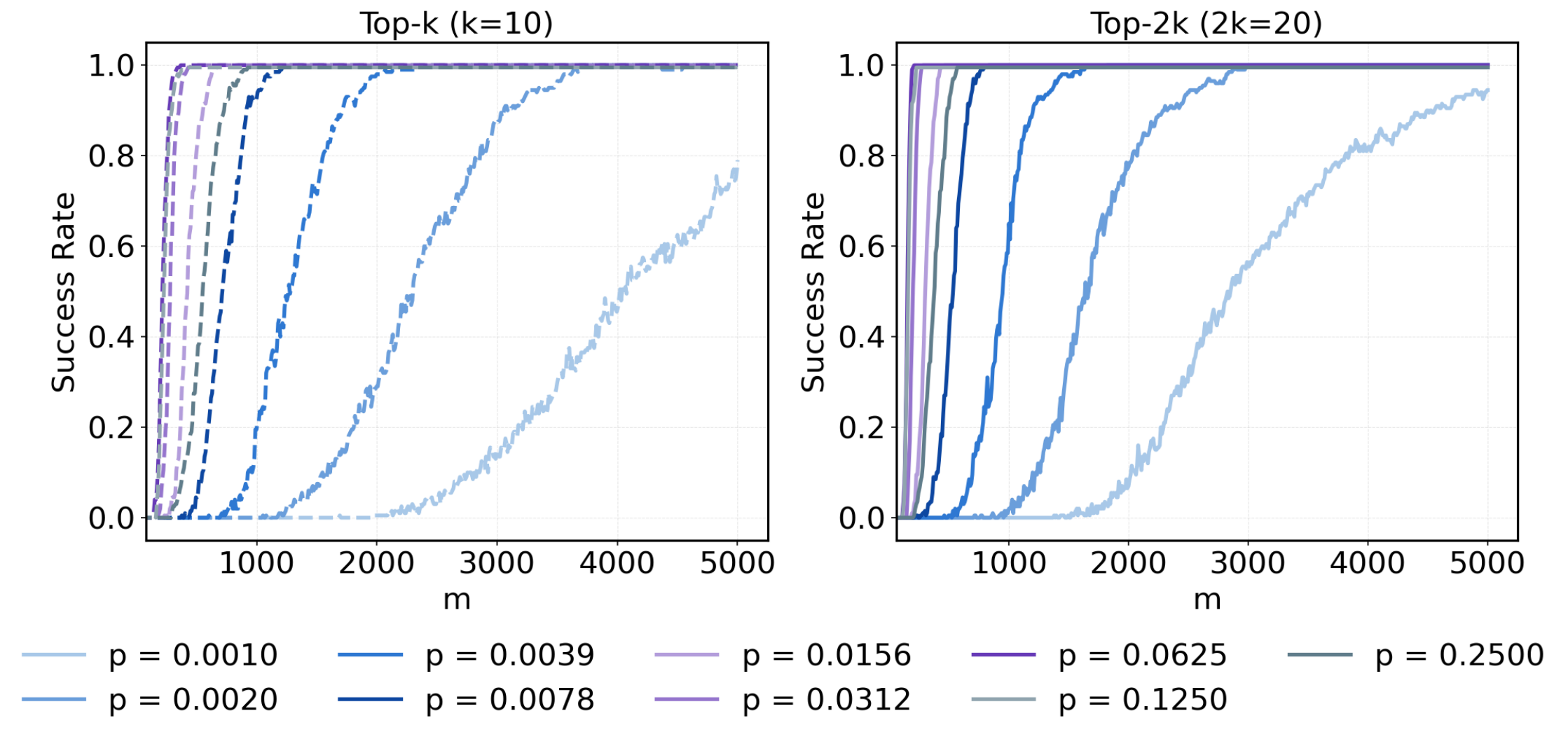}
        }
    \quad
    \subcaptionbox{Dilution-constrained design}[1\linewidth]{
        \includegraphics[width=1\linewidth]{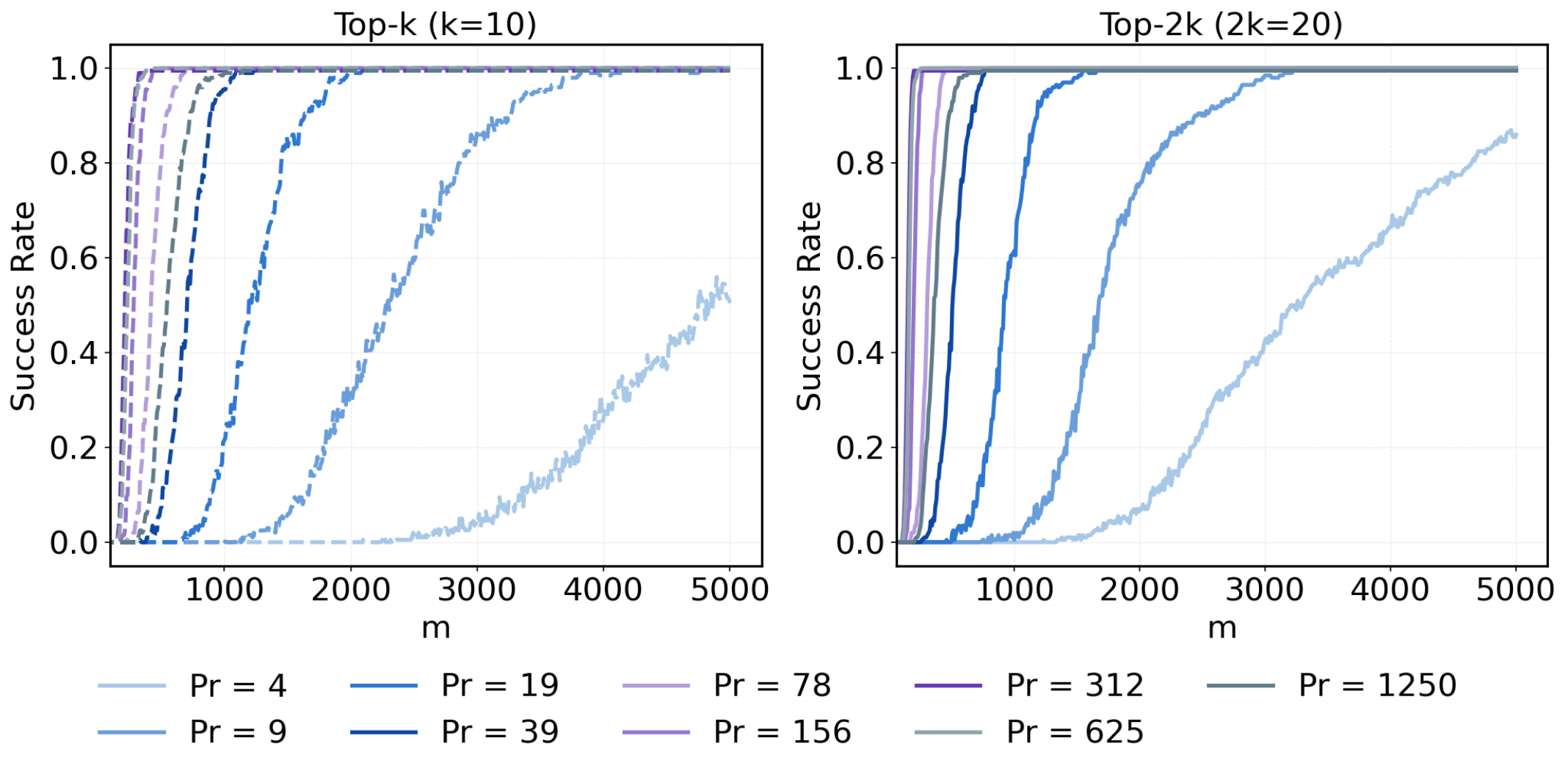}
        }
    \quad
    \subcaptionbox{Sample-constrained design}[1\linewidth]{
        \includegraphics[width=1\linewidth]{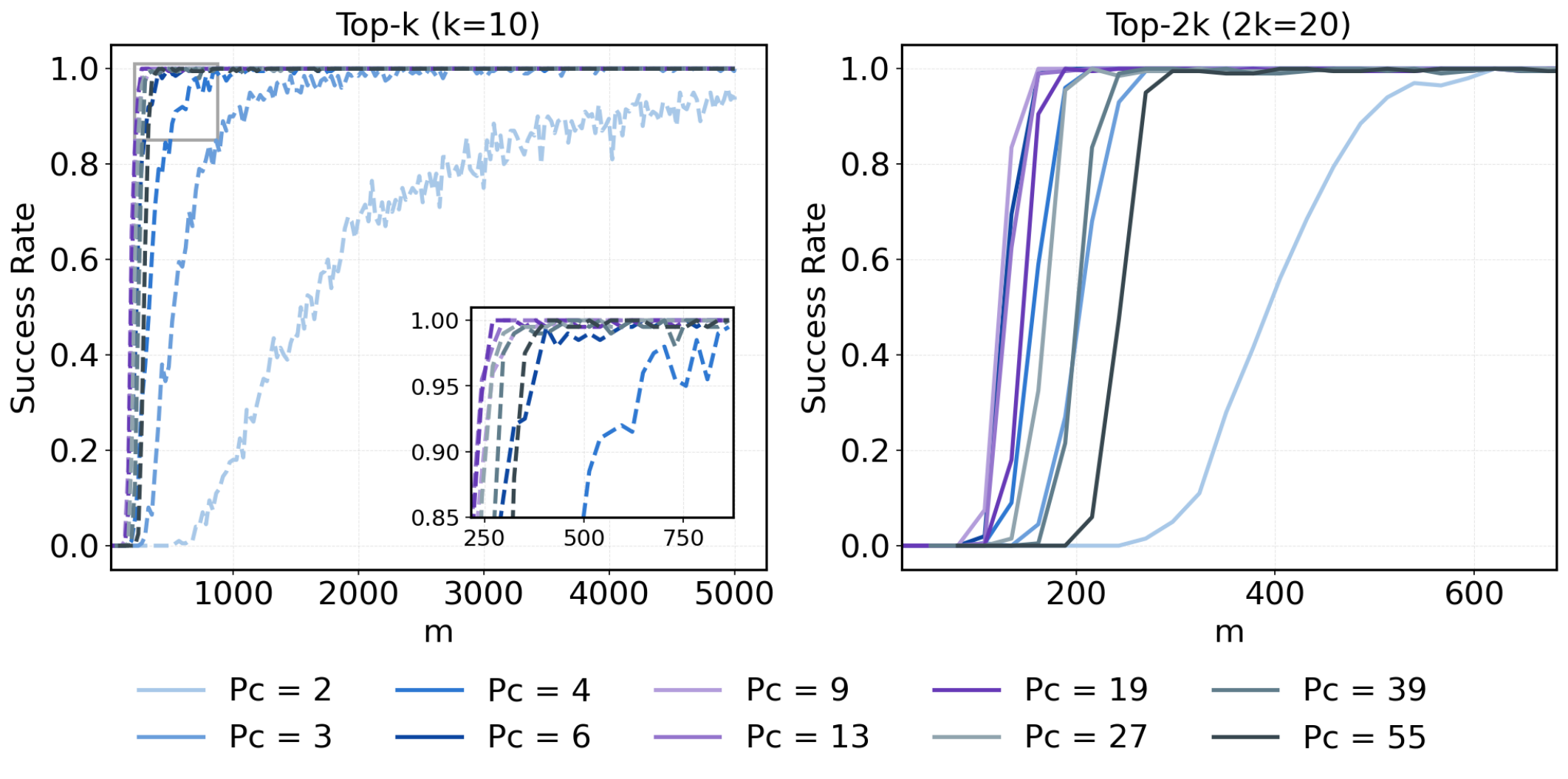}
        }
    \captionsetup{position=below,justification=justified,singlelinecheck=false}
    \caption{\textbf{Impact of pooling parameters on recovery performance.} Recovery success rate as a function of $m$ under different pooling parameters when $n=5000$ and $k=10$.}
    \vspace{-0.6cm}
    \label{fig:combinatorial_success_rate_n5000_kr0.002}
\end{figure}

\begin{figure*}[!ht]
\centering
\includegraphics[width=0.88\textwidth]{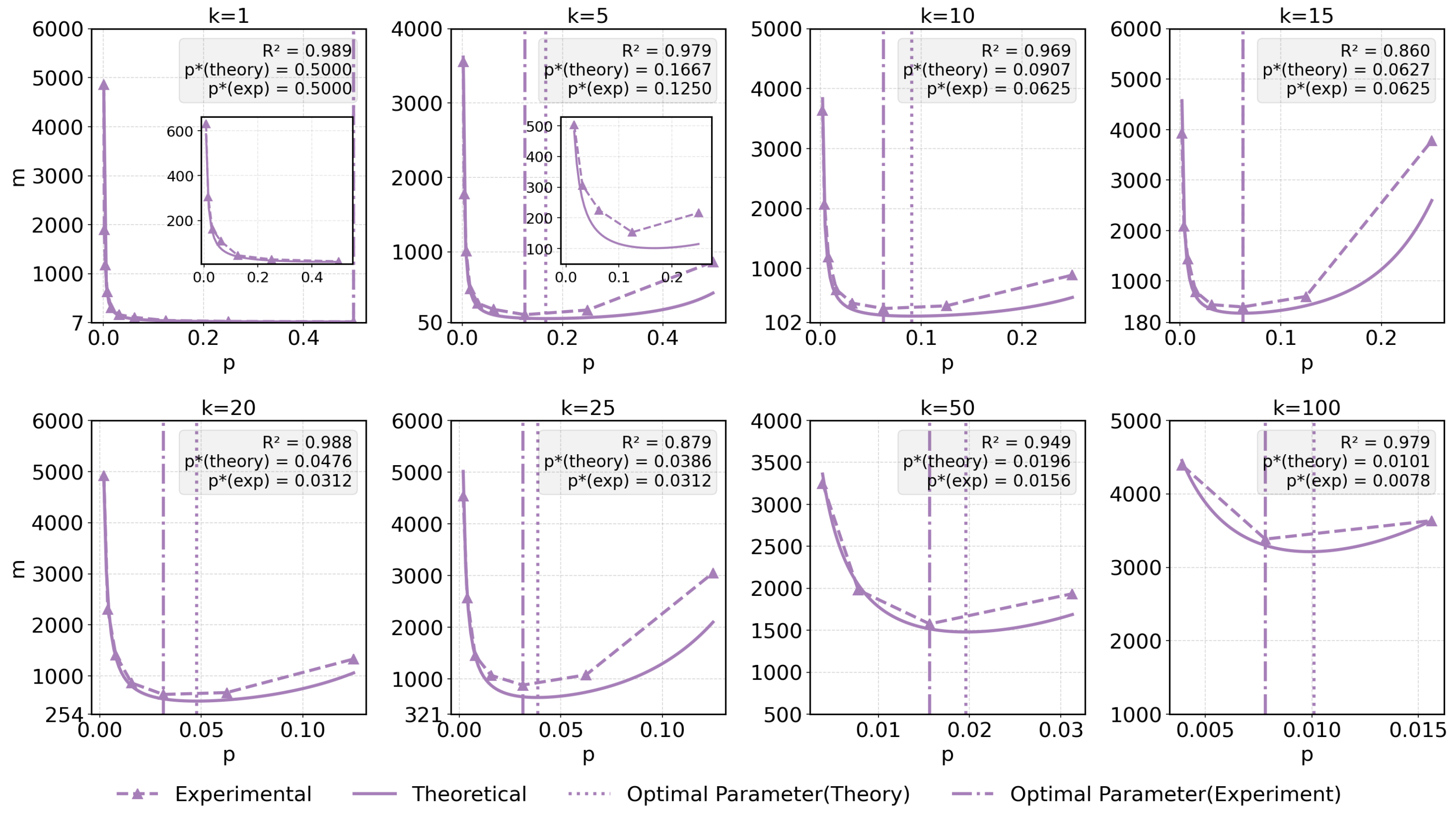}
\caption{\textbf{Experimental and theoretical results of \boldmath{$m^*(p)$} for unconstrained design.} $m^*(p)$ is the function of $p$ under the Top-k strategy when $n=5000$. Vertical lines mark the theoretically and experimentally optimal pooling parameters.}
\vspace{-0.2cm}
\label{fig:combinatorial_fit_n5000_bernoulli_k}
\end{figure*}

\begin{figure*}[!ht]
\centering
\includegraphics[width=0.88\textwidth]{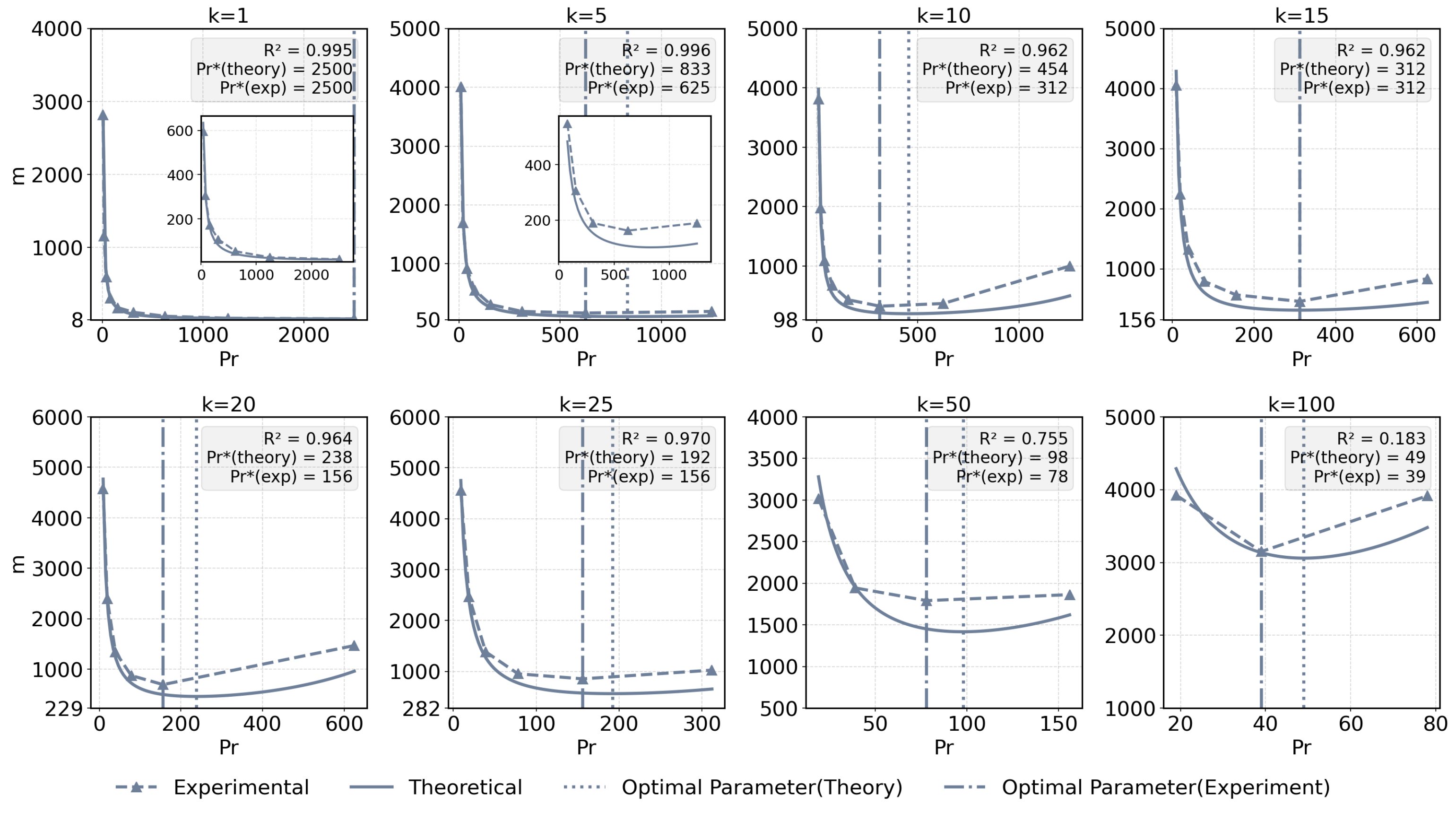}
\caption{\textbf{Experimental and theoretical results of \boldmath{$m^*(P_r)$} for dilution-constrained design.} $m^*\left(P_r\right)$ is the function of $P_r$ under the Top-k strategy when $n=5000$. Vertical lines mark the theoretically and experimentally optimal pooling parameters.}
\vspace{-0.6cm}
\label{fig:combinatorial_fit_n5000_row_k}
\end{figure*}

\begin{figure*}[!ht]
\centering
\includegraphics[width=0.88\textwidth]{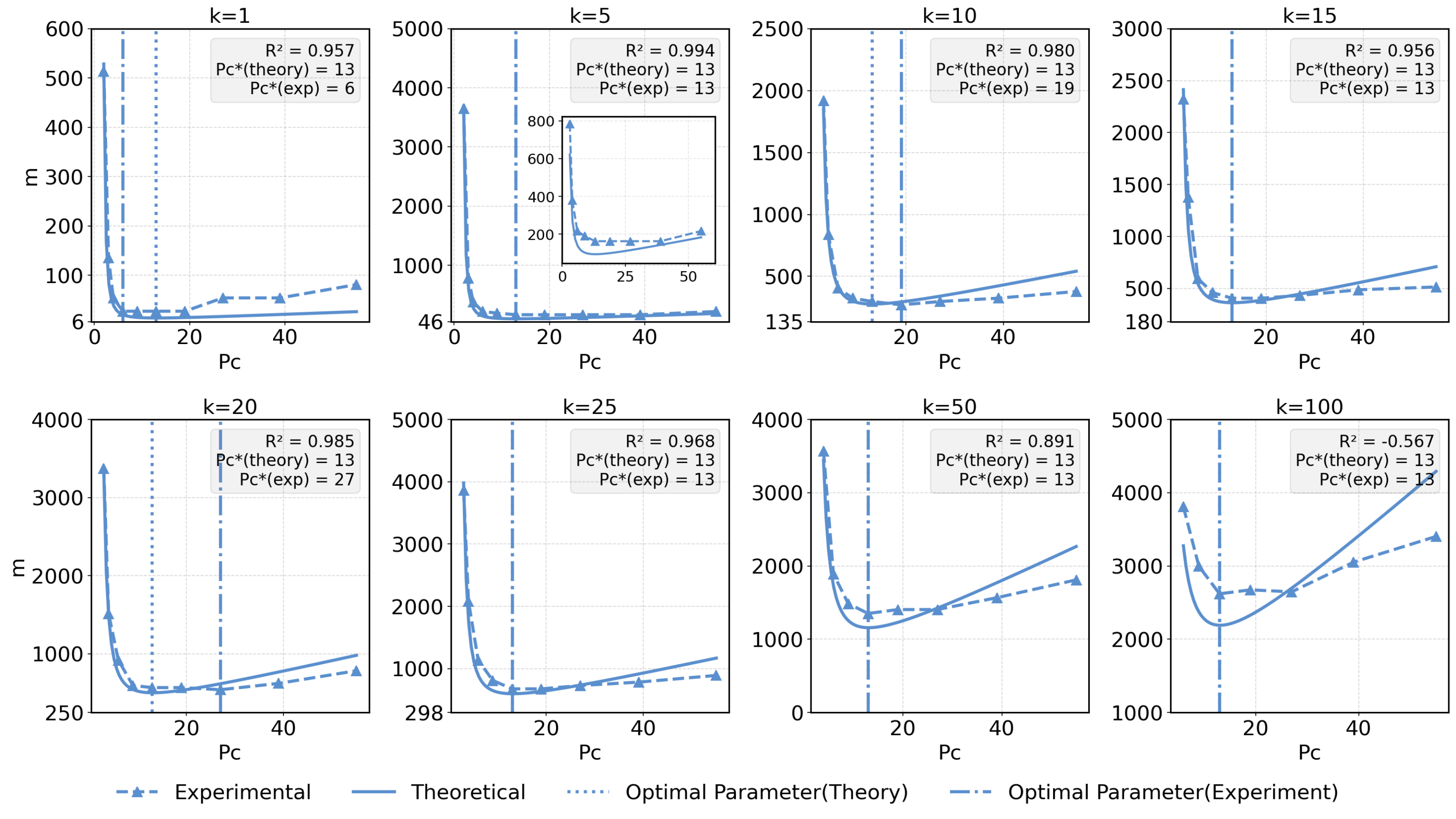}
\caption{\textbf{Experimental and theoretical results of \boldmath{$m^*\left(P_c\right)$} for sample-constrained design.} $m^*\left(P_c\right)$ is the function of $P_c$ under the Top-k strategy when $n=5000$. Vertical lines mark the theoretically and experimentally optimal pooling parameters.}
\vspace{-0.6cm}
\label{fig:combinatorial_fit_n5000_col_k}
\end{figure*}

\subsection{Impact of Pooling Parameters} To investigate how pooling parameters ($p$, $P_r$, and $P_c$) influence the recovery performance, we evaluate our method by varying the corresponding pooling parameter for each design (Fig.~\ref{fig:combinatorial_success_rate_n5000_kr0.002} and Figs.~\ref{fig:combinatorial_success_rate_n5000_kr0.01}-~\ref{fig:combinatorial_success_rate_n10000_kr0.01} in Supplementary Material). The results show that, across three pooling designs, recovery performance exhibits a non-monotonic trend with respect to the pooling parameters. In the unconstrained design, recovery performance is controlled by the pooling parameter $p$. In the very sparse pooling regime, the probability that a sample is assigned to at least one pool, given by $1-(1-p)^m$, is extremely low, resulting in many positives untested and poor recovery performance. As $p$ increases, more samples are included in pools, enhancing the elimination of negatives and recovery performance. However, beyond a critical point, further increasing $p$ causes a large number of pools to contain positives, drastically reducing the number of negative pools and thereby degrading performance. This trade-off implies the existence of an optimal pooling parameter $p^*$. A similar phenomenon is observed in the row-constrained design. By limiting the number of samples per pool, $P_r$ controls the sparsity of the pooling matrix, yielding the same trade-off as in the unconstrained design. The sample-constrained design is controlled by the pooling parameter $P_c$, which determines the number of times that each sample can be used. As $P_c$ increases, each sample is included in more pools, providing more information for reliable recovery and improving performance. However, when $P_c$ becomes excessively large, the probability that positive and negative samples co-occur in the same pools increases, making it difficult to distinguish positives from negatives. Consequently, the best performance is achieved at an intermediate value of $P_c$.

To verify the optimal pooling parameters given by Theorems~\ref{thm:unconstrained}-\ref{thm:sample_constrained}, we examine the minimum number of pooled tests $m^*$ required for a $99\%$ success rate as functions of pooling parameters under different pooling designs. For the unconstrained design, $m^*(p)$ exhibits a pronounced U-shape with its minimum attained near the theoretically optimal pooling parameter (Fig.~\ref{fig:combinatorial_fit_n5000_bernoulli_k} and Fig.~\ref{fig:combinatorial_fit_n10000_bernoulli_k} in the Supplementary Material). To further characterize this behavior, we fit the experimental data $m^*(p)$ using the expression in Conjecture~\ref{unconstrained_conjecture}. The fitted curves are in close agreement with the experimental results across different values of $n$ and $k$, indicating that the conjecture accurately captures the scaling behavior of $m^*(p)$. 

A similar U-shaped trend is observed in the dilution-constrained design (Fig.~\ref{fig:combinatorial_fit_n5000_row_k} and Fig.~\ref{fig:combinatorial_fit_n10000_row_k} in the Supplementary Material), where $m^*\left(P_r\right)$ attains its minimum near the theoretically optimal pooling parameter. We further fit the experimental data $m^*\left(P_r\right)$ using the expression given in Conjecture~\ref{dilution_constrained_conjecture}. The close agreement confirms that the conjecture correctly describes the relationship between $m^*\left(P_r\right)$ and the pooling parameter $P_r$. 

For the sample-constrained design, as shown in Fig.~\ref{fig:combinatorial_fit_n5000_col_k} and Fig.~\ref{fig:combinatorial_fit_n10000_col_k} in the Supplementary Material, the minimum of $m^*\left(P_c\right)$ is observed approximately at the theoretically optimal pooling parameter, and the experimental results closely follow the behavior predicted by Theorem~\ref{thm:sample_constrained} across different settings, supporting its applicability in practice.

\begin{table*}[!t]
\caption{Comparison between theoretical and experimental optimal parameters under the Top-k strategy ($n=5000$)}\label{tab:combinatorial_optimal pooling design_n5000_compare}
\centering
\begin{tabular}{ccccccccccccc}
\hline
& \multicolumn{4}{c}{\textbf{Unconstrained}}                                                             & \multicolumn{4}{c}{\textbf{Dilution-Constrained}}                                                            & \multicolumn{4}{c}{\textbf{Sample-Constrained}}                                                         \\ \cline{2-5} \cline{6-9} \cline{10-13} 
\cline{2-3} \cline{4-5} \cline{6-7} \cline{8-9} \cline{10-11} \cline{12-13}
& \multicolumn{2}{c}{\textbf{Experiment}}  & \multicolumn{2}{c}{\textbf{Theory}} & \multicolumn{2}{c}{\textbf{Experiment}}  & \multicolumn{2}{c}{\textbf{Theory}} & \multicolumn{2}{c}{\textbf{Experiment}}  & \multicolumn{2}{c}{\textbf{Theory}}  \\
\cline{2-3} \cline{4-5} \cline{6-7} \cline{8-9} \cline{10-11} \cline{12-13}

\boldmath{$k$} & \boldmath{$p^*$} & \boldmath{$m^*$} & \boldmath{$p^*$} & \boldmath{$m^*$} & \boldmath{$P^*_r$} & \boldmath{$m^*$} & \boldmath{$P^*_{r}$} & \boldmath{$m^*$} & \boldmath{$P_c^*$} & \boldmath{$m^*$} & \boldmath{$P_c^*$} & \boldmath{$m^*$} \\ \hline
1                & 0.5000                     & 18                      & 0.5000                     & 46                     & 2500                     & 18                     & 2500                    & 46                     &  6                    & 27                     &  13                    &36                     \\
5                 &  0.1250                    &  153                    & 0.1667                     &  190                    & 625                     & 162                     &  833                    &  190                    &  13
                    & 162                     &  13                    & 179                     \\
10                 &  0.0625
                    & 333                     & 0.0909
                     &  368                    &  312
                    & 324                     & 454
                     &  368                    &  19
                    & 270                     & 13
                     &  357                    \\
15                 & 0.0625
                     & 486                     & 0.0625
                     & 547                     & 312
                     & 459                     & 312
                     & 546                     & 13                     & 405                     & 13
                     & 535                     \\
20                 & 0.0313
                     & 639                     & 0.0476
                     & 725                     &  156
                    &  693                    & 238
                     & 723                     & 27
                     & 540                     &   13
                   & 714                     \\
25                 & 0.0313
                     & 882                     & 0.0385
                     &  903                    & 156
                     & 855                     & 192
                     &  901                     & 13
                     & 675                    & 13                     & 892                    \\
50                   &  0.0156
                    &  1575                    & 0.0196
                     &  1794                    & 78
                     &  1791                    & 98
                     &  1785                    & 13
                     & 1350                     & 13                     & 1783                     \\
100                   & 0.0078
                     & 3384                     & 0.0099
                     & 3573                     & 39
                     & 3150                     & 49
                     & 3537                     & 13
                     & 2619                     & 13
                     & 3562                    \\ \hline
\end{tabular}
\vspace{-0.6cm}
\end{table*}
Furthermore, we compare the theoretically predicted optima with the experimentally observed optima, which are summarized in Table~\ref{tab:combinatorial_optimal pooling design_n5000_compare} and Table~\ref{tab:combinatorial_optimal pooling design_n10000_compare} in the Supplementary Material. Theoretical results for the unconstrained and dilution-constrained designs are obtained from Conjectures~\ref{unconstrained_conjecture}-\ref{dilution_constrained_conjecture}, and those for the sample-constrained design are obtained from Theorem~\ref{thm:sample_constrained}. The experimental and theoretical results are of the same order of magnitude with only small deviations, indicating that the theoretical results provide practical guidance for selecting near-optimal pooling parameters and conservative estimates of the testing requirements. In practice, to mitigate dilution effects and enhance detection reliability, particularly in dense-pooling or low-prevalence scenarios where viral loads are highly diluted, it is recommended to adopt slightly more conservative pooling parameters.

\section{Conclusion}\label{sec:conclusion}
In this paper, we have presented the Logic Screening method (LoSc), a novel framework for large-scale screening in the very early stages of an outbreak. LoSc integrates two practical constraints into pooling designs, the dilution constraint on the pool size and the sample usage constraint, to ensure reliable identification under realistic laboratory conditions. Furthermore, LoSc introduces an efficient decoding algorithm with an over-selection strategy, enabling substantial reductions in the number of pooled tests. For all pooling designs, we established mathematical guarantees on the number of pooled tests required for accurate identification, and proposed refined conjectures validated by extensive simulations. Our results demonstrate that LoSc achieves testing efficiency comparable to the CS-based method, while significantly outperforming combinatorial algorithms. Moreover, the computational complexity of CS-based method is approximately $\frac{n}{k}$-times higher than that of LoSc, highlighting the superior efficiency and scalability of our method. Together, these results establish LoSc as a theoretically grounded, computationally efficient, and automatically deployable framework for large-scale screening. Future work will focus on developing robust algorithms capable of handling noise and testing errors, as well as exploring optimized pooling strategies.

\appendices
\section{Proof of Theorem~\ref{thm:unconstrained}}\label{pf:unconstrained}
Before the proof of Theorem~\ref{thm:unconstrained}, we first introduce the Hoeffding's inequality.
\begin{lemma}\label{hoeffding}
\rm{(Hoeffding's Inequality \cite{book})} Let $X_{1},\dots,X_{n}$ be independent random variables with $X_{i}\in [a_{i}, b_{i}]$ almost surely, $i=1,\ldots,n$. Then we have, for every $t>0$, 
\begin{align}
    \mathbb{P}\left(\sum\limits_{i = 1}^n {\left( {{X_i} - \mathbb{E}{\left[X_i\right]}} \right)}  \geq t\right)\leq\exp \left( { - {{2{t^2}}}/{{\sum\limits_{i = 1}^n {{{\left( {{a_i} - {b_i}} \right)}^2}} }}} \right).
\end{align}
Similar bounds apply to the lower deviation $\sum\limits_{i = 1}^n {\left( {{X_i} - \mathbb{E}\left[X_i\right]}\right)}  \leq -t$ as well as the two-sided deviation $\left| {\sum\limits_{i = 1}^n {\left( {{X_i} - \mathbb{E}{\left[X_i\right]}} \right)} } \right| \geq t$, with an additional factor of two in the latter case \cite{JMLR:v22:19-479}.
\end{lemma}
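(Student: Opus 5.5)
The plan is the classical Chernoff--Cramér argument combined with Hoeffding's lemma on the moment generating function of a bounded, centered random variable. Write $Z_i = X_i - \mathbb{E}[X_i]$, so each $Z_i$ has mean zero and lies in an interval of length $b_i - a_i$.

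\emph{Chernoff step.} For any $s>0$, Markov's inequality applied to $e^{s\sum_i Z_i}$ gives
\begin{align}
\mathbb{P}\Bigl(\sum_{i=1}^n Z_i \geq t\Bigr) \leq e^{-st}\,\mathbb{E}\Bigl[e^{s\sum_i Z_i}\Bigr] = e^{-st}\prod_{i=1}^n \mathbb{E}\bigl[e^{sZ_i}\bigr].
\end{align}
The factorization uses independence of the $X_i$. The whole problem then reduces to bounding each factor.

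\emph{Hoeffding's lemma.} The key step, and the one I expect to be the main obstacle, is to show that a mean-zero random variable $Z\in[a,b]$ satisfies
\begin{align}
\mathbb{E}[e^{sZ}]\leq \exp\bigl(s^2(b-a)^2/8\bigr).
\end{align}

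1. Use convexity of $x\mapsto e^{sx}$ on $[a,b]$:
\begin{align}
e^{sx}\leq \frac{b-x}{b-a}e^{sa}+\frac{x-a}{b-a}e^{sb}.
\end{align}
2. Take expectations and use $\mathbb{E}[Z]=0$. With $\theta=-a/(b-a)\in[0,1]$ and $h=s(b-a)$, this gives $\mathbb{E}[e^{sZ}]\leq e^{L(h)}$, where
\begin{align}
L(h)=-h\theta+\log\bigl(1-\theta+\theta e^{h}\bigr).
\end{align}
3. Check that $L(0)=0$ and $L'(0)=0$.
4. Show $L''(h)=u(1-u)\leq 1/4$, where $u=\theta e^h/(1-\theta+\theta e^h)\in[0,1]$.
5. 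Taylor's theorem with remainder then gives $L(h)\leq h^2/8$, which is the claimed bound.

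\emph{Optimizing $s$.} Substituting the lemma into the Chernoff bound yields
\begin{align}
\mathbb{P}\Bigl(\sum_i Z_i\geq t\Bigr)\leq \exp\Bigl(-st+\tfrac{s^2}{8}\sum_i(b_i-a_i)^2\Bigr).
\end{align}
Minimizing this quadratic in $s$ gives $s=4t/\sum_i(b_i-a_i)^2$. At that value the exponent equals $-2t^2/\sum_i(b_i-a_i)^2$, which is the stated upper-tail bound.

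\emph{Lower and two-sided deviations.} For the lower deviation, apply the same argument to $-X_i\in[-b_i,-a_i]$; these intervals have the same lengths, so the bound is identical. For the two-sided deviation, take a union bound over the upper and lower events, which produces the extra factor of two.
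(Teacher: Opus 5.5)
Your proposal is correct and is the standard argument behind the cited reference: a Chernoff bound, Hoeffding's lemma proved via convexity and a second-order Taylor bound $L''\le 1/4$, optimization at $s=4t/\sum_i(b_i-a_i)^2$, and a union bound for the two-sided case. The paper gives no proof of its own and simply cites the textbook result. The only detail to add is the degenerate case $a_i=b_i$, where $\theta$ is undefined but $Z_i=0$ almost surely, so that factor equals $1$ and the lemma holds trivially.
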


\begin{proof}[Proof of Theorem \ref{thm:unconstrained}.] 
The decoding algorithm first eliminates the samples in negative pools and obtains a candidate set $C\subseteq \{1,\ldots,n\}$, then selects samples according to the frequency of occurrence in positive pools. Let $B^{+}_{i}=\left| {\left\{ {l:{\tilde y_l} = 1\,and\,{\Phi _{l,i}} = 1} \right\}} \right|$ be the number of positive pools containing the positive sample $i\in S$, and $B^{-}_{j}=\left| {\left\{ {l:{\tilde y_l} = 1\,and\,{\Phi _{l,j}} = 1} \right\}} \right|$ be the corresponding count for a negative sample $j\notin S$. For each negative sample $j$, we define $E_j$ as the event of $j\in C$, which occurs with probability $\left(1-p(1-p)^k\right)^m$. 

In the unconstrained design, if the sample $i$ is positive and is assigned to the pool $l$, then $\tilde y_{l}=1$. Thus,
\begin{align}
    B^{+}_{i}=\sum\limits_{l = 1}^m {\mathbb{I}\left( {{\Phi _{l,i}} = 1} \right)} \sim Binomial\left( {m,p} \right),
\end{align}
with $\mathbb{E}\left[B^{+}_{i}\right]=mp$, where $\mathbb{I}(\cdot)$ is the indicator function. For convenience, set $\mu_{S}=\mathbb{E}\left[B^{+}_{i}\right]$. For a negative sample $j$ assigned to the pool $l$, $\tilde y_{l}=1$ if and only if the pool $l$ contains at least one positive sample, which occurs with probability $1-(1-p)^k$. Thus, for a negative sample $j\in C$, the conditional distribution of $B^{-}_{j}$ is
\begin{align}
    B^{-}_{j}\mid E_j \sim Binomial\left( {m,q} \right),
\end{align}
with $\mathbb{E}\left[B^{-}_{j}\mid E_j\right]=mq$, where $q = \frac{p\left(1-(1-p)^k\right)}{1-p(1-p)^k}$. Set $\mu_{S^c}=\mathbb{E}\left[B^{-}_{j}\mid E_j\right]$ for simplicity.

\paragraph{Top-k sample selecting strategy} There exist false negatives if $\mathop {\min }\limits_{i \in {S}} \, {{B}^{+}_i} \leq \mathop {\max }\limits_{j \in {C\backslash S}} \,{B^{-}_j}$. The probability of this event satisfies
\begin{align}\label{bernoulli_topk_ineq}
    \begin{gathered}
        \mathbb{P} \left( {\mathop {\min }\limits_{i \in {S}} \, {B^{+}_i} \leq \mathop {\max }\limits_{j \in {C\backslash S}} \,{B^{-}_j}} \right) \hfill \\
        \leq \mathbb{P} \left( {\mathop {\min }\limits_{i \in {S}} \,  {B^{+}_i} \leq T} \right) + \mathbb{P} \left( {\mathop {\max }\limits_{j \in {C\backslash S}} \,{B^{-}_j} \geq T} \right), \\
    \end{gathered}
\end{align}
for any threshold $T>0$. We choose $T={(\mu_S+\mu_{S^c})}/{2}$. Applying Lemma~\ref{hoeffding}, the first term on the right-hand side of inequality~(\ref{bernoulli_topk_ineq}) is bounded by
\begin{align}
    \mathbb{P}\left( {\mathop {\min }\limits_{i \in {S}} \, {B^{+}_i} \leq T} \right)
    \leq k \exp \left( { - {{2{t^2}}}/{m}} \right), 
\end{align}
where $t={(\mu_S-\mu_{S^c})}/{2}$. For the second term, we have
\begin{align}
    \begin{gathered}
  \mathbb{P}\left( {\mathop {\max }\limits_{j \in {C\backslash S}} \,{B^{-}_j} \geq T} \right)
    \leq \sum\limits_{j \notin S} {\mathbb{P}\left( {E_j} \right) \cdot \mathbb{P}\left( { B_j^{-} \geq T \mid E_j} \right)} \hfill \\
    \leq \left( {n - k} \right)\left(1-p(1-p)^k\right)^m \exp \left( { -{{2{t^2}}}/{m}} \right). \hfill \\ 
\end{gathered} 
\end{align}
Thus, the probability of at least one false negative satisfies
\begin{align}\label{bernoulli_false_negative_ineq_k}
    \begin{gathered}
        \mathbb{P}\left( {\text{at least one false negative under Top-k strategy}} \right) \hfill \\
        \leq k \exp \left( { -{{2{t^2}}}/{m}} \right) + \left( {n - k} \right)\exp \left( { -mp(1-p)^k -{{2{t^2}}}/{m}} \right). \hfill \\
    \end{gathered}
\end{align}

Let $\epsilon\in(0,1)$ denote the target failure probability. Under the Top-k strategy, we have $\mathbb{P}\{S \subseteq {{\hat S}_k}\}\geq 1-\epsilon$, provided that
\begin{align}\label{pf_eq:unconstrained_m}
    m \geq \frac{{2\left(1-p(1-p)^k\right)^2\log \left( {2n/\epsilon}\right)}}{{{p^2}{{\left( {1 - p} \right)}^{2k + 2}}}}.
\end{align}
Since $1\geq 1-p(1-p)^k \geq 1-p$, the bound is simplified to
\begin{align}
    m \geq \frac{{2\log \left( {2n/\epsilon}\right)}}{{{p^2}{{\left( {1 - p} \right)}^{2k + 2}}}}.
\end{align}

\paragraph{Top-2k sample selecting strategy} There exists false negatives if at least $k+1$ negative samples $j\in C\backslash S$ have $B^{-}_{j}\geq \mathop {\min }\limits_{i \in {S}} \, B^{+}_{i}$, which can be expressed as
\begin{align}\label{bernoulli_top2k_ineq}
    \begin{gathered}
        \mathbb{P}\left( {\mathop {\min }\limits_{i \in {S}} \, {B^{+}_i} \leq T\,and\,\sum\limits_{j \in {C\backslash S}} {\mathbb{I}\left( {{B^{-}_j} \geq {T}} \right)}  \geq k + 1} \right) \hfill \\
        \leq\mathbb{P}\left( {\mathop {\min }\limits_{i \in {S}} \, {B^{+}_i} \leq T} \right) + \mathbb{P}\left( {\sum\limits_{j \in {C\backslash S}} {\mathbb{I}\left( {{B^{-}_j} \geq {T}} \right)}  \geq k + 1} \right). \hfill \\
    \end{gathered}
\end{align}
Set $T={(\mu_S+\mu_{S^c})}/{2}$ as before. Applying Lemma~\ref{hoeffding}, the second term on the right-hand side of inequality~(\ref{bernoulli_top2k_ineq}) satisfies
\begin{align}
    \begin{gathered}
  \mathbb{P}\left( {\sum\limits_{j \in {C\backslash S}} {\mathbb{I}\left( {{B^{-}_j} \geq {T}} \right)}  \geq k + 1} \right)  \hfill \\
   \leq \binom{n-k}{k+1} \left(\mathbb{P}\left(E_j \,and\, {{B^{-}_j} \geq T} \right)\right)^{k+1}  \hfill \\
   \leq \binom{n-k}{k+1}  \exp \left( { -m(k+1)p(1-p)^k - {{2\left( {k + 1} \right){t^2}}}/{m}} \right), \hfill \\ 
\end{gathered} 
\end{align}
where $t={(\mu_S-\mu_{S^c})}/{2}$. Thus, we have
\begin{align}\label{bernoulli_false_negative_ineq_2k}
    \begin{gathered}
        \mathbb{P}\left({\text{at least one false negative under Top-2k strategy}}\right) \hfill \\ 
        \leq \binom{n-k}{k+1} \exp \left( { -m(k+1)p(1-p)^k - {{2\left( {k + 1} \right){t^2}}}/{m}} \right) \hfill \\
        \quad + k\exp \left( { - {{2{t^2}}}/{m}} \right) \hfill \\ 
    \end{gathered}
\end{align}

Under the Top-2k strategy, using the same number of pooled tests $m$ which  satisfies inequality~(\ref{pf_eq:unconstrained_m}), we have $\mathbb{P}\{S \subseteq {{\hat S}_{2k}}\}\geq 1-\delta$, where
\begin{align}
    \delta  = {\left( {{{e\epsilon }}/{{{(k + 1)}}}} \right)^{k + 1}} + {k\epsilon }/{{n}}.
\end{align}
Hence, the proof of Theorem~\ref{thm:unconstrained} is completed.
\end{proof}

\section{Proof of Theorem~\ref{thm:dilution_constrained}}\label{pf:dilution_constrained}
\begin{proof}[Proof of Theorem \ref{thm:dilution_constrained}.]
In the dilution-constrained design, each row of the pooling matrix $\Phi$ has $P_{r}$ ones uniformly distributed. The probability that a given sample $q\in\{1,\ldots,n\}$ is assigned to a pool $l$ is ${P_{r}}/{n}$, i.e., 
\begin{align}
\mathbb{P}\left(\Phi_{l,q}=1\right)={P_r}/{n}\,,\,q=1,\ldots,n.
\end{align}
A pool is negative if and only if it contains no positive samples, which happens with probability
\begin{align}
    q_1 = {\binom{n-k}{P_r}}/{\binom{n}{P_r}}.
\end{align}
Thus, the probability that a pool is positive is $1-q_1$. For each negative sample $j$, we define $E_j$ as the event of $j\in C$, which occurs with probability $\left(1 - q_1 \cdot \frac{P_r}{n-k}\right)^m$.

Following the analysis in the proof of Theorem~\ref{thm:unconstrained}, for a positive sample $i\in S$, the distribution of $B^{+}_{i}$ is
\begin{align}
    {B^{+}_i} \sim Binomial\left( {m,{P_r}/{n}} \right),
\end{align}
with $\mu_S=\mathbb{E}\left[B^{+}_{i}\right]$. For a negative sample $j\in C$, the conditional distribution of $B^{-}_{j}$ is 
\begin{align}
    {B^{-}_j}\mid E_j \sim Binomial\left( {m,\frac{\frac{P_r}{n}\cdot \left(1-q_2\right)}{1-q_1 \cdot \frac{P_r}{n-k}}} \right),
\end{align}
with $\mu_{S^{c}}=\mathbb{E}\left[B^{-}_{j}\right]$, 
where $q_2=\frac{\binom{n-k-1}{P_r-1}}{\binom{n-1}{P_r-1}}$, satisfying $q_2=\frac{n}{n-k}q_1$.

The following proof is analogous to that of Theorem~\ref{thm:unconstrained}. Set $T={(\mu_S+\mu_{S^{c}})}/{2}$ and $t = {({{\mu _S} - \mu_{S^{c}}})}/{2}$. Substituting them into inequalities~(\ref{bernoulli_false_negative_ineq_k}) and~(\ref{bernoulli_false_negative_ineq_2k}), we can obtain the results stated in Theorem~\ref{thm:dilution_constrained}.
\end{proof}

\section{Proof of Theorem~\ref{thm:sample_constrained}}\label{pf:sample_constrained}
\begin{proof}[Proof of Theorem \ref{thm:sample_constrained}.]
In the sample-constrained design, for each column $j$ of $\Phi$, we make $P_c$ independent selections from the set $\{1,\dots,m\}$ and set $\Phi_{i,j}=1$ if index $i$ is selected at least once. Thus, each sample is assigned to at most $P_c$ pools. 

\paragraph{Top-k sample selecting strategy}
There exists false negatives if at least one negative sample $j\in C\backslash S$ has $B^{-}_j=P_c$. The probability of this event satisfies
\begin{align}
    \mathbb{P}\left( {\sum\limits_{j \in {C\backslash S}} {\mathbb{I}\left( {{B^{-}_j} = P_c} \right)}  \geq 1} \right) 
    \leq \sum\limits_{j \notin S} {\mathbb{P}\left( {{E_j}\,and\,B_j^ - = {P_c}} \right)}, 
\end{align}
where $E_j$ is the event that the negative sample $j$ is in the candidate set $C$.

For a negative sample $j\in C\backslash S$, $B^{-}_{j}=P_c$ if and only if the sample is assigned to $P_c$ pools and all of these pools test positive. Thus,
\begin{align}
    \begin{gathered}
        \mathbb{P}\left(E_j \,and\,B^{-}_{j}=P_c\right) \hfill \\
        =\mathbb{P}\left(\left|\mathrm{supp} \left(\Phi_{:,j}\right)\right|=P_c\right) \hfill \\
        \quad \cdot \mathbb{P}\left(\mathrm{supp}\left(\Phi_{:,j}\right)  \subseteq \mathrm{supp}\left( \tilde{y} \right) \mid\left|\mathrm{supp} \left(\Phi_{:,j}\right)\right|=P_c\right), \hfill \\
    \end{gathered} 
\end{align}
where $\Phi_{:,j}$ is the column $j$ of $\Phi$, $\tilde{y}$ is the binary measurement vector, and $\mathrm{supp}\left(\cdot\right)$ represents the support set of a vector, defined as the set of indices corresponding to its nonzero entries. Since each column is generated by making $P_c$ independent selections from the set $\{1,\ldots,m\}$, the probability that all selected indices are distinct is
\begin{align}\label{eq:col_eq_1}
    \mathbb{P}\left(\left|\mathrm{supp} \left(\Phi_{:,j}\right)\right|=P_c\right)=\frac{{m\left( {m - 1} \right) \cdots \left( {m - P_c + 1} \right)}}{{{m^{P_c}}}}.
\end{align}
Conditioned on a specific support set $\mathrm{supp}\left(\tilde{y}\right)$ and the event $\left|\mathrm{supp} \left(\Phi_{:,j}\right)\right|=P_c$, the probability that all selected pools are positive is given by
\begin{align}
    \begin{gathered}
        \mathbb{P}\left(\mathrm{supp}\left(\Phi_{:,j}\right)  \subseteq \mathrm{supp}\left( \tilde{y} \right) \mid \left|\mathrm{supp} \left(\Phi_{:,j}\right)\right| =P_c, \mathrm{supp}\left(\tilde{y}\right)\right) \hfill \\
        = {\binom{\left|\mathrm{supp}\left(\tilde{y}\right)\right|}{P_c}}/{\binom{m}{P_c}}. \hfill \\
    \end{gathered}
\end{align}
Since there are at most $kP_c$ positive pools, we can have
\begin{align}\label{ineq:col_ineq_2}
    \begin{gathered}
    \mathbb{P}\left(\mathrm{supp}\left(\Phi_{:,j}\right)  \subseteq \mathrm{supp}\left( \tilde{y} \right) \mid \left|\mathrm{supp} \left(\Phi_{:,j}\right)\right|=P_c\right) \hfill \\
    \leq {\binom{kP_c}{P_c}}/{\binom{m}{P_c}}. \hfill \\
    \end{gathered}
\end{align}
Combining~(\ref{eq:col_eq_1}) and~(\ref{ineq:col_ineq_2}), yields
\begin{align}
        \mathbb{P}\left(E_j \,and\,B^{-}_j=P_c\right) \leq {\left( {{{kP_c}}/{m}} \right)^{P_c}}.
\end{align}
Thus, the probability of at least one false negative satisfies
\begin{align}
    \begin{gathered}
        \mathbb{P}\left( {\text{at least one false negative under Top-k strategy}} \right) \hfill \\
        \leq \left( {n - k} \right) \left( {{{kP_c}}/{m}} \right)^{P_c}. \hfill \\
    \end{gathered}
\end{align}

Let $\epsilon\in\left(0,1\right)$ denote the target failure probability. Under the Top-k strategy, we have $\mathbb{P}\{S \subseteq {{\hat S}_k}\}\geq 1-\epsilon$, provided that
\begin{align}\label{pf_eq:sample-constrained_m}
    m \geq kP_c{\left( {{({n - k})}/{\epsilon}} \right)^{\frac{1}{P_c}}}.
\end{align}

\paragraph{Top-2k sample selecting strategy} 
There exists false negatives if at least $k+1$ negative samples in the candidate set have $B^{-}_j=P_c$. The probability of this event satisfies
\begin{align}
    \begin{gathered}
        \mathbb{P}\left( {\text{at least one false negative under Top-2k strategy}} \right) \hfill \\
        \leq \binom{n-k}{k+1}\left(\mathbb{P}\left(E_j\,and\,B^{-}_j=P_c\right)\right)^{k+1} \hfill \\
        \leq {\left( {{{e\left( {n - k} \right)}}/{{(k + 1)}}} \right)^{k + 1}}{\left( {{{kP_c}}/{m}} \right)^{\left( {k + 1} \right)P_c}}. \hfill \\
    \end{gathered}
\end{align}
Under the Top-2k strategy, using the same number of pooled tests $m$ satisfying inequality~(\ref{pf_eq:sample-constrained_m}), we have $\mathbb{P}\{S \subseteq {{\hat S}_{2k}}\}\geq 1-\delta$, where
\begin{align}
    \delta  = {\left( {{{e\epsilon }}/{({k + 1})}} \right)^{k + 1}}.
\end{align}
This completes the proof of Theorem~\ref{thm:sample_constrained}.
\end{proof}

\section*{References}
\vspace{-0.1\baselineskip}
\begingroup
\makeatletter
\def\section*#1{}%
\bibliographystyle{IEEEtran}
\bibliography{IEEEabrv,reference}
\makeatother
\endgroup




\end{document}


\onecolumn
\renewcommand{\thefigure}{S\arabic{figure}}
\renewcommand{\thetable}{S\arabic{table}}
\renewcommand{\theequation}{S\arabic{equation}}
\renewcommand{\thepage}{S\arabic{page}}
\setcounter{figure}{0}
\setcounter{table}{0}
\setcounter{equation}{0}
\setcounter{page}{1} 

\section*{Supplementary Material}
\subsection{Supplementary Tables}
\begin{table}[H]
\caption{Experimentally optimal parameters for pooling designs in LoSc ($n=10000$)}\label{tab:combinatorial_optimal pooling design_n10000}
\centering
\begin{tabular}{ccccccccccccc}
\hline
& \multicolumn{4}{c}{\textbf{Unconstrained}}                                                             & \multicolumn{4}{c}{\textbf{Dilution-Constrained}}                                                            & \multicolumn{4}{c}{\textbf{Sample-Constrained}}                                                         \\ \cline{2-5} \cline{6-9} \cline{10-13} 
                        & \multicolumn{2}{c}{\textbf{Top-k}}                   & \multicolumn{2}{c}{\textbf{Top-2k}}                  & \multicolumn{2}{c}{\textbf{Top-k}}                   & \multicolumn{2}{c}{\textbf{Top-2k}}                  & \multicolumn{2}{c}{\textbf{Top-k}}                   & \multicolumn{2}{c}{\textbf{Top-2k}}                  \\ \cline{2-3} \cline{4-5} \cline{6-7} \cline{8-9} \cline{10-11} \cline{12-13} 
                        \boldmath{$k$}& \boldmath{$p^*$} & \boldmath{$m^*$} & \boldmath{$p^*$} & \boldmath{$m^*$} & \boldmath{$P^*_r$} & \boldmath{$m^*$} & \boldmath{$P^*_r$} & \boldmath{$m^*$} & \boldmath{$P^*_c$} & \boldmath{$m^*$} & \boldmath{$P^*_c$} & \boldmath{$m^*$} \\ \hline
2                & 0.2500                     & \textbf{60}                      & 0.2500                     & \underline{50}                     & 2500                     & 70                     & 2500                    & 60                     &  9                    & 60                     &  6                    & 60                     \\
10                 &  0.0625                    & 350                     & 0.0625                     & 220                    & 312                     & 440                     &  625                    &  220                    &  13
                    & \textbf{300}                     &  9                    & \underline{180}                     \\
20                 &  0.0313
                    & 710                     & 0.0313
                     &  400                    &  312
                    & 700                     & 312
                     &  390                    &  19
                    & \textbf{600}                     & 9
                     &  \underline{300}                    \\
30                 & 0.0156
                     & 1160                     & 0.0313
                     & 520                     & 312
                     & 980                     & 312
                     & 510                     & 13                     & \textbf{840}                     & 6
                     & \underline{420}                     \\
40                 & 0.0156
                     & 1470                     & 0.0156
                     & 690                     &  156
                    &  1380                    & 156
                     & 680                    & 19
                     & \textbf{1080}                     &   6
                   & \underline{540}                     \\
50                 & 0.0156
                     & 1890                     & 0.0156
                     & 800                    & 156
                     & 1770                     & 156
                     & 800                    & 27
                     & \textbf{1440}                    & 6
                     & \underline{660}                    \\
100                   &  0.0078
                    &  3540                    & 0.0078
                     &  1350                    & 78
                     &  3840                    & 78
                     &  1350                    & 19
                     & \textbf{2820}                     & 6                     & \underline{1080}                     \\
200                   & 0.0039                    & 7520                     & 0.0039                     & 2290                     & 39                     & 7210                     & 39                     & 2230                     & 19                     & \textbf{5400}                     & 6                     & \underline{1740}                     \\
300                   & -                    & -                     & 0.0039                     & 3070                     & -                     & -                     & 39                     & 3060                     & 19                     & \textbf{8160}                     & 4                     & \underline{2340}                     \\
400                   & -                     & -                     & 0.0020                     & 3710                     & -                     & -                     & 19                    & 3620                     & -                     & -                     & 4                     & \underline{2820}                     \\
500                  & -                     & -                     & 0.0020                     & 4180                     & -                     & -                     & 19                     & 4110                     & -                     & -                     & 4                     & \underline{3240}                     \\
1000                  & -                     & -                     & 0.0010                     & 6190                     & -                     & -                     & 9                     & 5900                     & -                     & -                     & 3                     & \underline{4800}                     \\ \hline
\end{tabular}

\vspace{3pt}
\begin{minipage}{\textwidth}
\small\justifying
\noindent Each design is evaluated at its experimentally optimal pooling parameter ($p^*$, $P^*_r$ or $P^*_c$). Here, $m^*$ denotes the minimum number of pooled tests required for a $99\%$ success rate. Optimal values of $m^*$ under the Top-k strategy are marked in bold, and optimal values of $m^*$ under the Top-2k strategy are underlined. Some results are not reported as the number of pooled tests required exceeds the population size.
\end{minipage}
\end{table}

\begin{table}[!ht]
\caption{Comparison between theoretical and experimental optimal parameters under the Top-k strategy ($n=10000$)}\label{tab:combinatorial_optimal pooling design_n10000_compare}
\centering
\begin{tabular}{ccccccccccccc}
\hline
& \multicolumn{4}{c}{\textbf{Unconstrained}}                                                             & \multicolumn{4}{c}{\textbf{Dilution-Constrained}}                                                            & \multicolumn{4}{c}{\textbf{Sample-Constrained}}                                                         \\ \cline{2-5} \cline{6-9} \cline{10-13} 

&\multicolumn{2}{c}{\textbf{Experiment}} & \multicolumn{2}{c}{\textbf{Theory}} &\multicolumn{2}{c}{\textbf{Experiment}} & \multicolumn{2}{c}{\textbf{Theory}} &\multicolumn{2}{c}{\textbf{Experiment}} & \multicolumn{2}{c}{\textbf{Theory}} \\ \cline{2-3} \cline{4-5} \cline{6-7} \cline{8-9} \cline{10-11} \cline{12-13}

                        \boldmath{$k$}& \boldmath{$p^*$} & \boldmath{$m^*$} & \boldmath{$p^*$} & \boldmath{$m^*$} & \boldmath{$P_r^*$} & \boldmath{$m^*$} & \boldmath{$P_r^*$} & \boldmath{$m^*$} & \boldmath{$P_c^*$} & \boldmath{$m^*$} & \boldmath{$P_c^*$} & \boldmath{$m^*$} \\ \hline
2                & 0.2500                     & 60                      & 0.3333                     & 87                      & 2500                     & 70                     & 3333                    & 87                     &  9                    & 60                     &  14                    & 76                     \\
10                 &  0.0625                    & 350                     & 0.0909                     & 388                    & 312                     & 440                     &  909                    &  387                    &  13
                    & 300                     &  14                    & 376                     \\
20                 &  0.0313
                    & 710                     & 0.0476
                     &  763                    &  312
                    & 700                     & 476
                     &  762                    &  19
                    & 600                     & 14
                     & 752                    \\
30                 & 0.0156
                     & 1160                     & 0.0323
                     & 1139                     & 312
                     & 980                     & 322
                     & 1137                     & 13                     & 840                     & 14
                     & 1127                     \\
40                 & 0.0156
                     & 1470                     & 0.0244
                     & 1514                     & 156
                    &  1380                    & 243
                     & 1511                    & 19
                     & 1080                     &   14                   & 1502                     \\
50                 & 0.0156
                     & 1890                     & 0.0196
                     & 1889                    & 156
                     & 1770                     & 196
                     & 1884                    & 27
                     & 1440                    & 14
                     & 1878                    \\
100                   &  0.0078
                    &  3540                    & 0.0099
                     &  3765                    & 78
                     &  3840                    & 99
                     &  3746                    & 19
                     & 2820                     & 14                     & 3754 \\ \hline
\end{tabular}

\vspace{3pt}
\begin{minipage}{\textwidth}
\small\justifying
\noindent Theoretical results for the unconstrained and dilution-constrained designs are obtained from the corresponding conjectures, whereas those for the sample-constrained design are obtained from the theorem.
\end{minipage}
\end{table}

\newpage
\twocolumn
\subsection{Supplementary Figures}
\begin{figure}[H]
    \centering
    \captionsetup{position=below,justification=centering,singlelinecheck=false}
    \subcaptionbox{Unconstrained design}[1\linewidth]{\includegraphics[width=1\linewidth]{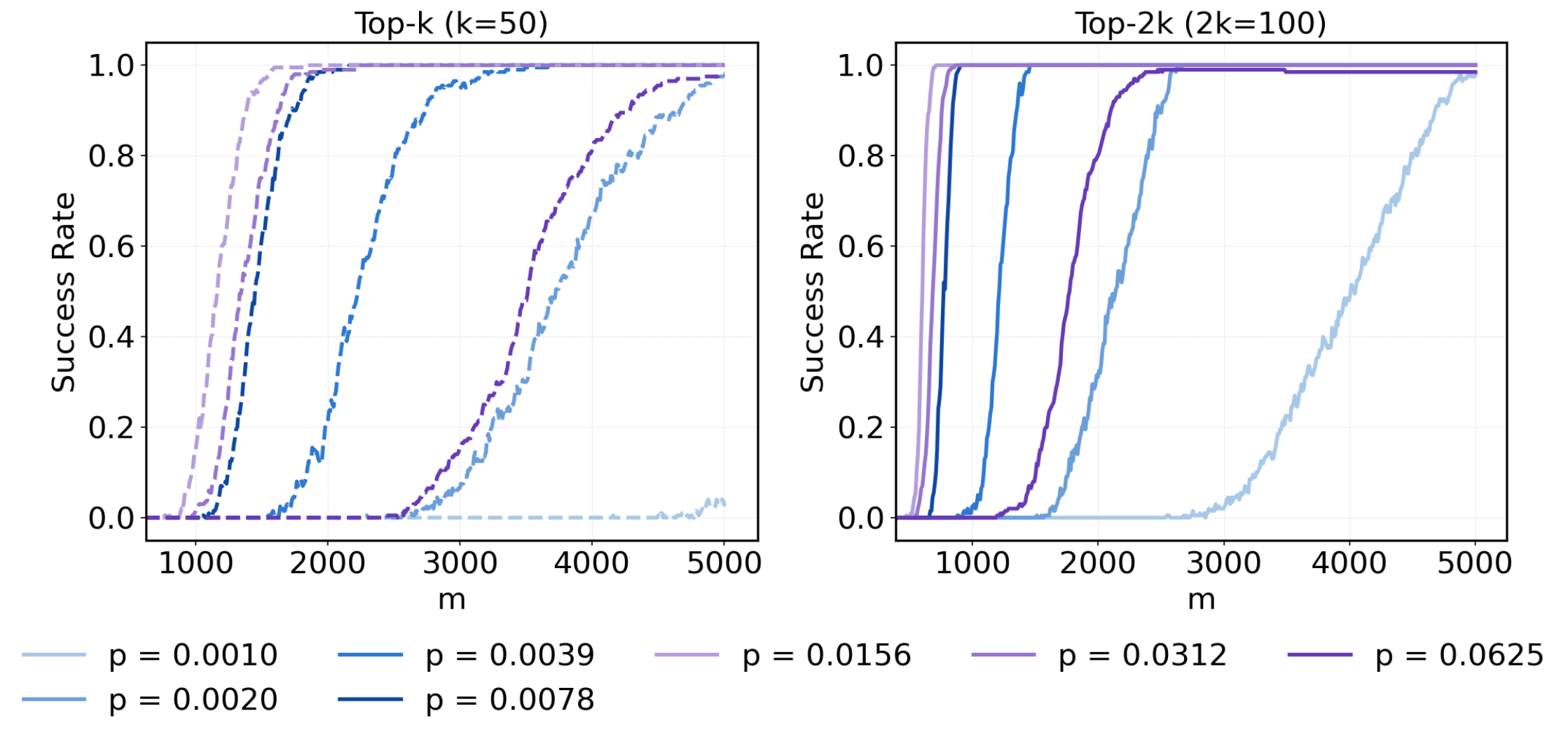}}
    \quad
    \subcaptionbox{Dilution-constrained design}[1\linewidth]{\includegraphics[width=1\linewidth]{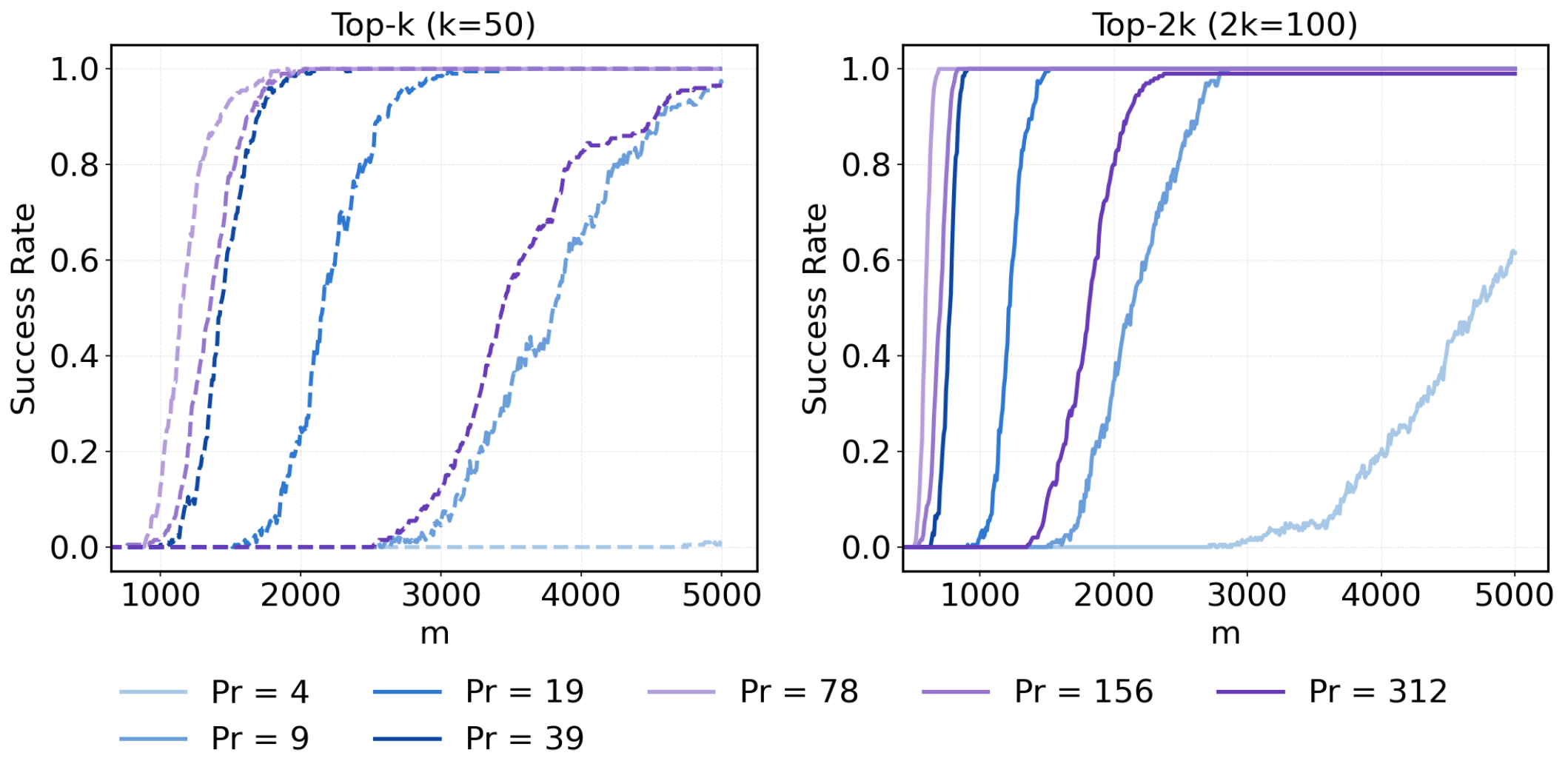}}
    \quad
    \subcaptionbox{Sample-constrained design}[1\linewidth]{\includegraphics[width=1\linewidth]{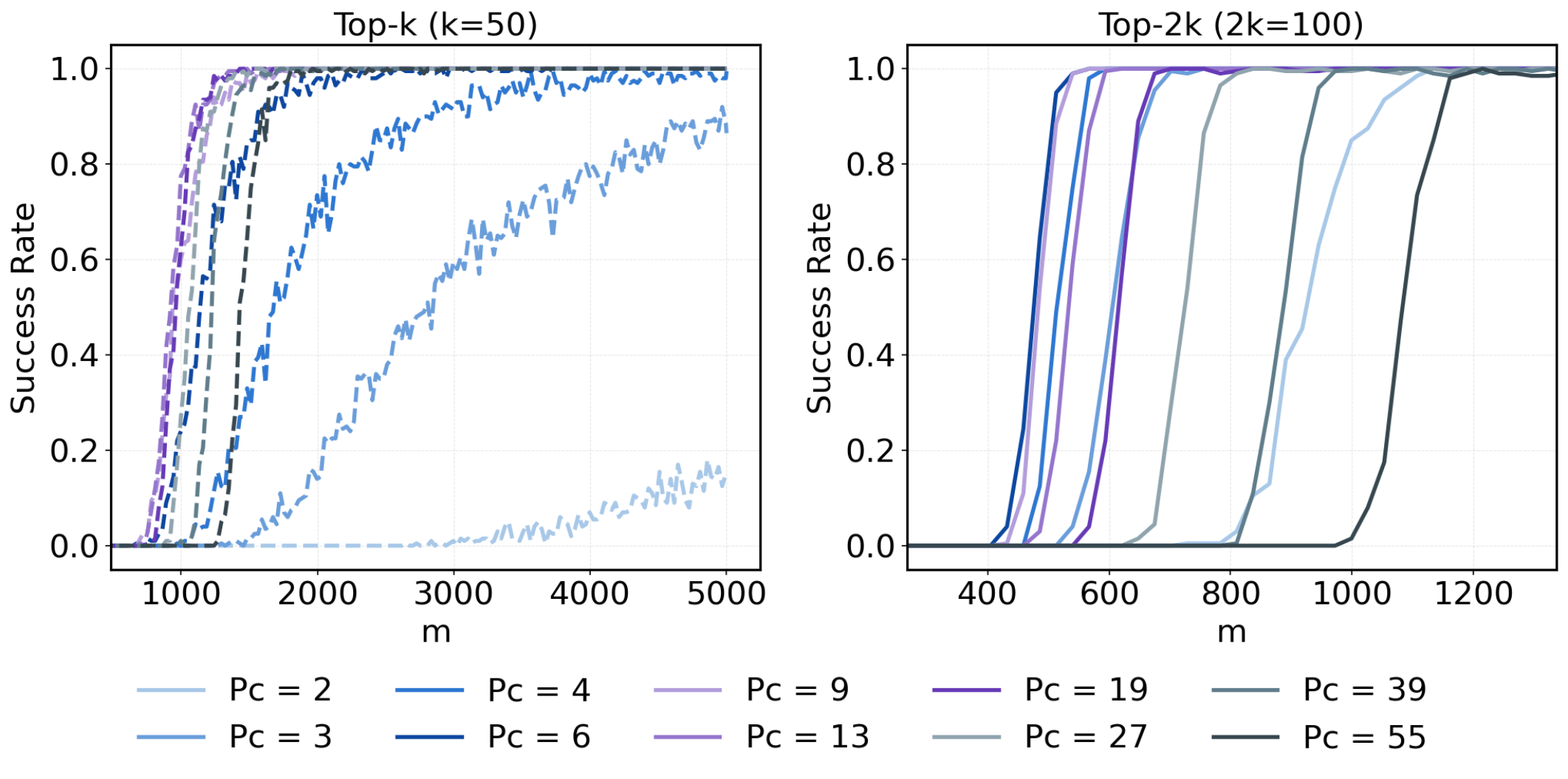}}
    \captionsetup{position=below,justification=justified,singlelinecheck=false}
    \caption{\textbf{Impact of pooling parameters on recovery performance.} Recovery success rate as a function of $m$ under different pooling parameters when $n=5000$ and $k=50$.}
    \label{fig:combinatorial_success_rate_n5000_kr0.01}
\end{figure}

\newpage
\subsection*{}
\begin{figure}[H]
    \centering
    \captionsetup{position=below,justification=centering,singlelinecheck=false}
    \subcaptionbox{Unconstrained design}[1\linewidth]{\includegraphics[width=1\linewidth]{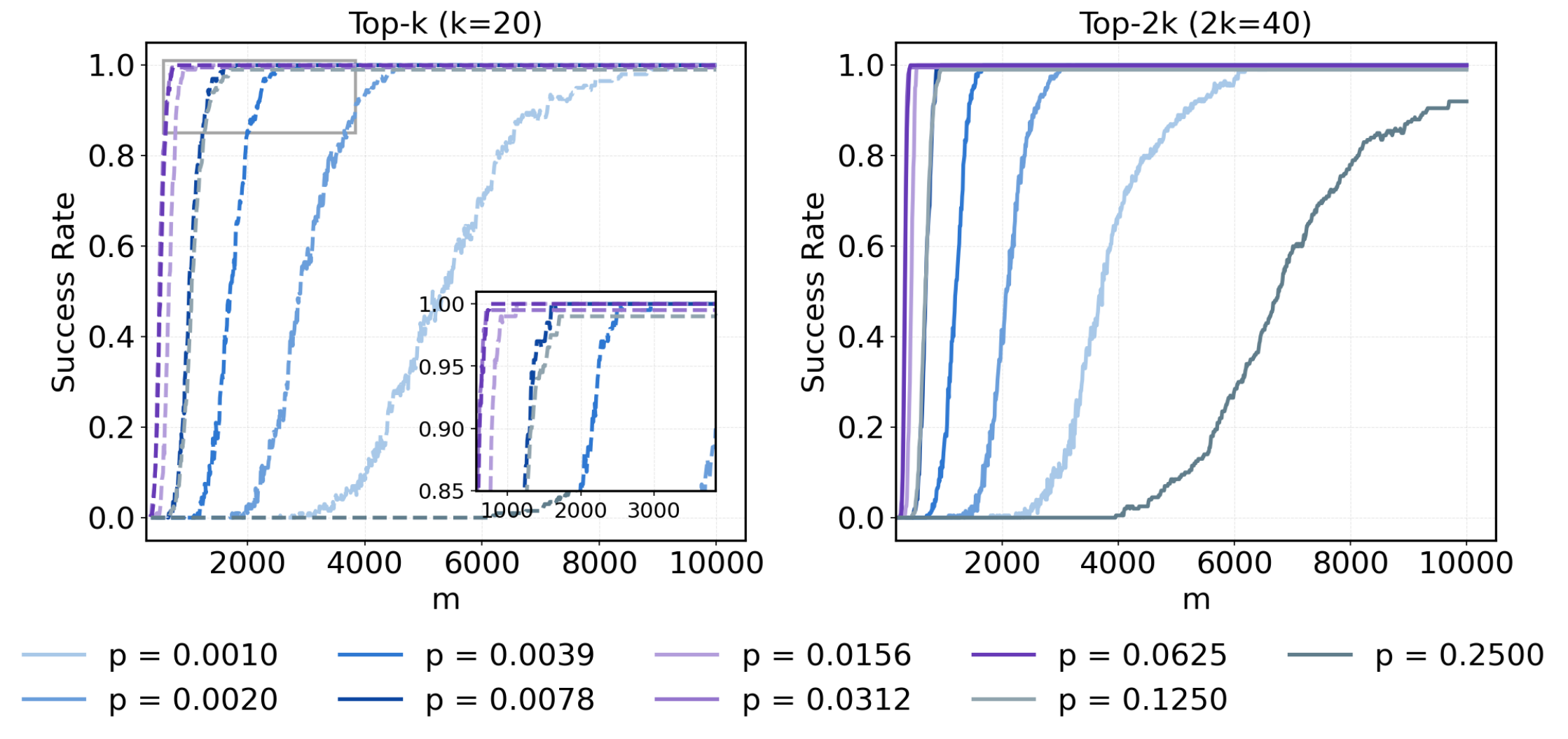}}
    \quad
    \subcaptionbox{Dilution-constrained design}[1\linewidth]{\includegraphics[width=1\linewidth]{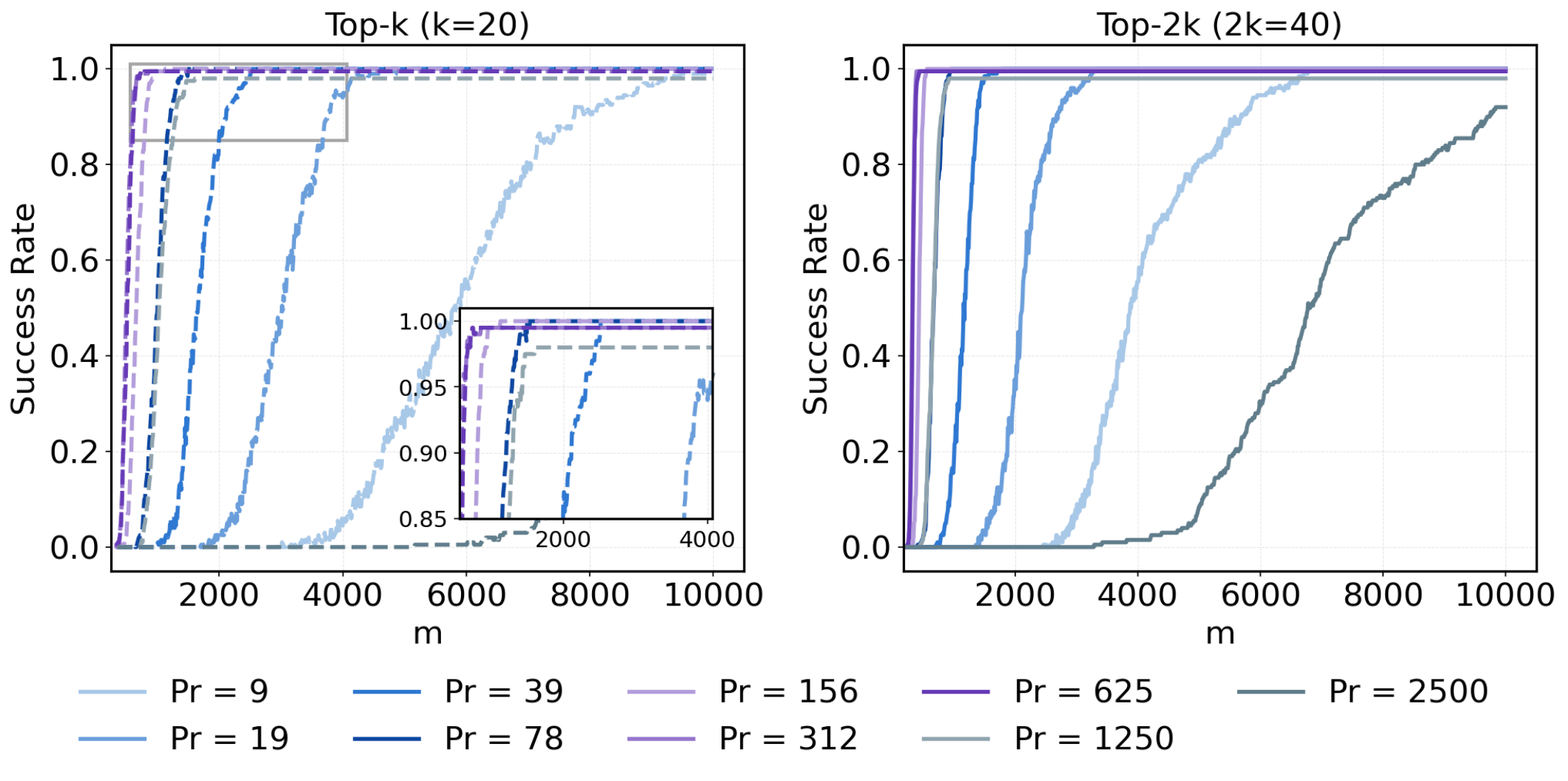}}
    \quad
    \subcaptionbox{Sample-constrained design}[1\linewidth]{\includegraphics[width=1\linewidth]{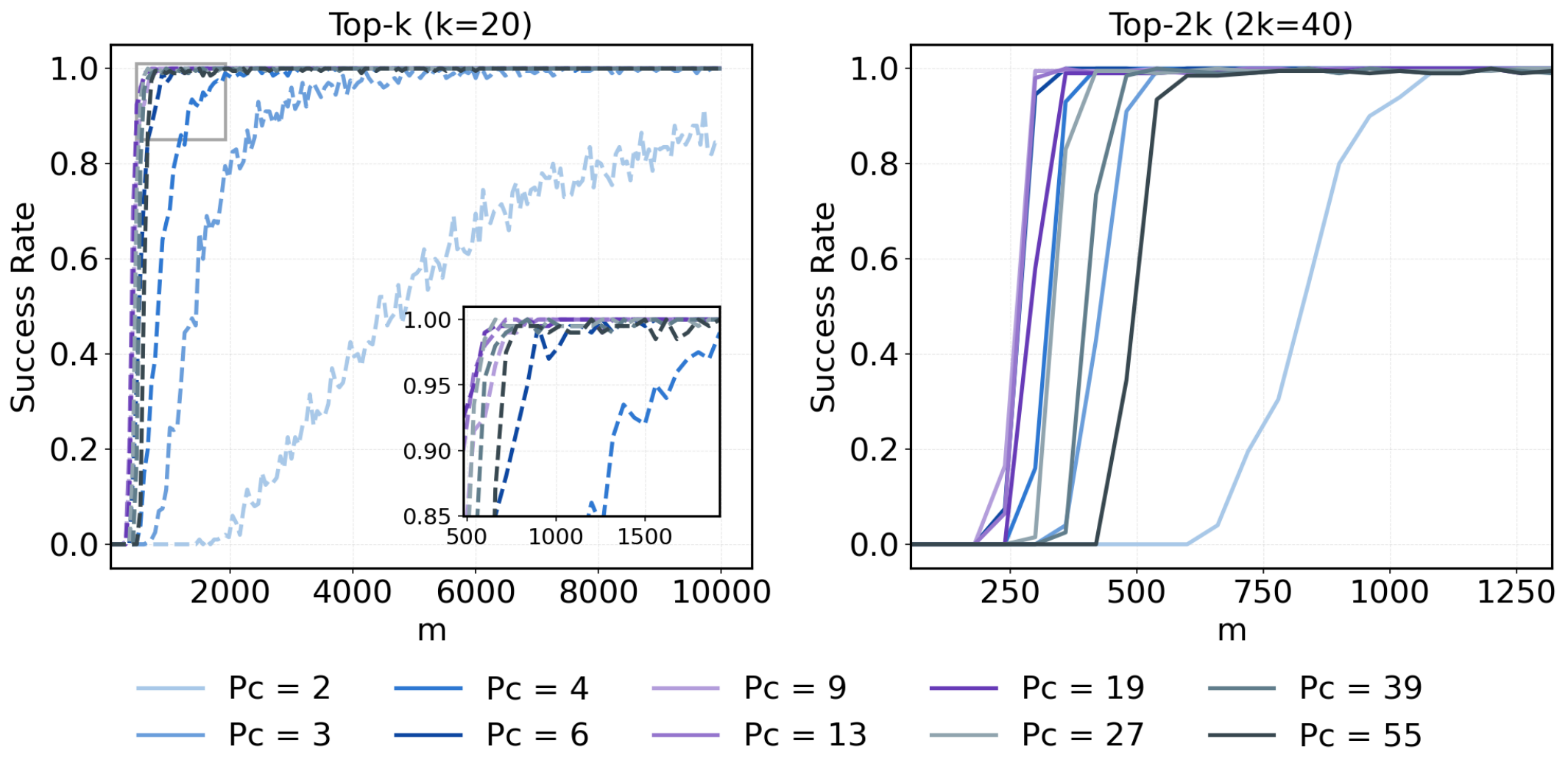}}
    \captionsetup{position=below,justification=justified,singlelinecheck=false}
    \caption{\textbf{Impact of pooling parameters on recovery performance.} Recovery success rate as a function of $m$ under different pooling parameters when $n=10000$ and $k=20$.}
    \label{fig:combinatorial_success_rate_n10000_kr0.002}
\end{figure}

\newpage
\begin{figure}[H]
    \centering
    \captionsetup{position=below,justification=centering,singlelinecheck=false}
    \subcaptionbox{Unconstrained design}[1\linewidth]{\includegraphics[width=1\linewidth]{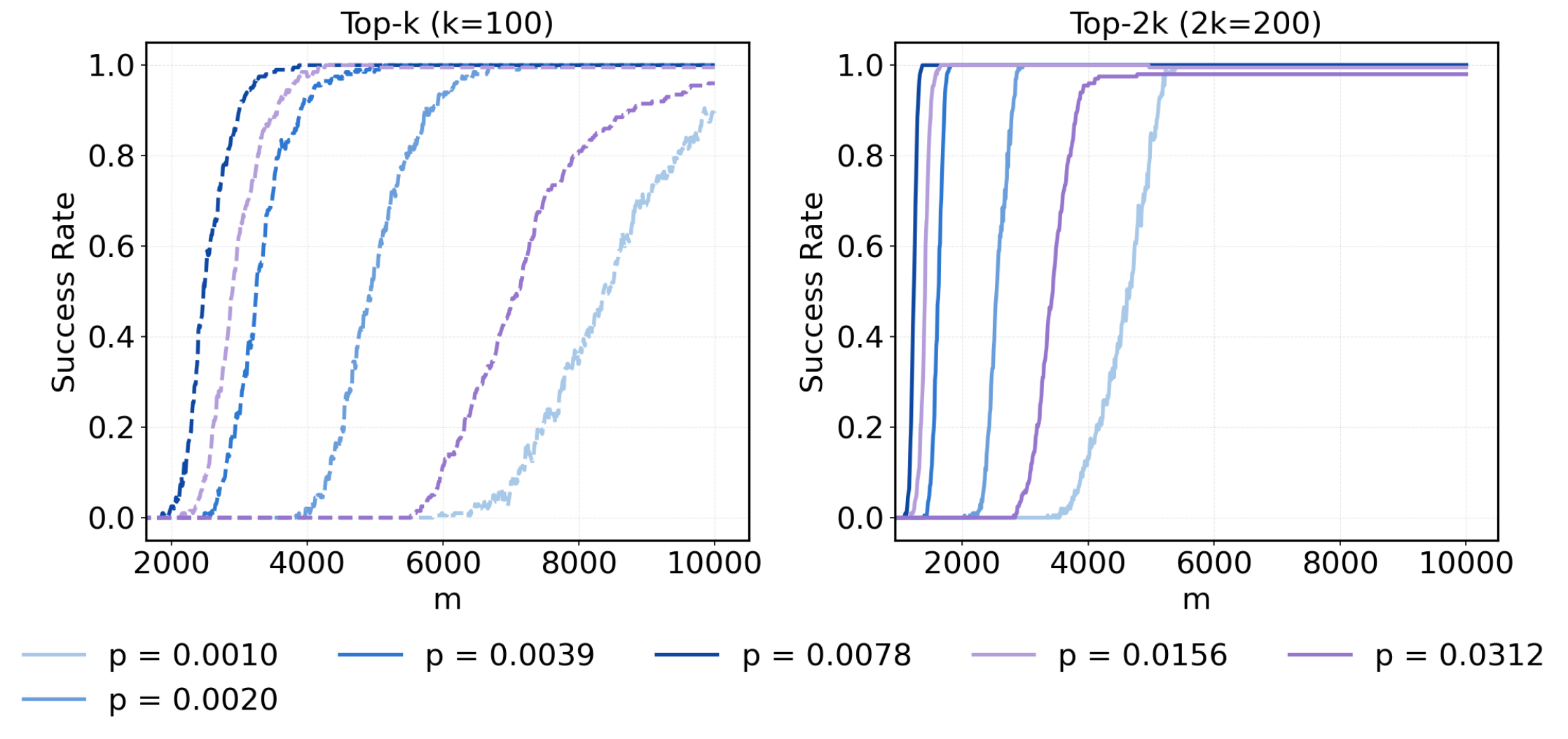}}
    \quad
    \subcaptionbox{Dilution-constrained design}[1\linewidth]{\includegraphics[width=1\linewidth]{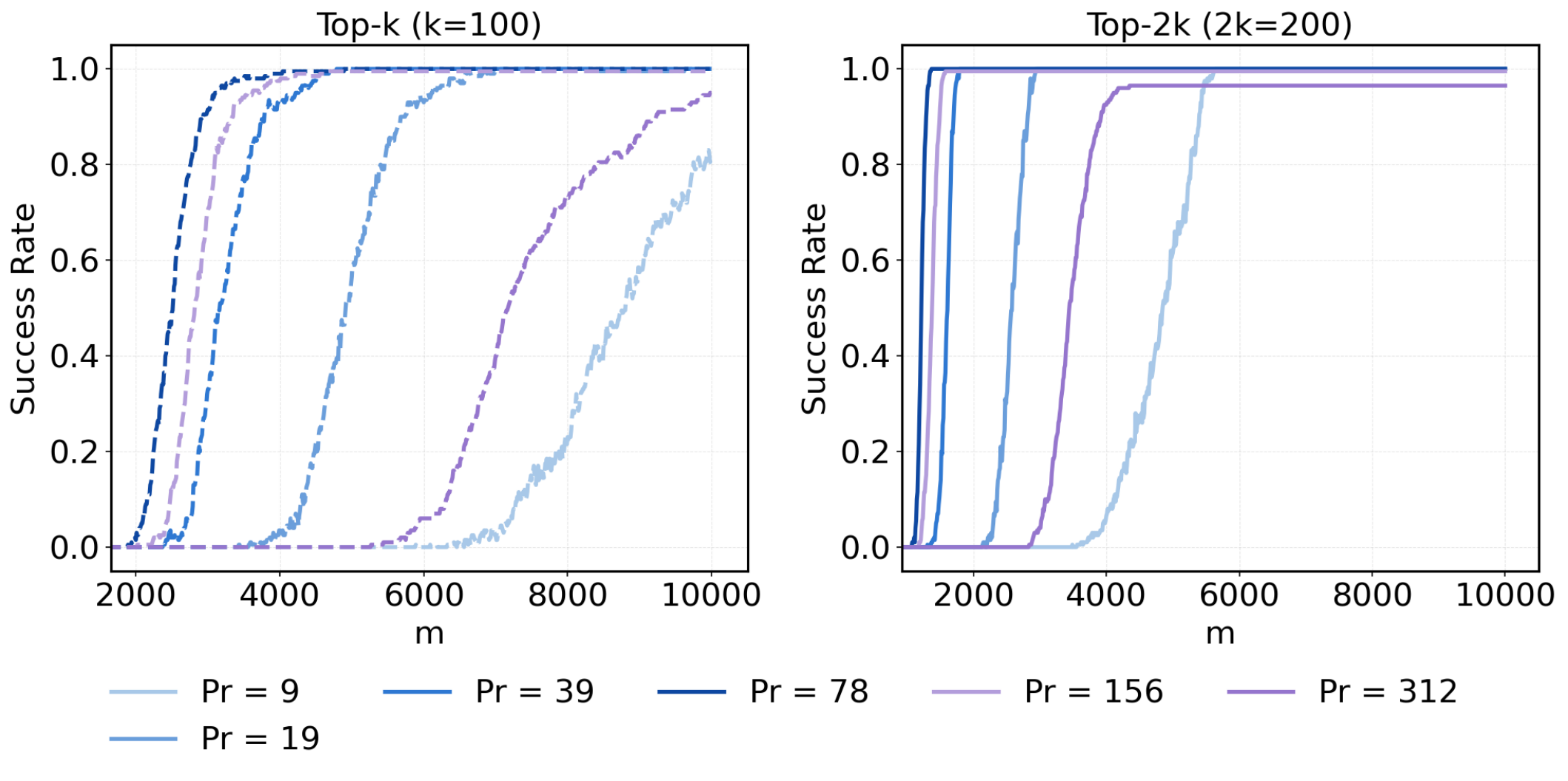}}
    \quad
    \subcaptionbox{Sample-constrained design}[1\linewidth]{\includegraphics[width=1\linewidth]{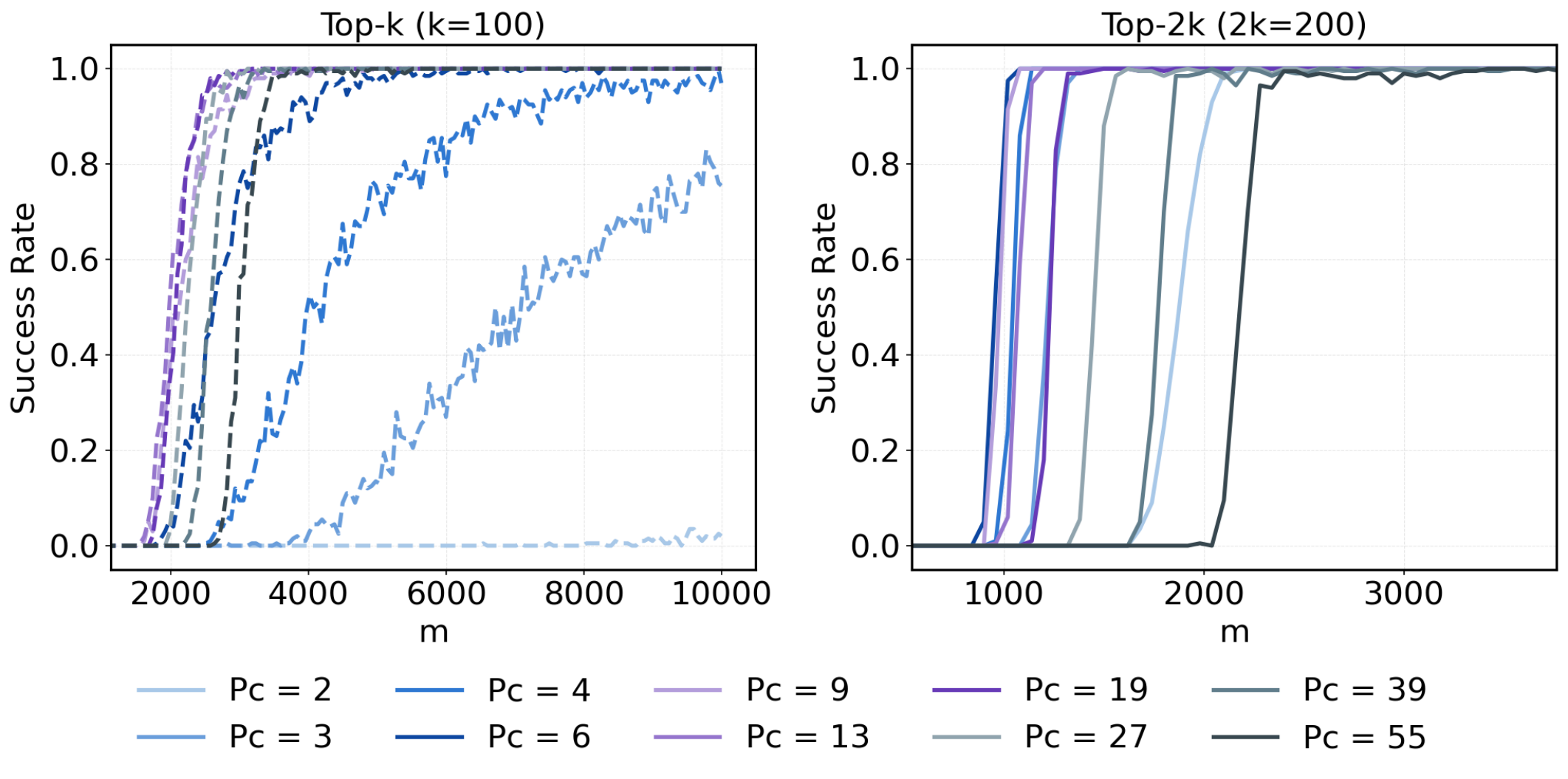}}
    \captionsetup{position=below,justification=justified,singlelinecheck=false}
    \caption{\textbf{Impact of pooling parameters on recovery performance.} Recovery success rate as a function of $m$ under different pooling parameters when $n=10000$ and $k=100$.}
    \label{fig:combinatorial_success_rate_n10000_kr0.01}
\end{figure}

\onecolumn
\begin{figure}[!t]
\centering
\includegraphics[width=0.88\textwidth]{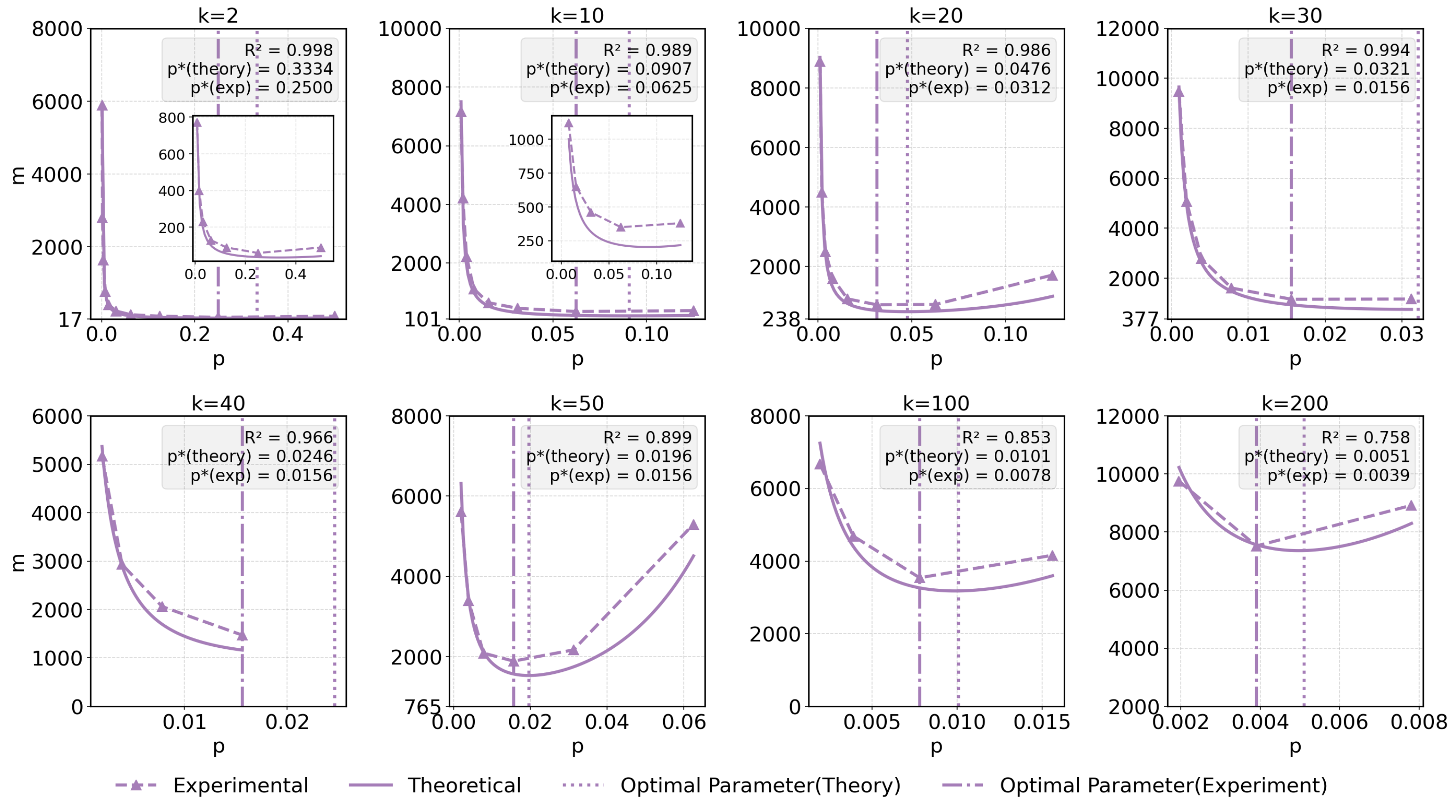}
\caption{\textbf{Experimental and theoretical results of \boldmath{$m^*(p)$} for unconstrained design.} $m^*(p)$ is the function of $p$ under the Top-k strategy when $n=10000$. Vertical lines mark the theoretically and experimentally optimal pooling parameters.}
\label{fig:combinatorial_fit_n10000_bernoulli_k}
\end{figure}

\begin{figure}[!t]
\centering
\includegraphics[width=0.88\textwidth]{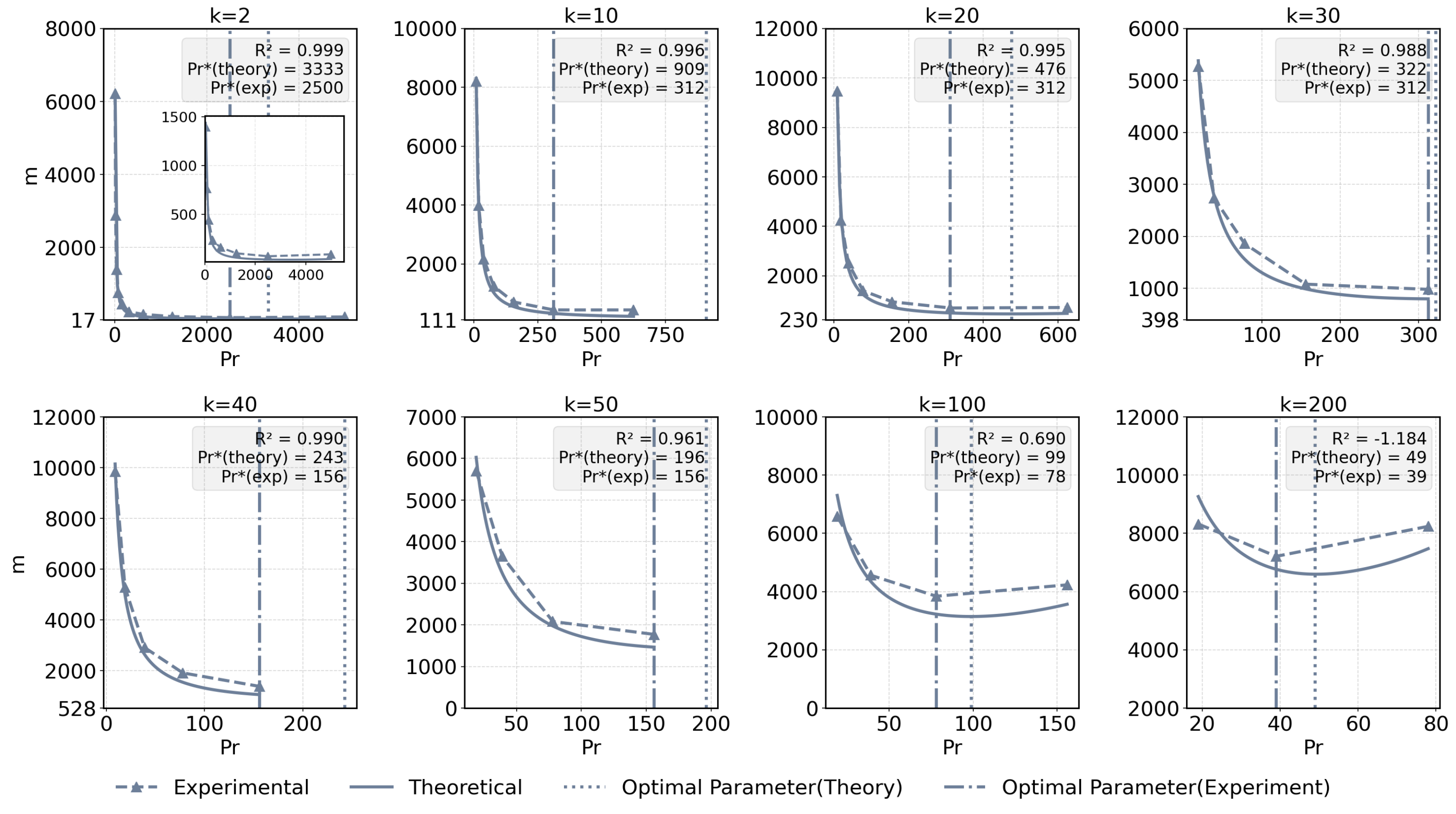}
\caption{\textbf{Experimental and theoretical results of \boldmath{$m^*\left(P_r\right)$} for dilution-constrained design.} $m^*\left(P_r\right)$ is the function of $P_r$ under the Top-k strategy when $n=10000$. Vertical lines mark the theoretically and experimentally optimal pooling parameters.}
\label{fig:combinatorial_fit_n10000_row_k}
\end{figure}

\begin{figure}[H]
\centering
\includegraphics[width=0.88\textwidth]{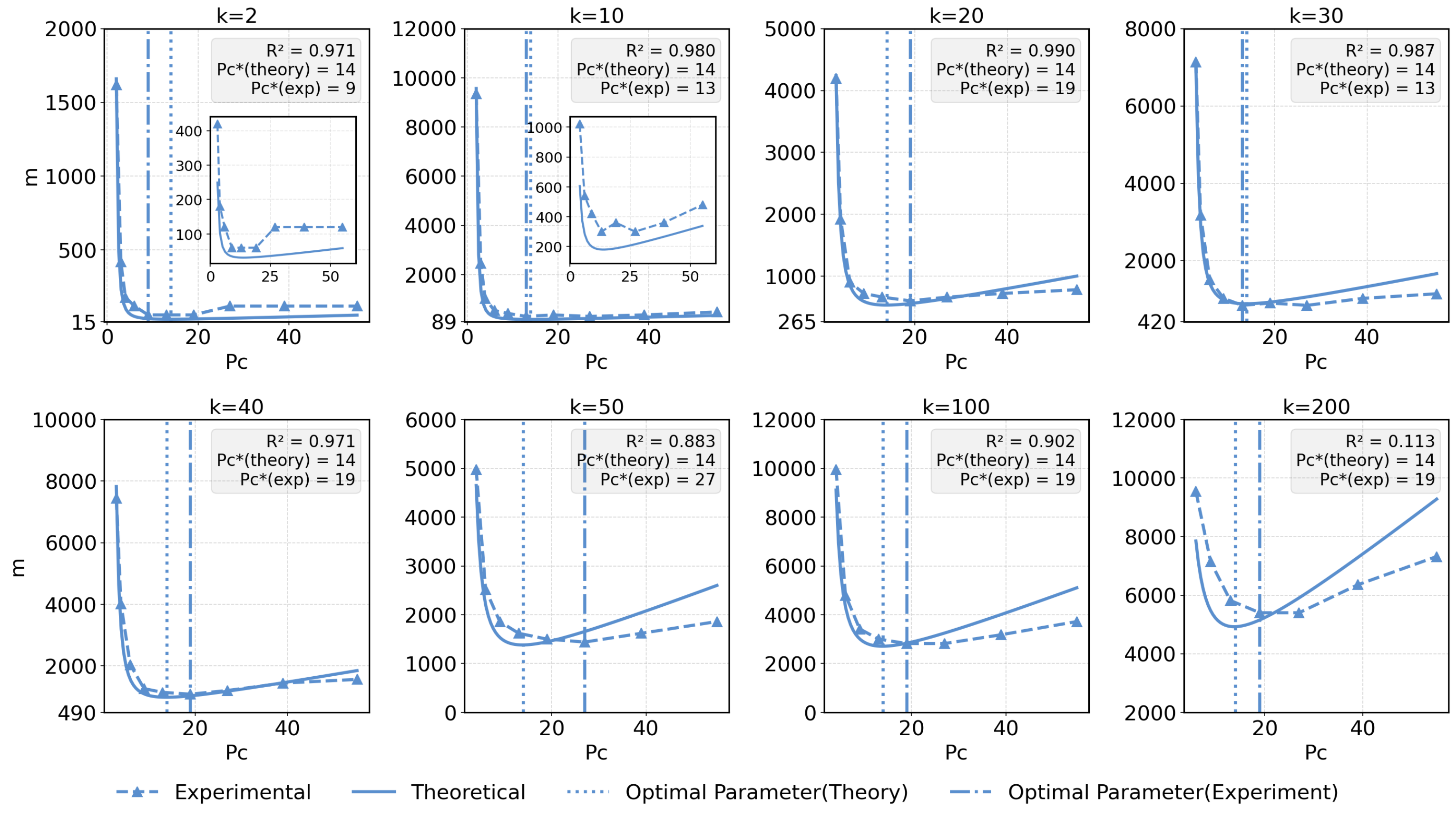}
\caption{\textbf{Experimental and theoretical results of \boldmath{$m^*\left(P_c\right)$} for sample-constrained design.} $m^*\left(P_c\right)$ is the function of $P_c$ under the Top-k strategy when $n=10000$. Vertical lines mark the theoretically and experimentally optimal pooling parameter.}
\label{fig:combinatorial_fit_n10000_col_k}
\end{figure}